
\documentclass{article}

\usepackage{microtype}
\usepackage{graphicx}
\usepackage{subfigure}
\usepackage{booktabs} 

\usepackage{hyperref}



\usepackage[accepted]{icml2021}

\icmltitlerunning{Improved Corruption Robust Algorithms for Episodic Reinforcement Learning}


\usepackage{graphicx}
\usepackage{mathtools}
\usepackage{footnote}
\usepackage{float}
\usepackage{xspace}
\usepackage{multirow}
\usepackage{wrapfig}
\usepackage{framed}
\usepackage{amsfonts}

\newcommand{\calA}{{\mathcal{A}}}

\newcommand{\calS}{{\mathcal{S}}}

\newcommand{\calI}{{\mathcal{I}}}

\newcommand{\calD}{{\mathcal{D}}}
\newcommand{\calE}{{\mathcal{E}}}

\newcommand{\calM}{{\mathcal{M}}}

\newcommand{\Reg}{\text{\rm Reg}}
\newcommand{\one}{\boldsymbol{1}}

\newcommand{\field}[1]{\mathbb{#1}}
\newcommand{\E}{\field{E}}

\newcommand{\argmax}{\text{argmax}}
\newcommand{\prob}{\text{Prob}}

\newcommand{\order}{\ensuremath{\mathcal{O}}}

\usepackage{amsthm}
\newtheorem{theorem}{Theorem}
\newtheorem{lemma}{Lemma}
\newtheorem{cor}{Corollary}
\newtheorem{remark}{Remark}

\newtheorem{definition}{Definition}

\newcommand{\yf}[1]{{\color{red}\bf[YF: #1]}}
\newcommand{\kevin}[1]{{\color{magenta}\bf[KJ: #1]}}

\begin{document}

\twocolumn[
\icmltitle{Improved Corruption Robust Algorithms for Episodic Reinforcement Learning}




\begin{icmlauthorlist}
\icmlauthor{Yifang Chen}{equal}
\icmlauthor{Simon S. Du}{equal}
\icmlauthor{Kevin Jamieson}{equal}
\end{icmlauthorlist}

\icmlaffiliation{equal}{Paul G. Allen School of Computer Science \& Engineering,
University of Washington}

\icmlcorrespondingauthor{Yifang Chen}{yifangc@cs.washington.edu}

\icmlkeywords{Machine Learning, ICML}

\vskip 0.3in
]



\printAffiliationsAndNotice{}  
\begin{abstract}
    We study episodic reinforcement learning under unknown adversarial corruptions in both the rewards and the transition probabilities of the underlying system.
    We propose new algorithms which, compared to the existing results in \cite{lykouris2020corruption}, achieve strictly better regret bounds in terms of total corruptions for the tabular setting. To be specific, firstly, our regret bounds depend on more precise numerical values of total rewards corruptions and transition corruptions, instead of only on the total number of corrupted episodes. Secondly, our regret bounds are the first of their kind in the reinforcement learning setting to have the number of corruptions show up additively with respect to $\min\{ \sqrt{T},\text{PolicyGapComplexity} \}$ rather than multiplicatively. 
    %
    Our results follow from a general algorithmic framework that combines corruption-robust policy elimination meta-algorithms, and plug-in reward-free exploration sub-algorithms. 
    Replacing the meta-algorithm or sub-algorithm may extend the framework to address other corrupted settings with potentially more structure.
\end{abstract}
\section{Introduction}
\label{sec: intro}

%
Reinforcement learning (RL) studies the problem where the learner interacts with the environment sequentially and aims to improve its decision making strategy over time. This problem has usually been modelled as a Markov Decision Process (MDP) with unknown transition functions. In this paper, we consider the classical episodic reinforcement learning with a finite horizon. 
Within each episode, the learner sequentially observes the current state at each stage, plays an action, receives the reward according to the current state-action pair, and then transitions to the next stage according to the underlying transition function. 

The majority of the literature in learning in MDPs studies stationary environments, where the underlying unknown transition function and reward function are fixed. The rewards and the next states are independently and identically distributed given the current state and the learner’s chosen action. Under this setting, the goal is to minimize the regret, which is the difference between the learner’s cumulative rewards and the total rewards of the optimal policy~\citep{brafman2002r,azar2017minimax,jin2018q,ok2018exploration,zanette2019tighter,DBLP:conf/nips/SimchowitzJ19,zhang2020reinforcement}. However, these techniques are vulnerable to corruptions on the rewards or the transitions. Recently, \citet{rosenberg2019online,jin2020learning,chung-wei2020bias} gave provably efficient algorithms for the setting of adversarial rewards and fixed unknown transitions. 
Although their algorithms are robust to corruptions on rewards, they heavily rely on the assumption that the transitions are not corrupted.

The most relevant work is by \citet{lykouris2020corruption} who gave the first set of results on episodic reinforcement learning that achieve robustness to corruptions on both the rewards and the transition functions. Their regret is defined as the difference between the learner’s accumulated rewards and the total rewards of the optimal fixed policy with respect to the \textit{uncorrupted} underlying rewards and transition functions. Their algorithm is efficient and works for tabular RL and its linear variants. Unfortunately, their algorithm is not optimal in terms of the corruption level. 
Firstly, their corruption level $C$ is defined as the total number of corrupted episodes.
Ideally, we would like the regret to depend on more fine-grained characterizations of corruptions such as the total magnitude of corruptions on the rewards ($C^r$) and transition functions ($C^p$). 
Secondly, their regret bound scales $\tilde{\order}\left(C\sqrt{T} + C^2 \right) $ in the worst case, where the corruption level $C$ appears \emph{both additively and multiplicatively}. They state in the paper that it is unclear whether one can obtain additive dependence alone in tabular RL. 
In this paper, we address this open problem.

\paragraph{Our contribution: } To the best of our knowledge, this is the first work for the episodic tabular RL setting that obtains a regret bound that scales only \emph{additively} with respect to the number of corruptions. This result is significant because it demonstrates that a learner can be highly robust to the corruptions, even though the magnitude and number of corrupted episodes are unknown to the learner.
Our detailed contributions are shown as follows. Note that we omit all $\calS,\calA,H$ dependence for clarity.
\begin{itemize}
    \item We first propose a corruption robust reward-free exploration algorithm \textsc{EstAll} such that for a given $\epsilon >0$,  \textsc{EstAll} returns $\left(\epsilon + (C^p + C^r)\epsilon^2 \right)$-close estimations for all policies within a given policy set $\Pi$.
    If the total magnitude of corruptions to the transition functions satisfies $C^p \leq \tilde{\order}(1/\epsilon)$ then the algorithm requires a sample complexity of just $\tilde{\order}(\log|\Pi|/\epsilon^2)$. 
    On the other hand, if $C^p > \tilde{\order}(1/\epsilon)$ then the algorithm will fail to complete within the expected sample complexity, providing the learner with a lower bound on the level of corruptions. 
    %
    \item We propose two meta-algorithms for RL inspired by the corruption robust algorithms for multi-armed bandits~\citep{gupta2019better,bogunovic2020stochastic}, both of which use \textsc{EstAll} as a sub-routine.
    The first meta-algorithm \textsc{BARBAR-RL} guarantees an $\tilde{\order}\big(\min\{\sqrt{T},\text{PolicyGapComplexity} \}+ (1+C^p)(C^p+C^r)\big)$ regret when the adversary must decide whether to corrupt the episode before seeing the learner's chosen deterministic policy at the current episode. 
    The second meta-algorithm \textsc{BrutePolicyElimination-RL} guarantees an $\tilde{\order}\left(\sqrt{T}+ (C^p+C^r)^2\right)$ regret when the adaptive adversary can decide when and how much to corrupt the episode after seeing the learner's chosen action and deterministic policy at each stage of the current episode.\footnote{This is a stronger adversary than the one studied in \citet{lykouris2020corruption}.}

    \item Finally, comparing with \cite{lykouris2020corruption} who defined the corruption level as the total number of corrupted episodes, our bounds depend on much finer definitions based on the magnitudes of corruptions on the reward and the transition ($C^r$ and $C^p$).
\end{itemize}

\paragraph{Related Work:}
In addition to worst-case $\sqrt{T}$ dependent regret, \citet{lykouris2020corruption} also achieves an instance-dependent bound in terms of GapComplexity for tabular RL by using the UCB type algorithm and the analysis techniques developed in \citet{DBLP:conf/nips/SimchowitzJ19}. It remains unclear whether non-UCB type algorithms, for example, policy-elimination type methods, can also achieve the instance-dependent bound.

Other than the instance-dependent bounds, our regret bounds' dependency on $|\calS|,|\calA|$ and $H$ are not optimal compared to the existing works including \cite{azar2017minimax,jin2018q,ok2018exploration,zanette2019tighter,zhang2020reinforcement}. Whether their techniques can be used in our framework or our policy-elimination-based methods require an entirely different analysis remains unclear.

While the literature on corrupted RL is limited, the corruption robust algorithms have been well studied in multi-arm bandits (MAB) settings, which is a special case of episodic tabular reinforcement learning with horizon $H=1$.
Corrupted MAB problems are relatively simpler than corrupted RL because we are no longer required to deal with the corruption on transition functions. 
In the MAB setting, obtaining a $\sqrt{T}$ regret bound with some $C$ dependence terms, applying either additively or multiplicatively, is quite easily obtained by appealing to algorithms from the adversarial bandits literature such as the classical EXP-3 algorithm \cite{auer2002nonstochastic} that can achieve $\tilde{\order}(\sqrt{T})$ for adversarial rewards. 
Therefore, the majority of works in the corrupted MAB setting seek a $\Delta_a$-dependent regret which scales only logarithmically with $T$, where $\Delta_a$ is the gap between the expected reward of action $a$ and the optimal arm. Despite the simplified setting of corrupted MAB relative to RL, many of the techniques used in those works still provide inspiration for corrupted RL problems.

We will briefly review the most relevant corrupted MAB works here. \citet{lykouris2018stochastic} achieves a $\tilde{\order}\left(\sum_{a\neq a^*} \frac{C K}{\Delta_a}\right)$ regret bound by using the multi-layer active arm elimination. \citet{lykouris2020corruption}’s corrupted RL work referenced above is built upon this technique. \citet{gupta2019better} achieves $\tilde{\order}(\sum_{a\neq a^*} \frac{1}{\Delta_a}+KC)$ by adopting a sampling strategy based on the estimated gap instead of eliminating arms permanently. One of our results is built on this technique by regarding each policy as an arm. Finally, \citet{zimmert2019optimal} achieves a near-optimal result $\tilde{\order}\left(\sum_{a\neq a^*} \frac{1}{\Delta_a}+\sqrt{\sum_{a\neq a^*} \frac{C}{\Delta_a}}\right)$ by using Follow-the-Regularized Leader with Tsallis Entropy. Note that their work actually solves a more difficult problem called best-of-both-worlds, which can achieve near-optimal result simultaneously for both adversarial and stochastic rewards.  The similar technique has been adopted in \citet{jin2020simultaneously}, which achieves $\tilde{\order}( \text{GapComplexity} +  \sqrt{C \cdot \text{GapComplexity}})$ when the transition function is known. Unfortunately, whether it is possible to extend such techniques to the unknown transition setting remains unclear. Besides the corrupted MAB setting, \citet{li2019stochastic} also consider linear bandits which achieves a similar result to \citet{lykouris2018stochastic} and \citet{lykouris2020corruption}.

Note that all of these works presented above consider a weak adversary which must decide the corruption for each round (or episodes) before observing the learner's chosen action (or policy). Some works (e.g. \citep{liu2019data,bogunovic2020stochastic}) consider a stronger adversary which can decide the corruption after seeing the learner's current behavior. In particular, \citet{bogunovic2020stochastic} achieves a near-optimal regret $\tilde{\order}(\sqrt{dT} + Cd^{3/2}+C^2)$ for linear bandits by using arm elimination with an enlarged confidence bound. One of our results also considers this stronger adversary setting and adopts a similar technique.

Finally, our reward-free exploration sub-algorithm is based on the algorithm in \citet{wang2020long} by again using the trajectory synthesis idea. But just as in the original algorithm, this exploration sub-algorithm is inefficient. Algorithms proposed in \citet{kaufmann2020adaptive} and \citet{menard2020fast} can efficiently achieve an $\epsilon$-close estimation for each policy given a policy set $\Pi$ when \textit{no corruption} exists. But whether this type of algorithm can be made robust to corruptions at least as good as \textsc{EstAll} remains unknown. We provide some discussion in Appendix~\ref{sec: app-RFalgo}.

\paragraph{Structure of the paper: }
In Section~\ref{sec: prelim}, we formally define our settings and the regret objective. In Section~\ref{sec: mainAlgo}, we describe the meta-algorithm \textsc{BARBAR-RL} for the non-cheated adversary and show a sketch analysis. We also briefly state the \textsc{BrutePolicyElimination-RL} algorithm and its result, postponing the details into the Appendix~\ref{sec: app-mainAlgo2} because it essentially uses the same key techniques as ones in \textsc{BARBAR-RL} analysis. In Section~\ref{sec: subAlgo}, we give a formal description of the reward-free exploration algorithm \textsc{EstAll} as well as its sketch analysis. 
\section{Preliminaries}
\label{sec: prelim}

\paragraph{Episodic reinforcement learning.} Let $\calM = (\calS,\calA, P, R, H, s_1)$ be an episodic \textit {Markov Decision Process (MDP)} where $\calS$ is the finite state space, $\calA$ is the finite action space, $P: \calS \times \calA \times [H] \rightarrow \Delta(\calS)$ is the transition operator which takes a state-action-step pair and returns a distribution over states, $R: \calS \times \calA \rightarrow \Delta(\mathbb{R})$ is the reward distribution and the $H$ is the episodic length. For convenience, we assume that the trajectory always starts from a single state $s_0$, that is, $P(s_1 = s) = 0$ for all $s \neq s_0$. It can be reduced from more general setting by adding an arbitrary starting state.

We have total $T$ episodes. At each episode $t \in [T]$, a deterministic non-stationary policy $\pi$ chooses an action $a \in \calA$ based on the current state $s \in \calS$ and the step $h \in [H]$. Formally, $\pi = \{\pi_h\}_{h=1}^H$ where for each $h \in [H]$, $\pi_h: \calS \rightarrow \calA $ maps a given state to an action. The policy $\pi$ induces a random trajectory $s_1,a_1,r_1,s_2,a_2,r_2,\ldots,s_H,a_H,r_H,s_{H+1}$ where $a_1=\pi_1(s_1),r_1 \sim R(s_1,a_1), s_2 \sim P(\cdot|s_1,a_1,1), a_2=\pi_2(s_2),r_2 \sim R(s_2,a_2),\ldots, a_H=\pi_H(s_H),r_H \sim R(s_H,a_H), s_{H+1} \sim P(\cdot|s_H,a_H,H)$. We define the set of all possible policies as $\Pi = \calA^{\calS \times [H]}$.

Finally, we assume the bounded total reward that $r_h \geq 0$ for all $h \in [H]$ and $\sum_{h=1}^H r_h \in [0,H]$. 

\paragraph{Episodic RL with corruption.} When \textit{no corruption} happens, all the samples are consistently generated by a \textit{nominal} MDP $\calM^* = (\calS,\calA, P^*, R^*, H, s_1)$. Here we assume the MDP is stationary, that is  $P(\cdot|s,a,h)= P(\cdot|s,a,h'),R(s,a,h)= R(s,a,h')$ for all $h,h' \in [H]$. 
\\\\
In the \textit{corrupted} setting, before episode $t$, the adversary decides whether to corrupt the episode, in which case the corresponding MDP $\calM_t = (\calS,\calA, P_t, R_t, H, s_1)$ can be arbitrary. Notice that although the \textit{nominal} MDP $\calM^*$ is a stationary MDP, we generally allow the corrupted $\calM_t$ to be non-stationary. We define the corruption numerically at episode $t$ as 
\begin{align*}
    & c_t^r = \sum_{h=2}^H \sup_{(s,a) \in \calS \times \calA}|R_t(s,a,h) - R^*(s,a)| \\
        &\quad + \sup_{a\in \calA} |R_t(s_0,a,1) - R^*(s_0,a)| \\
    & c_t^p = \sum_{h=2}^H \sup_{(s,a) \in \calS \times \calA} \|P_t(\cdot|s,a,h) - P^*(\cdot|s,a)\|_1 \\
        &\quad + \sup_{a\in \calA} \|P_t(\cdot|s_0,a,1) - P^*(\cdot|s_0,a)\|_1
\end{align*}
Notice that we define the corruption on transition and rewards separately because the main difficulty in RL setting comes from corruptions on the transition function. Also, compared to the corruption definition in \citet{lykouris2020corruption}, which merely captures whether an episode has been corrupted or not, our definition is based on the real-valued magnitude of the corruption. Finally, both $\calM^*$ and $\calM_t$, as well as the corruption levels $c_t^p,c_t^r$ are unknown to learner. The adversary can always adaptively decide to corrupt the current episode based on the learner's strategy and the observable history of the previous episode from $1$ to $t-1$, which is the same setting as in \citet{lykouris2020corruption}. But the adversary can be even stronger, that is, it can decide corruption $c_t^p,c_t^r$ after seeing learner's \textit{chosen policy} in each episode or even seeing learner's \textit{state and chosen action} at each stage in each episode.  Here we called it ``cheated adversary". Otherwise, we call it ``non-cheated adversary" for adversary who decides corruption before seeing learner's chosen deterministic policy.

\paragraph{Other Conventions and Notations.} We use the superscript $rp$ as a shorthand to suggest a term holds for both reward and transition corruptions simultaneously. We define the total corruption for any time interval $\calI$ as $C_\calI^{rp} = \sum_{t \in \calI} c_t^{rp}$ and simply denote $C_{[0,T]}^{rp}$ as  $C^{rp}$. For any policy $\pi$, we write the value function under $\calM$ as $V^{\calM,\pi}(s_1)$, and denote $V^{\calM_t,\pi}(s_1)$ as $V_t^\pi(s_1)$, $V^{\calM^*,\pi}(s_1)$ as $V_*^\pi(s_1)$. Also we denote $V^*(s_1) = \max_{\pi \in \Pi}  V_*^{\pi}(s_1)$ and $\Delta_\pi =V^* - V_*^{\pi}(s_1) $. Because we assume a deterministic start state, in the remainder of the paper we omit $s_1$.\\


\paragraph{Regret.} 
In this paper, we will focus on the the regret that is only evaluated on the nominal MDP, defined as following,
\begin{align*}
    \Reg_T: =  \sum_{t=1}^T  V^* - V_*^{\pi_t}
\end{align*}
This is the same definition as in \cite{lykouris2020corruption}.

\paragraph{An $\epsilon$-net for Policies.}
Using the same idea as in Section 5.1 of \citet{wang2020long}, we can construct an $\epsilon$-net of non-stationary policies, denoted as $\Pi_\epsilon$.  As proved in their work, $\Pi_\epsilon$ satisfied the following properties
\begin{align}
    & |\Pi_\epsilon| \leq \min\{ (H/\epsilon+1)^{|\calS|^2|\calA| + |\calS||\calA|}, |\calA|^{H|\calS|}\}\\
    & V^* - \max_{\pi \in \Pi_\epsilon} V_*^\pi \leq 8H^2|\calS|\epsilon 
\end{align}
The first property enables us to reduce the sample complexity when $H \gg |\calA|,|\calS|$. The second property ensures that, as long as $\epsilon$ is small enough, the best policy inside  $\Pi_\epsilon$ is close to the true optimal policy. In the remainder of the paper, we will only consider policies inside  $\Pi_{1/T}$ instead of the whole policy space $\calA^{\calS \times [H]}$.

\section{Main Algorithms and Results }
\label{sec: mainAlgo}

We present two algorithms: the first for the non-cheated and the second for the cheated.
Recall that the difference between these settings is the strength of the adversary. The non-cheated must decide the corruption before seeing the learner's current action (or chosen policy) while the cheated can decide afterwards. Thus, for the more challenging setting of a cheated adversary, we expect a larger regret bound. 

\subsection{The Algorithm and the Result for Non-cheated Adversary}

\begin{algorithm}[!t] 
\caption{BARBAR-RL}
\begin{algorithmic}[1] 
\label{alg:main_1}
\STATE \textbf{Input:} time horizon $T$, confidence $\delta_{overall}$
\STATE Construct a $1/T$-net for non-stationary policies, denoted as $\Pi_{1/T}$. 
\STATE Initialize $S_1 = \{0\} , \Pi_0^1 = \Pi_{1/T}$. And for $j \in [\log T]$, initialize $\epsilon_j = 2^{-j}. \epsilon_{est}^j = \epsilon_j/128$
\STATE Set $\lambda_1=6 |\calS||\calA| \log(H^2|\calS||\calA|/\epsilon_{est}) $ and $\lambda_2= 12 \ln(8T/\delta_{overall})$ 
\FOR{ epoch $m = 1,2,\ldots$}
    \STATE Set $\delta_j^m = (|\Pi_j^m|\delta_{overall})/(5|\Pi_{1/T}|T)$ for all $j \in S_{m}$
    \STATE Set $F_j^m = \frac{8|\calS|^2H^4 |\calA|^2\ln(2|\Pi_j^m|/\delta_j^m)}{(\epsilon_{est}^j)^2}$ for all $j \in S_{m}$.
    \STATE Set $n_j^m = 2 \lambda_1\lambda_2 F_j^m $ for all $j \in S_{m}$.
    \STATE Set $N_m = \sum_{j \in S_m} n_j^m$ and $T_m^s = T_{m-1}^s+N_{m-1}$
    \STATE Initialize an independent sub-algorithm for each $j \in S_m$ as $\textsc{EstAll}_j^m = \text{EstAll}(\epsilon_{est}^j,\delta_j^m,F_j^m,\Pi_j^m)$   \label{line: start sub-algorithms}
    \FOR{ $t=T_m^s, T_m^s+1, \ldots, T_m^s + N_m-1$} \label{line: start an epoch}
        \STATE Run $\textsc{EstAll}_j^m.\text{CONTINUE}$ with probability $q_j^t = n_j^m/N_m$
    \ENDFOR    
    \IF{ there exists unfinished $\textsc{EstAll}_j^m$}
        \STATE Set $T_m^s = t + 1$ and repeat the whole process from line 10. ~~~~ $\triangleright$ So each repeat is a sub-epoch.
    \ELSE
        \STATE Obtain $\hat{r}_m(\pi)$ for all $\pi$. \label{line: non-repeat}
    \ENDIF
    \STATE Set $\hat{r}_*^m = \max_{\pi \in \Pi_{1/T}} \{\hat{r}_m(\pi) - \frac{1}{16}\hat{\Delta}_\pi^{m-1}\} $ \label{line: construct subset of policies 1}
    \STATE Set $j^m(\pi) = \inf \{ j | 2^{-j} < \max\{2^{-m},\hat{r}_*^m - \hat{r}_\pi^m\}\}$ for all $\pi$, and let $\hat{\Delta}_\pi^m = 2^{-j^m(\pi)} $  \label{line: construct subset of policies 2}
    \STATE Add $\pi$ into $\Pi^{m+1}_{j^m(\pi)}$ for all $\pi$ and set  $S_{m+1} = \bigcup_{\pi} j^m(\pi) $ \label{line: construct subset of policies 3} 
\ENDFOR
\end{algorithmic}
\end{algorithm}


Algorithm~\ref{alg:main_1} is based on the multi-arm bandits algorithm \textsc{BARBAR} proposed in \citet{gupta2019better}. In \textsc{BARBAR}, instead of permanently eliminating an arm, the learner will continue pulling each arm with a certain probability defined by its estimated gap. 
Specifically, in an epoch $m$ with length $2^{2m}$, an arm $a$ with an estimated gap $\hat{\Delta}_a^m$ will be pulled roughly $1/(\hat{\Delta}_a^m)^2$ times and suffer roughly $\frac{\text{total corruption in epoch m}}{2^{2m}(\hat{\Delta}_a^m)^2}$ amount of corruptions due to the randomness, so the estimation error of arm $a$ will decrease when the the epoch length doubles, as long as the total amount of corruptions is sublinear. Therefore, close-to-optimal arms that suffered from large corruptions initially can recover and be correctly estimated later, instead of being permanently eliminated at the very beginning.


In our algorithm, we regard each policy $\pi$ as an arm and perform the same type of sampling strategy. We denote each repeat from Line 11 to 13 in epoch $m$ as a sub-epoch $E_m^k$ with length $\tilde{\order}(2^{2m})$. Then in any $E_m^k$, each policy $\pi$ with estimated gap $\hat{\Delta}_\pi^m$ will be \textit{simulated} roughly $1/(\hat{\Delta}_\pi^m)^2$ times and will suffer roughly $\frac{C_{E_m^k}^r+C_{E_m^k}^p}{2^{2m}(\hat{\Delta}_\pi^m)^2}$ amount of corruptions. 
While it suffices to rollout each $\pi$ for $\order(1/\epsilon^2)$ episodes to get an $\epsilon$-close estimation, this will result in a $\order(|\Pi_{1/T}|)$ dependence in regret. \textit {In this work, we achieve an $\order(\log(|\Pi_{1/T}|))$ dependence by utilizing the shared information between policies.}

To be specific, at the end of each epoch $m$, we divide the policies into several subsets according to their current estimated policy gap (Line 19 to 21). For example, policies in $\Pi_j^{m+1}$ all have estimated policy gaps close to $2^{-j}$. These subsets will be used for random sampling in the next epoch. And here we use $S_{m+1}$ as a collection of the indices of these subsets.

Now suppose there exists a ``perfect" oracle which guarantees an $\epsilon$-close estimation on each policy uniformly inside some input policy set $\Pi_{est}$, with only $\order(\log(\Pi_{est})/\epsilon^2)$ sample complexity. Then, by calling such an oracle on each subset of polices $\Pi_j^m$, we will able to achieve the simulation goal stated above. Here we propose a reward-free exploration algorithm \textsc{EstAll} as the sub-algorithm, whose performance is close to such a ``perfect" oracle when the the amount of corruptions is relatively small, and still guarantees some sublinear regret otherwise. (See Section~\ref{sec: subAlgo} for details)

\begin{figure}[ht]
\begin{framed}
\textbf{$\textsc{EstAll}_j^m.\text{\rule{0.25cm}{0.15mm}INIT\rule{0.25cm}{0.15mm}}$}\\
\\
-- Start and run an independent sub-algorithm according to the inputs as described in Algorithm~\ref{algo: EstAll} until some policy $\pi$ needs to interact with the environment.\\
--  Suspend this sub-algorithm and set $\pi$ awaiting.
\end{framed}
\end{figure}

\begin{figure}[ht]
\begin{framed}
\textbf{$\textsc{EstAll}_j^m.\text{\rule{0.25cm}{0.15mm}FINISH\rule{0.25cm}{0.15mm}}$}\\
\\
-- Return ``finish" when each $\pi \in \Pi_j^m$ gets an estimation $\hat{r}(\pi)$ as defined in Line 15 in Algorithm~\ref{algo: EstAll}. 
\end{framed}
\end{figure}

\begin{figure}[ht]
\begin{framed}
\textbf{$\textsc{EstAll}_j^m.\text{CONTINUE}$}\\
\\
\textbf{If} $\textsc{EstAll}_j^m$ is suspended\\
    ---- Rollout the awaiting $\pi$ once, which caused the suspension\\
    ---- Continue running the $\textsc{EstAll}_j^m$ as described in Algorithm~\ref{algo: EstAll} until the next $\textsc{rollout}$ is met, which means that there is some policy $\pi'$ that needs to interact with the environment\\
    ---- Suspend the algorithm again and let $\pi'$ be the new awaiting policy\\
\textbf{Else} ~~~~ $\triangleright$ $\textsc{EstAll}_j^m$ has finished\\
    ---- Rollout any $\pi \in \Pi_j^m$ randomly\\
\textbf{end}
\end{framed}
\end{figure}

To be specific, at the beginning of each sub-epoch $E_m^k$, the learner initializes a set of parallel sub-algorithms denoted as $\{\textsc{EstAll}_j^m\}$ corresponding to the constructed subset of policies (Line 10). Here $\delta_j^m$ and $F_j^m$ set in Line 6 and 7 represent a failure probability and a parameter related to the number of roll-outs, given as inputs to $\textsc{EstAll}_j^m$, which is described in Section~\ref{sec: subAlgo} in detail. And $n_j^m$ set in Line 8 is the expected number of times $\textsc{EstAll}_j^m$ will interact with the environment. As described before, such an interaction strategy is carefully randomized according to the estimated gap of policies inside this sub-algorithm (Line 12). Then after roughly $n_j^m = \tilde{\order}(\log(|\Pi_j^m|)/\epsilon_j^2)$ interactions, $\textsc{EstAll}_j^m$ returns one of the following conditions with probability at least $1- \delta_j^m$:
\begin{itemize}
    \item an $\left(\epsilon_j + (C_{E_m^k}^r+C_{E_m^k}^p)\epsilon_j^2\right)$-close estimation on each $\pi$, denoted as $\hat{r}_m(\pi)$, when $\textsc{EstAll}_j^m$ has finished. ( from Theorem~\ref{them (EstAll sec)：value est} )
    \item an unfinished $\textsc{EstAll}_j^m$, which implies that $(C_{E_m^k}^r+C_{E_m^k}^p) \geq \tilde{\Omega}(1/\epsilon_j)$. ( from Theorem~\ref{them (EstAll sec)：sample complexity} )
\end{itemize}
In the first case, we have achieved the desired uniform estimation with $\hat{r}_m(\pi)$ on each policy. (Line 16 and 17) The algorithm will then construct a new subset of policies and go to the next epoch. In the second case, we will repeat the sub-epoch until we successfully obtain uniform estimation on each policy. (Line 14 and 15) Due to the lower bound on $(C_{E_m^k}^r+C_{E_m^k}^p)$, we can show that the regret caused by discarded sub-epochs can be upper bounded in terms of the amount of corruption.

\begin{theorem}
\label{them (BARBAR-RL sec)：regret for BARBAR-RL}
By running this algorithm in the non-cheated setting, with probability at least $1 - \delta_{overall}$, the regret is bounded by 
\begin{align*}
    &\tilde{\order}\left( |\calS|^{2}|\calA|^{\frac{3}{2}}H^2\min\{ \sqrt{H}, \sqrt{|\calS||\calA|} \} \ln(1/\delta_{overall}) (\star) \right) \\
        & \quad +  \tilde{\order}\left( 
        |\calS|^2|\calA|^2H^2\ln(1/\delta_{overall}) C^p\right)\\
        & \quad + \tilde{\order}\left( |\calS||\calA|\ln(1/\delta_{overall})C^r\right)\\
        & \quad +  \tilde{\order}\left( \frac{(C^p)^2}{H}
            +\frac{C^pC^r}{H^2}\right)
\end{align*}
where $\tilde{\order}$ hides $\log$ factors on $T,|\calS|,|\calA|,H$, and
\begin{align*}
    \star = \min\{\sqrt{T},\frac{1}{\min_{\pi\in\Pi} \Delta_\pi}\}.
\end{align*}
\end{theorem}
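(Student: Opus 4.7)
The plan is to adapt the BARBAR regret analysis of \citet{gupta2019better} to this RL setting, using the per-call guarantees of \textsc{EstAll} (the accuracy and sample-complexity theorems of Section~\ref{sec: subAlgo}) as a black box. First I would condition on the global good event that, for every sub-epoch $E_m^k$ in which $\textsc{EstAll}_j^m$ completes, the returned estimates satisfy $|\hat{r}_m(\pi)-V_*^{\pi}|\leq \epsilon_{est}^j + (C^p_{E_m^k}+C^r_{E_m^k})(\epsilon_{est}^j)^2$ for every $\pi\in\Pi_j^m$; the choice $\delta_j^m = |\Pi_j^m|\delta_{overall}/(5|\Pi_{1/T}|T)$ makes a union bound over all sub-epochs and policies cost only $\delta_{overall}$. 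Under this event I would prove by induction on $m$ the BARBAR-style invariant that $\hat{\Delta}_\pi^m$ and the true gap $\Delta_\pi$ agree up to a constant factor and an additive slack of order $\epsilon_{est}^{j^m(\pi)}+(C^p_{E_m^k}+C^r_{E_m^k})(\epsilon_{est}^{j^m(\pi)})^2$, which is exactly what the \textsc{EstAll} guarantee delivers.

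With this invariant in hand, the regret inside a successful sub-epoch $E_m^k$ decomposes into a clean and a corrupted piece. The clean part is $\sum_{\pi}\Delta_\pi\cdot q_{j^m(\pi)}^t$ summed over the $N_m$ rounds; because $q_j^t=n_j^m/N_m$ scales as $1/\epsilon_j^2$ and the invariant gives $\Delta_\pi \leq O(\hat{\Delta}_\pi^m) = O(2^{-j^m(\pi)})$, each sub-epoch contributes $\tilde{\order}(\sum_\pi 1/\Delta_\pi)$ clean regret, with the $n_j^m$ factor of Line~8 producing the $|\calS|^2|\calA|^{3/2}H^2\min\{\sqrt{H},\sqrt{|\calS||\calA|}\}$ scaling (after using the two upper bounds on $|\Pi_{1/T}|$ imported from \citet{wang2020long}). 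Telescoping across $\log T$ epochs and clipping at $\sqrt{T}$ via AM-GM on $\sum_\pi 1/\Delta_\pi$ yields the $\min\{\sqrt{T},1/\min_\pi\Delta_\pi\}$ factor, which is the first line of the bound.

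For the corrupted regret I would split the sub-epochs into restarted and completed ones. By the sample-complexity theorem, any sub-epoch that had to be repeated certifies $C^p_{E_m^k}+C^r_{E_m^k}\geq \tilde{\Omega}(1/\epsilon_{est}^j)$, so the trivial $HN_m$ regret absorbed inside the discarded sub-epoch can be charged against $\epsilon_{est}^j\cdot N_m\cdot(C^p_{E_m^k}+C^r_{E_m^k})$; summing this charge across epochs yields the additive $\tilde{\order}(|\calS|^2|\calA|^2H^2 C^p + |\calS||\calA| C^r)$ terms. Inside completed sub-epochs, the extra slack $(C^p_{E_m^k}+C^r_{E_m^k})(\epsilon_{est}^j)^2$ in the estimation error inflates $\hat{\Delta}_\pi^m$ by the same amount, and propagating this inflation through the BARBAR telescoping contributes the remaining additive $(1+C^p)(C^p+C^r)$ piece together with the quadratic $(C^p)^2/H + C^pC^r/H^2$ residuals; the quadratic terms appear precisely when $C^p\epsilon_j$ exceeds one, i.e.\ when the transition corruption overwhelms the target accuracy and the multiplier $(1+C^p)$ kicks in.

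The main obstacle I anticipate is the nonlinear coupling between transition corruption and the value-function error. Reward corruption enters the per-epoch analysis almost additively, whereas transition corruption of magnitude $C^p$ propagates through $H$ layers of value backups and interacts with the sampling noise $\epsilon$ through the $(C^p+C^r)\epsilon^2$ slack of \textsc{EstAll}, forcing one to track the regimes $C^p\epsilon\leq 1$ and $C^p\epsilon>1$ separately. The delicate bookkeeping will be to show that the high-corruption regime contributes only the claimed $(C^p)^2/H + C^pC^r/H^2$ residuals rather than blowing the additive $(1+C^p)(C^p+C^r)$ term up into something larger, and in particular that the restart-mechanism really does prevent one bad sub-epoch from corrupting the gap estimates of all subsequent epochs.
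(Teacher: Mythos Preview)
Your high-level plan---adapt the BARBAR recursion to levels of policies and use the \textsc{EstAll} guarantees as a black box---is the same as the paper's, but three concrete pieces are off.

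\textbf{Clean regret accounting.} You write that each successful sub-epoch contributes $\tilde{\order}(\sum_\pi 1/\Delta_\pi)$ clean regret and then ``clip via AM--GM'' to reach $\min\{\sqrt{T},1/\min_\pi\Delta_\pi\}$. This cannot work: $|\Pi_{1/T}|$ is exponential in $|\calS|,|\calA|,H$, so $\sum_\pi 1/\Delta_\pi$ is astronomically large and no amount of AM--GM rescues it. The paper never sums over policies. It sums over the at most $\log T$ \emph{levels} $j\in S_m$: for each level, regret is $\mathring{\Delta}_j^m n_j^m$, and under the BARBAR invariant this is bounded either by $\beta\sqrt{n_j^m}$ (giving $\sqrt{N_m}$ after Cauchy--Schwarz across levels) or by $\beta/\min_\pi\mathring{\Delta}_\pi$ (giving $\log T/\min_\pi\Delta_\pi$). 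The level structure, not an arm-by-arm sum, is what keeps the bound polynomial.

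\textbf{Where the non-cheated assumption enters.} You jump directly from Theorem~\ref{them (EstAll sec)：value est} to a per-sub-epoch estimation error of $(C^p_{E_m^k}+C^r_{E_m^k})(\epsilon_{est}^j)^2$. But Theorem~\ref{them (EstAll sec)：value est} only controls the corruption \emph{that lands inside} $\textsc{EstAll}_j^m$, not the total sub-epoch corruption. The paper bridges this with a Freedman-martingale argument (its Lemma~\ref{lem (BARBAR-RL sec): total amount of corruption for EstAll}): because in the non-cheated setting $c_t^{rp}$ is fixed before the learner's random choice of which $\textsc{EstAll}_j^m$ to advance, the corruption that hits sub-algorithm $j$ concentrates around $(n_j^m/N_m)C^{rp}_{E_m^k}$. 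This is the only place the non-cheated assumption is used, and it is what turns the per-policy slack into the uniform slack $\lambda_1\lambda_2(HC^p_{E_m^k}+C^r_{E_m^k})/N_m$ appearing in the paper's event $\calE_{est}$. Without it you cannot get $C^{rp}/N_m$ (you would only get $C^{rp}/n_j^m$), and the downstream recursion for $\rho_m$ breaks.

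\textbf{Source of the quadratic terms.} The $(C^p)^2/H+C^pC^r/H^2$ terms do not arise from a threshold ``$C^p\epsilon_j>1$'' inside completed sub-epochs. They come entirely from the \textsc{repeat} term. An unfinished $\textsc{EstAll}$ certifies only that $C^p_{E_m^k}$ (transition corruption alone, by Theorem~\ref{them (EstAll sec)：sample complexity}) exceeds $\tilde\Omega(\sqrt{N_m})$, so the number of restarts in epoch $m$ satisfies $\Gamma_m-1=\tilde{\order}(C^p_m/\sqrt{N_m})=\tilde{\order}(C^p_m\epsilon_m)$. Each restarted sub-epoch contributes the usual $\beta\sqrt{N_m}+\beta^2\rho_{m-1}2^{2m}$; the first piece times $\Gamma_m-1$ gives the additive $|\calS|^2|\calA|^2H^2 C^p$ line, while the second piece times $\Gamma_m-1$ gives $C^p\cdot\sum_m\beta^2\rho_{m-1}2^{2m}=\tilde{\order}(C^p(HC^p+C^r)/H^2)$, which is exactly $(C^p)^2/H+C^pC^r/H^2$. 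The mechanism is ``(number of restarts) $\times$ (carried-over discounted corruption)'', matching the intuition in the paragraph after Theorem~\ref{them (BARBAR-RL sec)：regret for BARBAR-RL}, not a regime split on $C^p\epsilon_j$.
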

We note that the PolicyGapComplexity, $\frac{1}{\min_{\pi\in\Pi} \Delta_\pi}$, has also been used in some previous work~\citep{jaksch2010near}.
If we let $\Pi$ be all deterministic policies, the PolicyGapComplexity will be close to the GapComplexity defined in \citet{DBLP:conf/nips/SimchowitzJ19} in some non-trivial cases, for example, when all the policies visit a subset of states at step 2 with uniform probability. Otherwise, it can be much larger than the GapComplexity. 
We postpone the discussion on their relation to Appendix~\ref{app: details on gap complexity}. 
\\\\
The dependence on $|\calS|,|\calA|,H$ is not optimal compared to many existing tabular RL results without corruptions, but compared to \citet{lykouris2020corruption}, our result scales better in terms of $H$. 
Most importantly, this is the first result we are aware of in the corrupted setting where the amount of corruptions contributes only additively to the regret bound instead of multiplying $\sqrt{T}$ as in \citet{lykouris2020corruption}.
Conceptually, our result also suggests that corruptions on transition functions have much more influence on the regret than the corruptions on rewards.

Finally, we provide some intuition for why the $\tilde{\order}\left( \frac{(C^p)^2}{H} +\frac{C^pC^r}{H^2}\right)$ terms appear in the bound: Suppose in some epoch there was more than $\order(\sqrt{N_m})$ amount of corruptions, but all the sub-algorithms still happened to finish (e.g., if the adversary changed the transition function in an undetectable way). Furthermore, in the next epoch, the adversary manipulates the corruptions to force the algorithm to restart the sub-algorithms again and again. Under this described scenario, the algorithm is repeatedly using the data from previous corrupted epochs without any chance to correct them, which causes the $(C^p)^2$ and $C^pC^r$ terms. In addition, since $c_t^p$ scales with the horizon $H$ and this regret term depends on the number of times the learner restarts sub-algorithms,  when the total corruption budget $C^p$ is fixed, we will have $H$ in the denominators.


\subsection{The Algorithm and the Result for Cheated Adversary}

\paragraph{Algorithm Overview: } In Algorithm~\ref{alg:main_1}, we avoid permanently eliminating an policy. Instead, we use a random policy sampling strategy to ensure that, the corruptions that affected any given policy estimation in the early stages can be corrected for later. However, in the cheated setting, the randomness of policy sampling no longer works because now the adversary decides when to corrupt after seeing the sampled policy. Thus, we propose \textsc{ brute-force policy elimination }, which is based on the traditional policy elimination method that permanently eliminates policies, but with an enlarged confidence range of  $\tilde{\order}(\sqrt{HT})$. Therefore, the best policy will never be eliminated as long as $C^p+C^r \leq \tilde{\order}(\sqrt{HT})$. But such a brute-force method will lead to a regret that scales like $(C^r)^2$ instead of $C^r$. As before, we still need a uniform estimation of each policy with only a $\order(\log |\Pi|/\epsilon^2)$ sample complexity.  Fortunately, the same approach still works, which is, running a set of sub-algorithms in parallel and restarting them when there is an unfinished one. The algorithm and analysis techniques are very similar as in the non-cheated adversary case, and therefore, we postpone the details into Appendix~\ref{sec: app-mainAlgo2}.

\begin{theorem}
\label{them：regret for PolicyElim}
By running this algorithm in the cheated setting, with probability at least $1-\delta_{overall}$, the regret is upper bounded by
\begin{align*}
    &\tilde{\order}\left(|\calS|^2|\calA|^{3/2}H^2 \min\{\sqrt{H},\sqrt{|\calS||\calA|}\} \ln(1/\delta_{overall})\sqrt{T}\right) \\
        &\quad + \tilde{\order}\left( \frac{( C^r)^2}{|\calS||\calA|H^3}
        +|\calS||\calA|H(C^p)^2\right) 
\end{align*}
\end{theorem}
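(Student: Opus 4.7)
The plan is to mirror the epoch-based analysis of Theorem~\ref{them (BARBAR-RL sec)：regret for BARBAR-RL} but replace the randomized re-sampling of policies by a single deterministic elimination step per epoch, motivated by the fact that against a cheated adversary random sampling is no longer safe. Concretely, I would maintain a shrinking candidate set $\Pi^m \subseteq \Pi_{1/T}$ with $\Pi^1 = \Pi_{1/T}$, target accuracy $\epsilon_m = 2^{-m}$, and in epoch $m$ invoke a single instance $\textsc{EstAll}(\epsilon_m,\delta_m,F_m,\Pi^m)$ whose expected budget is $n_m = \tilde{\order}(\log|\Pi^m|/\epsilon_m^2)$. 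If the sub-algorithm exceeds its budget, the current sub-epoch $E_m^k$ is discarded and restarted exactly as in Algorithm~\ref{alg:main_1}; once \textsc{EstAll} finishes, the learner permanently removes from $\Pi^m$ every $\pi$ with $\hat r_m(\pi) < \max_{\pi' \in \Pi^m}\hat r_m(\pi') - \beta_m$, where the enlarged threshold $\beta_m = c\,\epsilon_m$ is sized (up to log factors) to absorb both the \textsc{EstAll} accuracy and any corruption mass up to $\tilde{\order}(\sqrt{HT})\cdot \epsilon_m^2$.

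Second, I would establish the good event. Theorem~\ref{them (EstAll sec)：value est} gives that, conditioned on success in sub-epoch $E_m^k$, every $\pi \in \Pi^m$ satisfies $|\hat r_m(\pi) - V_*^\pi| \leq \epsilon_m + (C_{E_m^k}^r + C_{E_m^k}^p)\epsilon_m^2$, while Theorem~\ref{them (EstAll sec)：sample complexity} implies that any failure forces $C_{E_m^k}^r + C_{E_m^k}^p \geq \tilde{\Omega}(1/\epsilon_m)$. A union bound with $\delta_m = \delta_{overall}|\Pi^m|/(5|\Pi_{1/T}|T)$ yields the global $1-\delta_{overall}$ guarantee. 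Under this event, choosing $\beta_m$ to dominate $2\epsilon_m + 2(C_{E_m^k}^r + C_{E_m^k}^p)\epsilon_m^2$ for every successful sub-epoch is exactly what keeps $\pi^* \in \Pi^{m+1}$ permanently --- this is the critical invariant, since unlike the randomized sampling of \textsc{BARBAR-RL}, here an erroneous elimination cannot be undone and would cause linear regret.

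Third, the regret decomposes into three pieces. (a) A successful sub-epoch at level $m$ contributes $N_m \cdot \max_{\pi \in \Pi^m}\Delta_\pi \leq \tilde{\order}(\text{poly}(|\calS|,|\calA|,H)/\epsilon_m) \cdot \order(\epsilon_{m-1})$, since surviving policies have $\Delta_\pi \leq 2\beta_{m-1} = \order(\epsilon_{m-1})$; summing geometrically until $\epsilon_m \approx 1/\sqrt{T}$ yields the $\tilde{\order}(|\calS|^2|\calA|^{3/2}H^2\min\{\sqrt{H},\sqrt{|\calS||\calA|}\}\sqrt{T})$ term. (b) A discarded sub-epoch at level $m$ wastes $N_m = \tilde{\order}(\text{poly}(|\calS|,|\calA|,H)/\epsilon_m^2)$ episodes but provably consumed $\tilde{\Omega}(1/\epsilon_m)$ units of corruption, so the worst-case pairing (corruption concentrated at the coarsest admissible scale $\epsilon_m \approx 1/(C^p+C^r)$) contributes a quadratic term in which the $|\calS|,|\calA|,H$ factors from $N_m$ inherit the transition-corruption dependence. (c) Undetected corruption that biases but does not kill a sub-algorithm can shift a surviving suboptimal gap by up to $\beta_m$; each such unit is counted against the corruption budget, producing an $O((C^p+C^r)\beta_m)$ contribution that is absorbed into the same quadratic terms.

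The main obstacle, as I see it, is twofold. First, the invariant $\pi^* \in \Pi^m$ must hold against the cheated adversary's worst possible schedule, so $\beta_m$ must be sized to guarantee survival under \emph{all} corruption patterns consistent with \textsc{EstAll} having finished, not just in expectation. Second, the $H$ dependence must be tracked carefully because $c_t^p$ already sums $H$ supremum terms; this is what produces the asymmetric $|\calS||\calA|H(C^p)^2$ versus $(C^r)^2/(|\calS||\calA|H^3)$ scaling, as a unit of transition corruption propagates through every rollout synthesized inside \textsc{EstAll} while reward corruption enters only additively at a single stage. Lining up these two scalings with the $\sqrt{T}$ leading term, without losing factors in the reduction from the value-gap of a surviving policy to the $\beta_m$ threshold, is the delicate bookkeeping step.
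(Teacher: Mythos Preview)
Your overall architecture matches the paper's \textsc{Brute-force-Policy-Elimination-RL}: a single \textsc{EstAll} call per epoch on a shrinking $\Pi^m$, restart on failure, and permanent elimination otherwise. But there is a genuine gap in the choice of elimination threshold, and a consequent misattribution of where the $(C^r)^2$ term comes from.

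\textbf{The threshold $\beta_m = c\,\epsilon_m$ is too small.} You say $\beta_m = c\,\epsilon_m$ should ``absorb any corruption mass up to $\tilde{\order}(\sqrt{HT})\cdot\epsilon_m^2$,'' but these two quantities are incompatible: at early epochs $\epsilon_m$ is of constant order, so $\sqrt{T}\epsilon_m^2 \gg c\,\epsilon_m$. Concretely, a cheated adversary can dump its entire $\tilde{\order}(\sqrt{T})$ budget into the first successful sub-epoch; Theorem~\ref{them (EstAll sec)：value est} then gives an estimation error of order $(HC^p+C^r)/F_1 \approx \sqrt{T}\,\epsilon_1^2$, which can exceed any constant multiple of $\epsilon_1$, and $\mathring{\pi}$ is eliminated. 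The paper's fix is exactly the ``enlarged confidence range'' alluded to in the algorithm overview: the elimination rule keeps $\pi$ whenever
\[
\max_{\pi'}\hat r_m(\pi') - \hat r_m(\pi) \;\le\; \tfrac{1}{8}\epsilon_m \;+\; 8\lambda_1\lambda_2 H^2\sqrt{|\calS||\calA|\ln(10T|\Pi_{1/T}|/\delta_{overall})\,T}\,/\,N_m,
\]
i.e.\ the threshold has an additional $\tilde{\order}(\sqrt{T}/N_m)$ term. One then \emph{assumes} $C^r + HC^p \le H^2\sqrt{|\calS||\calA|\ln|\Pi_{1/T}|\,T}$ throughout the main analysis; under this assumption the extra term dominates the corruption bias in every epoch and the invariant $\mathring{\pi}\in\Pi^m$ holds. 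Consequently surviving gaps satisfy $\max_{\pi\in\Pi^m}\Delta_\pi \le \tilde{\order}(1/\sqrt{N_m} + \sqrt{T}/N_m)$, not $\order(\epsilon_{m-1})$; summing $N_m$ times this over $\log T$ epochs still gives $\tilde{\order}(\sqrt{T})$.

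\textbf{The $(C^r)^2$ term does not arise from discarded sub-epochs.} Theorem~\ref{them (EstAll sec)：sample complexity} certifies only that $C^p_{E_m^k}$ is large when \textsc{EstAll} fails to finish; reward corruption cannot cause a failure. Hence the repeat term is bounded by $H\sum_{m,k<\Gamma_m} N_m \le \tilde{\order}(H|\calS||\calA|)\sum_{m,k}(C^p_{m,k})^2 \le \tilde{\order}(H|\calS||\calA|(C^p)^2)$, with no $C^r$ dependence. The $(C^r)^2/(|\calS||\calA|H^3)$ term comes instead from the complementary case $C^r + HC^p > H^2\sqrt{|\calS||\calA|\ln|\Pi_{1/T}|\,T}$, where one simply invokes the trivial bound $\Reg \le HT \le (C^r+HC^p)^2/(H^3|\calS||\calA|\ln|\Pi_{1/T}|)$ and reads off the two quadratic terms. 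Your part~(c) mechanism does not exist in the paper's argument; once the enlarged threshold is in place, undetected corruption is already absorbed into the gap bound above.
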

Compared with Theorem~\ref{them (BARBAR-RL sec)：regret for BARBAR-RL}, Theorem~\ref{them：regret for PolicyElim} suffers an additional $\frac{( C^r)^2}{H^3|\calS||\calA|}$ regret and also has additional $H^2|\calS||\calA|$ multiplicative dependence on $(c^p)^2$ terms, to account for the cheated adversary.


\subsection{Analysis Sketch for Theorem~\ref{them (BARBAR-RL sec)：regret for BARBAR-RL}}

We give a proof sketch for Theorem~\ref{them (BARBAR-RL sec)：regret for BARBAR-RL} here and postpone the details to Appendix~\ref{sec: app-mainAlgo1}.  
\paragraph{Step 1:} Let $\Gamma_m$ denote the number of sub-epochs in epoch $m$. Firstly, appealing to standard concentration inequalities and the random policy sampling strategy, we show that the following events hold with high probability. Note that to aid the exposition, the events defined below are somewhat different than the ones defined in the Appendix.

\begin{align*}
&\calE_{est}:= \\
&\left\{ \forall m,\pi:
\begin{array}{l}
    |\hat{r}^m(\pi) - V_*^\pi|/4\\
    \leq  \lambda_1\lambda_2(C_{E_m^{\Gamma_m}}^r + C_{E_m^{\Gamma_m}}^p)/N_m + \hat{\Delta}_\pi^{m-1}/64
  \end{array}\right\}\\
& \calE_{unfinished}: =  \\
&\left\{ \forall  m, \forall k \in [\Gamma_m-1]:
\begin{array}{l}
    C_{E_m^{k}}^p\\
    \geq \sqrt{\frac{\ln(10T|\Pi_{1/T}|/\delta_{overall})}{16\lambda_1\lambda_2}N_m}
  \end{array}\right\}\\
\end{align*}

Here $\calE_{est}$ suggests that, at the end of epoch $m$, we can have $\tilde{\order}\left(\hat{\Delta}_\pi^{m-1} + (C_{E_m^{\Gamma_m}}^p+C_{E_m^{\Gamma_m}}^r)\epsilon_m^2\right)$-close estimation on every policy. And $\calE_{unfinished}$ suggests that for each unfinished sub-epochs $E_m^k$, its length can always be upper bounded by $\tilde{\order}\left( C_{E_m^{k}}^p\right)$.

\paragraph{Step 2:} Now we can decompose the regret into 
\begin{align*}
    \Reg
    &\leq \underbrace{\frac{3}{2} \sum_{m=1}^M \sum_{j \in S_m}  \mathring{\Delta}_j^m n_j^{m,\Gamma_m}}_\textsc{non-repeat term} \\
        & \quad + \underbrace{\frac{3}{2} \sum_{m=1}^M \sum_{k=1}^{\Gamma_m -1}\sum_{j \in S_m} \mathring{\Delta}_j^m n_j^{m,k} }_\textsc{repeat term} \\
        & \quad + \order(\text{Low order terms induced by $\epsilon$-net of policies})
\end{align*}
where $\mathring{\Delta}_j^m = \max_{\pi \in \Pi_j^m} \left(\max_{\mathring{\pi} \in \Pi_{1/T}} V_*^{\mathring{\pi}} - V_*^\pi \right)$. 
The non-repeat term represents the sub-epochs where the sub-algorithms complete and estimate all the policy values successfully. Given $\calE_{est}$, by using similar techniques as in \citet{gupta2019better}, we have 
$\mathring{\Delta}_j^m \leq \order(\epsilon_j) +
\order\left(\lambda_1\lambda_2\sum_{s=1}^{m-1}\frac{\left(HC_{E_s^{\Gamma_s}}^p+C_{E_s^{\Gamma_s}}^r\right)}{16^{m-s-1}N_s} \right)$, 
where the second term is a discounted corruption rate. It matches our intuition that the influence from early corrupted estimations will decay as we doubling the epoch.
Thus we can bound the non-repeat term by $\tilde{\order}(\sqrt{T} + C^r+C^p)$. The repeat term represents the regret from sub-epochs when the sub-algorithms restart. Fortunately, according to $\calE_{unfinished}$, this only occurs when the corruption level is beyond some threshold. In this case, intuitively, discarding the data collected in the sub-epoch won't hurt too much since the estimation itself is not accurate. Thus the repeat term can by upper bounded by $\tilde{\order}(C^p(C^r+C^p))$.

\section{The Sub-algorithm and the Results}
\label{sec: subAlgo}

In this section, we give a detailed description for a reward-free exploration algorithm \textsc{EstAll}. As stated in the previous section, we use this algorithm as a black-box sub-algorithm and any improvements in this sub-algorithm would improve the overall regret bounds as well. In a sub-epoch $E_m^k$, we run a set of independent copies in parallel, each denoted as $\textsc{EstAll}_j^m$. As described in $\textsc{EstAll}_j^m.\textsc{continue}$, for each copy $\textsc{EstAll}_j^m$, we will run it offline until some policy needs to interact with the environment. In this case, we will suspend the algorithm and make the policy awaiting hold until the next $\textsc{EstAll}_j^m.\textsc{Continue}$ has been called. Then we will again continue running $\textsc{EstAll}_j^m$ offline and repeat the process above until finished. 

\subsection{Algorithms}
\begin{algorithm}[] 
\caption{ESTALL}
\begin{algorithmic}[1] 
\label{algo: EstAll}
\STATE \textbf{Input:} target estimation error $\epsilon_{est}$, confidence parameter $\delta_{est}$, number of simulate trajectories $F_{est}\geq \frac{8|\calS|^2H^4 |\calA|^2\log(2|\Pi_{est}|/\delta_{est})}{\epsilon_{est}^2}$ and policy set $\Pi_{est}$.
\STATE Set $\tau = 6$, which is a parameter related to \textsc{Rollout}
\STATE Initialize empty buffers $\calD_{s,a}$ for all $(s,a) \in \calS \times \calA$ and let $\calD = \{\calD_{s,a}\}_{(s,a) \in \calS\times\calA}$. 
\STATE Initialize an empty exploration policy set $\Pi_\calD$
\FOR{$\pi \in \Pi$}
    \STATE $\{z_i^\pi\}_{i \in [F]} \leftarrow \textsc{simulate}(\pi,\calD,F_{est})$
    \IF{ $\exists(s,a), \sum_{i=1}^{F_{est}} \one[ z_i^\pi \text{is \textit{Fail} at } (s,a)] \geq \frac{\tau\epsilon_{est}}{|\calS||\calA|H}F_{est}$ } 
        \STATE $\{z_i^\pi\}_{i \in [F_{est}]}, \calD \leftarrow \textsc{rollout}(\pi,\tau,\calD,F_{est})$
        \STATE $\Pi_\calD \leftarrow \Pi_\calD \bigcup \{\pi\}$ \\
        $\triangleright$ Note that $\Pi_\calD$ is not used in actual algorithm implement, but just for analysis convenience
    \ENDIF
\ENDFOR
\FOR{ each trajectory $z = (s_1,a_1,r_1),(s_2,a_2,r_2),\ldots  $ in $\{z_i^\pi \}_{(i,\pi) \in |F_{est}|\times \Pi_{est}}$} \label{line: update exploration policy set}
    \STATE Calculate 
    \begin{align*}
        r(z) = 
        \Bigg\{
        \begin{array}{r@{}l}
        0 \quad \quad & z \text{ is Fail} \\
        \sum_{h=1}^H r_h \quad \quad& \text{ otherwise}
        \end{array}
    \end{align*}
\ENDFOR
\STATE Calculate $\hat{r}(\pi) = \frac{1}{F_{est}} \sum_{i=1}^{F_{est}} r(z_i^\pi)$ for all $\pi \in \Pi$
\STATE \textbf{return} $\{\hat{r}(\pi)\}_{\pi \in \Pi}$ 
\end{algorithmic}
\end{algorithm}

This algorithm follows the same idea as one in \citet{wang2020long}. That is, we adaptively build an exploration policy set $\Pi_\calD$ and collect samples by only implementing the policies inside $\Pi_\calD$, as shown in \textsc{rollout} (Algorithm~\ref{algo: ROLLOUT}). Then we are able to evaluate many policies simultaneously on the collected data, as shown in \textsc{simulate} (Algorithm~\ref{algo: SIMULATE}). The original version in \citet{wang2020long}, however, requires $\order(poly(|\calS||\calA|H)\log(\Pi)/\epsilon_{est}^3)$ to get a uniform $\epsilon_{est}$-close estimation on each policy values. This is because the original algorithm allocates $\order(poly(H)\log(\Pi)/\epsilon_{est}^2)$  independent sub-algorithms called \textsc{simone}, each with sample complexity $\order(poly(|\calS||\calA|H)/\epsilon_{est})$, and all the data collected in each \textsc{simone} will only be used to simulate one corresponding trajectory of any $\pi$. 

 We improve this algorithm in terms of $\epsilon_{est}$ by the fact that, due to the properties of an MDP, data collected in the one trajectory can be used to simulate different independent trajectories of any $\pi$. Therefore, instead of updating exploration policy set $\Pi_\calD$ based on the failure number on a whole trajectory, we do updates based on the failure number on each state-action pairs. (Line 7 in Algorithm~\ref{algo: EstAll}) Then we show that the size of $\Pi_\calD$ is at most $\tilde{\order}(poly(|\calS||\calA|))$ and each $\pi \in \Pi_\calD$ will interact with environment $\tilde{\order}\left(poly(|\calS||\calA|H)\log(1/\delta_{est})/\epsilon_{est}^2\right)$ times. 
 
 Here $F_{est}$ is the number of trajectories we at least need to simulate each $\pi \in \Pi$ in order to get a desired estimation. Therefore, we need to rollout each $\pi \in \Pi_\calD$ at least $F_{est}$ times. However, while this number is sufficient for simulating $\pi \in \Pi_\calD$ enough times, it does not account for the fact that other policies in $\Pi_\calD$ may need additional data to simulate on. As a consequence we need to repeat the $F_{est}$ rollouts $\tau$ times to ensure we have enough data ($\tau=6$ suffices).

\begin{algorithm}[] 
\caption{SIMULATE($\pi,\calD,F$)}
\begin{algorithmic}[1] 
\label{algo: SIMULATE}
\FOR{ $(s,a) \in \calS \times \calA$}
    \STATE Mark all elements in $\calD_{s,a}$ as unused,
\ENDFOR
\FOR{$h \in [H]$}
    \FOR{simulated trajectory $i \in [F]$}
        \IF{all elements in $\calD_{S_h,\pi(s_h)}$ are marked as used}
            \STATE Mark \textit{Fail} at $s_h$ for $i$-th trajectory simulation of $\pi$, denote as $Fail(s_h,\pi_h(s_h),i)$
        \ELSE
            \STATE Set ($s_{h+1}^i,r_{h}^i$) to be the first unused element in $\calD_{S_h,\pi_h(s_h)}$ and mark it as used
        \ENDIF
    \ENDFOR
\ENDFOR
\STATE \textbf{return} \\
$(s_1^i,\pi(s_1)^i,r_1^i),(s_2^i,\pi(s_2)^i,r_2^i), \ldots,(s_H^i,\pi(s_H)^i,r_H^i)$  \\
or\\
$(s_1^i,\pi(s_1)^i,r_1^i),(s_2^i,\pi(s_2)^i,r_2^i), \ldots,$\\
$(Fail(s_h,\pi(s_h),i))$, \\
for all simulated trajectory $i \in [F]$.
\end{algorithmic}
\end{algorithm}

\begin{algorithm}[] 
\caption{ROLLOUT($\pi$,$\tau$,$\calD$,$F$)}
\begin{algorithmic}[1] 
\label{algo: ROLLOUT}
\FOR{ $j \in [F \tau]$}
    \STATE Sample the $j$-th trajectory for $\pi$ and collect $H$ samples denoted     as $z_i^\pi = (s_1,a_1,r_1),(s_2,s_2,r_2),\ldots,(s_H,a_H,r_H)$.
    \FOR{$h \in [H]$}
        \STATE Update $\calD_{s_h,a_h} \leftarrow \calD_{s_h,a_h} \cup \{(s_{h+1},r_{h})\}$
    \ENDFOR
\ENDFOR
\STATE \textbf{return} updated $\calD$ and the uniformly chosen $F$ trajectories $\{z_j^\pi\}_{j \in [F]}\}$.
\end{algorithmic}
\end{algorithm}

\subsection{Results and Sketch Analysis}
\begin{theorem}[Sample complexity]
\label{them (EstAll sec)：sample complexity}
     Suppose $F_{est} \geq \frac{8|\calS|^2H^4 |\calA|^2\log(2|\Pi_{est}|/\delta_{est})}{\epsilon_{est}^2}$ and $\tau \geq 6$.
     If the $C_{est}^p \leq \frac{\epsilon_{est}F_{est}}{2|\calS||\calA|H^2}$, then with probability at least $1-\delta_{est}$, the number of (non-simulated) roll-outs in the environment is at most 
     \begin{align*}
         |\calS||\calA|F_{est}\tau log(H|\calS||\calA|/\epsilon_{est})
     \end{align*}
     times.
     This also implies that if the algorithm interacts more than the above number of times, then with probability at least $1- \delta_{est}$, $C_{est}^p > \frac{\epsilon_{est}F_{est}}{2|\calS||\calA|H^2}$.
\end{theorem}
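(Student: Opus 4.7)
The total number of real-environment roll-outs equals $F_{est}\tau$ times $|\Pi_\calD|$, so it suffices to show
\[
|\Pi_\calD|\;\leq\;|\calS||\calA|\,\log\!\bigl(H|\calS||\calA|/\epsilon_{est}\bigr).
\]
The plan is to charge each $\pi$ that enters $\Pi_\calD$ to the state-action pair $(s^*,a^*)$ whose simulation-failure count crossed the threshold $\frac{\tau\epsilon_{est}}{|\calS||\calA|H}F_{est}$ and triggered the inclusion, and then show by a buffer-doubling potential that any fixed $(s^*,a^*)$ is charged $O(\log(H|\calS||\calA|/\epsilon_{est}))$ times. The second (``if the algorithm interacts more than \ldots'') clause is then immediate by contraposition: if the roll-out count exceeds $|\calS||\calA|F_{est}\tau\log(\cdot)$, a high-probability concentration event below must have failed, which in turn forces $C_{est}^p>\epsilon_{est}F_{est}/(2|\calS||\calA|H^2)$.

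\textbf{From failures to simulated visit frequency.} Fix a charge to $(s^*,a^*)$ and let $N:=|\calD_{s^*,a^*}|$ just before the triggered roll-out. \textsc{simulate} draws from $\calD_{s^*,a^*}$ one element at a time and marks \textit{Fail} exactly when the buffer is exhausted, so at most $N$ successful draws are made across all $F_{est}$ simulated trajectories. Defining $V_\pi^{sim}(s^*,a^*)$ to be the expected per-trajectory visit count of $\pi$ under the buffer's empirical transition distribution (equivalently, an ideal buffer-driven simulation with unlimited supply), the actual simulation demand---successful draws plus failures---is upper bounded by $V_\pi^{sim}(s^*,a^*)\cdot F_{est}$, giving
\[
V_\pi^{sim}(s^*,a^*)\cdot F_{est}\;\geq\;F_\pi(s^*,a^*)+N\;\geq\;\alpha F_{est}+N,\qquad \alpha:=\frac{\tau\epsilon_{est}}{|\calS||\calA|H}.
\]

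\textbf{From simulated to real, and doubling.} Since $\calD$ was populated by roll-outs of earlier policies in possibly-corrupted MDPs, I would next relate $V_\pi^{sim}$ to $V_\pi^{real}$ (the visit frequency of $\pi$ in the current, possibly-corrupted MDP) via the standard simulation lemma, accumulating per-step total-variation discrepancy over $H$ steps together with a Hoeffding bound over the at most $F_{est}$ stored samples at each $(s,a)$. The budget $C_{est}^p\leq\epsilon_{est}F_{est}/(2|\calS||\calA|H^2)$ is tuned precisely so that the resulting error is dominated by $\alpha/2$, giving $V_\pi^{real}(s^*,a^*)\geq\tfrac12 V_\pi^{sim}(s^*,a^*)$ with high probability. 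A second Hoeffding bound on the $F_{est}\tau$ fresh roll-outs then shows $\calD_{s^*,a^*}$ acquires at least $F_{est}\tau\cdot V_\pi^{real}(s^*,a^*)\geq\frac{\tau}{2}(\alpha F_{est}+N)$ new entries, and using $\tau\geq 6$ the new buffer is at least $4N$ (strict quadrupling).

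\textbf{Log potential and main obstacle.} Since $|\calD_{s^*,a^*}|\leq HF_{est}\tau|\Pi_\calD|$ trivially, strict quadrupling at each charge to $(s^*,a^*)$ allows at most $O(\log(H|\calS||\calA||\Pi_\calD|/\epsilon_{est}))$ charges per pair; combined with $|\Pi_\calD|\leq|\calS||\calA|\cdot(\text{charges per pair})$ and the mild self-consistency solve, this yields the target $|\Pi_\calD|\leq|\calS||\calA|\log(H|\calS||\calA|/\epsilon_{est})$. A union bound over the $|\calS||\calA|\log(\cdot)$ many concentration events (each of failure probability on the order of $\delta_{est}/(|\Pi_{est}||\calS||\calA|\log)$) delivers the overall $1-\delta_{est}$ guarantee. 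The main technical obstacle is the simulated-to-real step: the buffer's implicit kernel at each $(s,a)$ is a mixture of empirical distributions from many prior roll-outs under possibly-distinct corrupted MDPs, so the simulation-lemma argument must simultaneously track the TV distance (i) between the buffer's empirical and its underlying average transition and (ii) between that average and the current MDP's transition, each then propagated over $H$ simulation steps. The two $H$ factors---one from value/visit propagation, one from the per-trajectory length---are precisely what pin down the $H^2$ in the denominator of the corruption budget.
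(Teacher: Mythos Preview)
Your overall skeleton---charge each addition to $\Pi_\calD$ to the state--action pair $(s^*,a^*)$ that triggered it, then bound the number of charges per pair by a geometric/doubling argument---is the same as the paper's. The difference is the potential being doubled: the paper does not track the buffer size $|\calD_{s,a}|$ but rather $\mu_{\max}^{\Pi_\calD}(s,a):=\max_{\pi'\in\Pi_\calD}\mu^{\pi'}(s,a)$, where $\mu^{\pi'}(s,a)$ is the expected number of visits of $\pi'$ to $(s,a)$ under the \emph{nominal} MDP $\calM^*$. The key step is by contrapositive: if $\mu^\pi(s,a)\leq 2\mu_{\max}^{\Pi_\calD}(s,a)$ (with $\mu_{\max}^{\Pi_\calD}(s,a)\geq\epsilon_{est}/(|\calS||\calA|H)$), then with high probability the failure count at $(s,a)$ stays below the threshold and $\pi$ does \emph{not} trigger; hence any trigger forces $\mu^\pi>2\mu_{\max}^{\Pi_\calD}$, so the potential doubles from $\epsilon_{est}/(|\calS||\calA|H)$ up to its trivial cap $H$.

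The gap in your plan is the object $V_\pi^{sim}$. You define it as the expected visit count ``under the buffer's empirical transition distribution,'' but there is no such single kernel: $\calD_{s,a}$ is a finite ordered list of samples drawn from possibly many different corrupted MDPs, and \textsc{simulate} consumes them without replacement in list order. Your inequality ``demand $\leq V_\pi^{sim}\cdot F_{est}$'' is therefore not well-posed as written (and would in any case be a concentration statement, not a deterministic one). The paper sidesteps this detour entirely. It applies Hoeffding directly to (i) the random number of visits to $(s,a)$ during the $F_{est}\tau$ rollouts of the single policy $\pi''=\arg\max_{\pi'\in\Pi_\calD}\mu^{\pi'}(s,a)$, giving a lower bound on $|\calD_{s,a}|$, and (ii) the random simulation demand $\sum_i Y_i$ of $\pi$. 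The \emph{expectations} in both cases are then related to the nominal quantities $\mu^{\pi''}(s,a)$ and $\mu^\pi(s,a)$ through one lemma bounding the effect of corruption on visit probabilities, which after summing over rollouts yields an error at most $HC_{est}^p/F_{est}\leq \epsilon_{est}/(2|\calS||\calA|H)$. That is precisely where the budget $C_{est}^p\leq \epsilon_{est}F_{est}/(2|\calS||\calA|H^2)$ enters---once, not via a two-stage simulation-lemma comparison (buffer $\to$ average $\to$ current) as in your sketch. Finally, the paper's union bound is taken over all $\pi\in\Pi_{est}$ (each concentration event is stated for a fixed $\pi$ with failure probability $\delta_{est}/|\Pi_{est}|$), not over the charges; your per-charge events depend on the data-dependent identity of the triggering $\pi$, so a union over $\Pi_{est}$ is unavoidable anyway.
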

\paragraph{Proof Sketch: } Here we provide a proof sketch for the non-corrupted setting and postpone the details including how to deal with $C_{est}^p \leq \frac{\epsilon_{est}F_{est}}{2|\calS||\calA|H^2}$ into Appendix~\ref{sec: app-EstAll}. Notice that, every time the condition in Line 7 in Algorithm~\ref{algo: EstAll} is satisfied, we will add the corresponding $\pi$ into the exploration set $\Pi_\calD$ and rollout $\pi$ in the environment $F_{est}\tau$ times. So the key is to show that, without the presence of corruptions, the number of times the condition in Line 7 in Algorithm~\ref{algo: EstAll} has been satisfied scales like $\order(\log|\Pi_{est}|)$ and not $\order(|\Pi_{est}|)$. 

Define $f^\pi(s,a)$ as the random variable describing the total number of times a single trajectory induced by $\pi$ visits $(s,a)$ under the MDP $\calM^*$. 
If $\sum_{i=1}^{F_{est}} \one[ z_i^\pi \text{is \textit{Fail} at } (s,a)] \geq \frac{\tau\epsilon_{est}}{|\calS||\calA|H}F_{est}$ for some fixed $(s,a)$ and $\pi$, then there are only two cases. In case 1, $|\calD_{s,a}| = 0$ and $\E[f^{\pi}(s,a)] \geq \Omega\left(\frac{\epsilon_{est}}{|\calS||\calA|H}F_{est}\right)$. So calling $\textsc{rollout}(\pi,\tau,\calD,F_{est})$ will make  $|\calD_{s,a}|$ increase to at least $o\left(\frac{\epsilon_{est}}{|\calS||\calA|H}F_{est}\right)$ with high probability. In case 2, $|\calD_{s,a}|$ is roughly smaller than $2\E[f^{\pi}(s,a)]F_{est}$. So calling $\textsc{rollout}(\pi,\tau,\calD,F_{est})$ will make  $|\calD_{s,a}|$ double with high probability. (Notice here we say ``roughly" because in the actual proof, we consider some lower bound of $|\calD_{s,a}|$ instead of $|\calD_{s,a}|$ directly.) 
Thus, $|\calD_{s,a}|$ starting in the worst case at about $\frac{\epsilon_{est}}{|\calS||\calA|H}F_{est}$ will eventually double until it reaches $H F_{est}$, at which time the simulation will never fail. 
Therefore, the total number of polices added into $|\Pi_\calD|$ due to the failure at $(s,a)$ is about $\log_2( (H F_{est}) / (\frac{\epsilon_{est}}{|\calS||\calA|H}F_{est}) = \log_2(H^2|\calS||\calA|/\epsilon_{est} ) )$. Noting that there are $|\calS||\calA|$ number of state-action pairs, and $F_{est}\tau$ trajectories are taken per added policy, we conclude the proof. 
\newline
\begin{theorem}[Estimation correctness]
\label{them (EstAll sec)：value est}
     Suppose $F_{est} \geq \frac{8|\calS|^2H^4 |\calA|^2\log(2|\Pi_{est}|/\delta_{est})}{\epsilon_{est}^2}$ and $\tau \geq 6$. Then for all $\pi \in \Pi$, with probability at least $1 - \delta_{est}$,
     \begin{align*}
          \big| \hat{r}(\pi) - V^{\pi}(s_1) \big| \leq (1+\tau)\epsilon_{est} + (HC_{est}^p+C_{est}^r)/F_{est}
     \end{align*}
\end{theorem}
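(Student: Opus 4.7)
The plan is to decompose $|\hat{r}(\pi) - V^\pi|$ into three contributions: (i) the error from simulated trajectories that fail, (ii) the statistical concentration of $\hat{r}(\pi)$ around its conditional expectation given the collected data $\calD$, and (iii) the bias between that conditional expectation and $V^\pi$ under the nominal MDP $\calM^*$. For (i), I would condition on the successful termination of Algorithm~\ref{algo: EstAll}: upon completion, for every $\pi\in\Pi_{est}$ and every $(s,a)$ the check at Line~7 is \emph{not} triggered, so at most $\tfrac{\tau\epsilon_{est}}{|\calS||\calA|H}F_{est}$ of $\pi$'s simulated trajectories fail at any fixed $(s,a)$. Since each trajectory fails at most once, summing over $(s,a)$ bounds the failing fraction by $\tau\epsilon_{est}/H$, and since each failed trajectory contributes $0$ instead of a reward in $[0,H]$, this accounts for the $\tau\epsilon_{est}$ summand.

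For (ii), I would split on how $\{z_i^\pi\}$ was produced. When Line~7 is not triggered the trajectories are \textsc{simulate}d from $\calD$; conditional on $\calD$, each non-failing transition is drawn uniformly without replacement from $\calD_{s_h,\pi_h(s_h)}$ whose marginal is the empirical $\hat P(\cdot|s_h,a_h)$, with a without-replacement correction that is negligible once $|\calD_{s,a}|$ is much larger than the number of simulated visits of $(s,a)$ by $\pi$ (which is guaranteed by the geometric-growth argument underlying Theorem~\ref{them (EstAll sec)：sample complexity}). Hence $\E[\hat r(\pi)\mid\calD]\approx V^{\hat\calM,\pi}$ where $\hat\calM$ uses $(\hat P,\hat R)$. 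When Line~7 is triggered, $\{z_i^\pi\}$ are honest rollouts and $\E[r(z_i^\pi)\mid \calM_{t_i}]=V^{\calM_{t_i},\pi}$. In either sub-case, a Hoeffding bound on $F_{est}$ rewards bounded in $[0,H]$, union-bounded over $\Pi_{est}$ and absorbed into the $\log(2|\Pi_{est}|/\delta_{est})$ factor in the assumed lower bound on $F_{est}$, controls the concentration by $\epsilon_{est}/2$.

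For (iii), I would use the simulation lemma to bound the bias by $H\,\E_{\pi,\calM^*}[\sum_h \|\hat P - P^*\|_1 + |\hat R - R^*|]$ and write $\hat P - P^* = (\hat P - \bar P) + (\bar P - P^*)$, where $\bar P(\cdot|s,a)$ is the average of $P_{t_k}(\cdot|s,a)$ over the episodes contributing to $\calD_{s,a}$. Hoeffding gives $\|\hat P - \bar P\|_1 = O(\sqrt{|\calS|\log(\cdot)/|\calD_{s,a}|})$, while $\|\bar P - P^*\|_1 \le \sum_k c^p_{t_k}(s,a)/|\calD_{s,a}|$. Using the lower bound $|\calD_{s,a}|=\Omega(\mu^\pi(s,a) F_{est})$ for every $(s,a)$ that $\pi$ visits non-negligibly (pairs visited too rarely only contribute to the failure term already counted) and Cauchy--Schwarz over $(s,a)$ reduces the statistical transition error per step to $\epsilon_{est}/(2H)$, summing to $\epsilon_{est}/2$ across $h$. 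The corruption part telescopes as $\sum_{s,a,h}\mu^\pi(s,a)\sum_k c^p_{t_k}(s,a)/|\calD_{s,a}| \le HC^p_{est}/F_{est}$, since each corrupted episode contributes $c^p_t$ at most $H$ times along a trajectory, and analogously the reward corruption contributes $C^r_{est}/F_{est}$. The main obstacle will be this last step: establishing the $|\calD_{s,a}|=\Omega(\mu^\pi(s,a)F_{est})$ inequality uniformly in $\pi$ from the Line~7 guarantee, cleanly separating heavily-visited from rarely-visited pairs, and executing the Cauchy--Schwarz summation so that the corruption bound emerges with only a single factor of $H$ in the numerator and $F_{est}$ in the denominator.
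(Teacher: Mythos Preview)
Your handling of the failure contribution in (i) matches the paper's: because Line~7 is not triggered for simulated policies, at most $\tau\epsilon_{est}F_{est}/H$ of the $F_{est}$ trajectories fail, contributing at most $\tau\epsilon_{est}$.

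Where you diverge is (ii)--(iii). You route everything through an empirical MDP $\hat\calM$: concentrate $\hat r(\pi)$ around $V^{\hat\calM,\pi}$ conditionally on $\calD$, then control $|V^{\hat\calM,\pi}-V^\pi|$ via the simulation lemma, a split $\hat P-P^*=(\hat P-\bar P)+(\bar P-P^*)$, per--state-action concentration of $\hat P$, a lower bound $|\calD_{s,a}|=\Omega(\mu^\pi(s,a)F_{est})$, and a Cauchy--Schwarz sum. The paper bypasses all of this with one structural observation: regardless of whether $\pi\in\Pi_\calD$ or not, each $r(z_i^\pi)$ is the cumulative reward of a \emph{bona fide} trajectory in some (possibly corrupted, possibly nonstationary) MDP $M_i$, and the $F_{est}$ trajectories are mutually independent because \textsc{Simulate} consumes disjoint buffer entries. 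An unconditional Hoeffding on the bounded variables $r(z_i^\pi)\in[0,H]$ therefore concentrates $\hat r(\pi)$ around $\frac{1}{F_{est}}\sum_i V_i^\pi$ with deviation $\epsilon_{est}$, and $\bigl|\frac{1}{F_{est}}\sum_i V_i^\pi-V^\pi\bigr|$ is bounded directly by Lemma~\ref{lem (EstAll sec): transition corruption effect value function} to give $(HC_{est}^p+C_{est}^r)/F_{est}$. No $\hat P$, no buffer-size lower bounds, no Cauchy--Schwarz.

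Beyond the added complexity, your route has two concrete gaps. First, the conditional Hoeffding in (ii) does not apply as written: \textsc{Simulate} reads ``the first unused element'' in order, not a uniform draw, so conditional on $\calD$ the simulated trajectories are \emph{deterministic} and there is nothing to concentrate; any Hoeffding must be the unconditional one on the $r(z_i^\pi)$, which is exactly the paper's step. Second, the buffer lower bound $|\calD_{s,a}|=\Omega(\mu^\pi(s,a)F_{est})$ is not delivered by the geometric-growth argument of Theorem~\ref{them (EstAll sec)：sample complexity}: that argument is carried out under the extra hypothesis $C_{est}^p\le \epsilon_{est}F_{est}/(2|\calS||\calA|H^2)$, which the present theorem does \emph{not} assume. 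When corruption is large the simulated occupancy can drift arbitrarily far from the nominal $\mu^\pi$, so Line~7 only tells you $|\calD_{s,a}|$ is close to the \emph{empirical} visit count, not to $\mu^\pi(s,a)F_{est}$; patching this forces you to run the simulation lemma under the corrupted occupancy and then transfer back, adding yet another layer. The paper's trajectory-level argument never touches $|\calD_{s,a}|$ and so avoids the issue entirely.
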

\paragraph{Proof Sketch: } We provide a proof sketch here and postpone the details until Appendix~\ref{sec: app-EstAll}.
By definition, $\hat{r}(\pi) = \frac{1}{F_{est}}\sum_{i=1}^{F_{est}} r(z_i^\pi)$ and  $\{r(z_i^\pi) \}_{i=1}^{F_{est}}$ is a sequence of independent random variables. We denote their expected values $\E[r(z_i^\pi)]$ as $\{V^\pi_i \}_{i=1}^{F_{est}}$. Here $V_i^\pi$ is not a true value function but an ``average value function'' whose rewards and transition functions are the average of rewards and transition functions generated by the MDPs under different times (so some are corrupted). 

Now, for those $\pi \in \Pi_\calD$, we can use Hoeffding's inequality to directly bound $\big| \hat{r}(\pi) - \frac{1}{F_{est}}\sum_{i=1}^{F_{est}} V^\pi_i \big|$.
For those $\pi \notin \Pi_\calD$, if none of them are failed, we can again use Hoeffding's inequality to directly bound $\big| \hat{r}(\pi) - \frac{1}{F_{est}}\sum_{i=1}^{F_{est}} V^\pi_i \big|$. Otherwise, because the policy \textit{fails} at most $\epsilon_{est} \tau F/H|\calS||\calA|$ times at each $(s,a)$ according to Line 7 in Algorithm~\ref{algo: EstAll}, there will be at most $\tau \epsilon_{est} F_{est}/H$ trajectories with \textit{fails} when  computing $\hat{r}(\pi)$. Thus, $\hat{r}(\pi)$ is changed at most by $\tau \epsilon_{est}$ from the no-failure case and we get the following,
\begin{align*}
    \prob\left[ \big| \hat{r}(\pi) - \frac{\sum_{i=1}^{F_{est}}V^\pi_i}{F_{est}}  \big| \geq (1+\tau)\epsilon_{est}\right]
    \leq \delta_{est}/2|\Pi_{est}|
\end{align*}
Now we can decompose out target result into,
\begin{align*}
    \big| \hat{r}(\pi) - V^{\pi} \big|
    \leq  \big| \hat{r}(\pi) -\frac{\sum_{i=1}^{F_{est}} V^\pi_i}{F_{est}} \big| + \big|\frac{\sum_{i=1}^{F_{est}} V^\pi_i}{F_{est}} - V^\pi \big|
\end{align*}
The first term can be upper bounded by the previous results. The second term can be upper bounded by the total corruptions. 
Finally, by taking union bound over all policy in $\Pi_{est}$, we get our target result.

\section{Discussion}
Since our bound in the non-cheated setting scales like $\order((C^p)^2)$, one natural open question is to obtain an $\order(C^p)$ regret bound. 
Second, the computational complexity of our algorithms scale with $|\Pi|$ due to the reward-free exploration sub-algorithm we use. Thus, finding an efficient algorithm is also an interesting problem. 
Finally, our algorithm is not instance-dependent, so whether we can achieve some regret of the form $\tilde{\order}\left(\text{GapComplexity} + (C^p+1)(C^p+C^r)\right)$ also remains open.

\bibliography{ref}
\bibliographystyle{icml2021}


\onecolumn
\appendix


\section{Organization of appendix}
In Appendix~\ref{sec: app-mainAlgo1}, we give detailed proofs for Theorem~\ref{them (BARBAR-RL sec)：regret for BARBAR-RL}, which is the result for the non-cheated setting. In Appendix~\ref{sec: app-mainAlgo2}, we describe the algorithm omitted in the main paper for the cheated setting as well as its proofs. Then in Appendix~\ref{sec: app-EstAll}, we give detailed proofs for Theorem~\ref{them (EstAll sec)：sample complexity} and \ref{them (EstAll sec)：value est}, which are the results for the reward-free exploration sub-algorithm. Finally, in Appendix~\ref{sec: app-RFalgo}, we give a justification on why efficient reward-free exploration methods proposed in \citet{kaufmann2020adaptive} and \citet{menard2020fast} are difficult to be used as sub-algorithms here.

\section{Regret Analysis for Theorem~\ref{them (BARBAR-RL sec)：regret for BARBAR-RL} (the non-cheated case)}
\label{sec: app-mainAlgo1}

\subsection{Notations}

We use $E_m$ to denote the $m$-th epoch. Because the epoch will be restarted when there is an unfinished \textsc{EstAll} as shown in line 14 and 15, each $E_m$ can be decomposed into one or more sub-epochs, denoted as $E_m^1, E_m^2, \ldots, E_m^{\Gamma_m}$, each with length $N_m$. In the last sub-epoch, either all the \textsc{EstAll} are finished or the whole algorithm ends. 

For convenience, we also define the following notations
\begin{itemize}
    \item $\mathring{\pi} = \argmax_{\pi \in \Pi_{1/T}} V_*^\pi$, $\mathring{V} = V_*^{\mathring{\pi}}$ and $\mathring{\Delta}_{\pi} = \mathring{V} - V_*^\pi$,
    \item $\pi_*^m =  \argmax_{\pi\in\Pi_{1/T}} \{\hat{r}_m(\pi) - \frac{1}{16}\hat{\Delta}_\pi^{m-1}\}$ 
    \item $\tilde{n}_j^{m,k}$ be the real number of times that policy set $\Pi_j^m$ interacting with environment inside $E_m^k$
    \item $\rho_m = \sum_{s=1}^{m}\frac{8\lambda_1\lambda_2(HC_s^p+C_s^r)}{16^{m-s}N_s}$
    \item  $\mathring{\Delta}_j^m = \max_{\pi \in \Pi_j^m} \mathring{\Delta}_\pi$.
\end{itemize}

\subsection{High Probability Events}
We define the following events and show that these events occur with high probability. 
\begin{definition}
    Define an event $\calE_{overall}$ which implies that the actual length of all sub-algorithms is closed to their scheduled time 
    \begin{align}
    \label{event (BARBAR=RL): overall}
        \calE_{overall}: = \left\{\forall m, \forall k \in [\Gamma_m], \forall j \in [S_m]: \tilde{n}_j^{m,k} \in [\frac{1}{2}n_j^m, \frac{3}{2} n_j^m ] \right\}
    \end{align}
\end{definition}

\begin{definition}
    Define an event $\calE_{est}$, which implies that, for all the completed sub-epochs, we can estimated all the policy uniformly at the end of epoch 
    \begin{align*}
    \calE_{est}: = \left\{\forall m,\pi: |\hat{r}_m(\pi) - V_*^\pi| \leq 2\lambda_1\lambda_2\frac{2(HC_{m,k}^p+C_{m,k}^r)}{N_m} + \frac{1}{16}\hat{\Delta}_\pi^{m-1} \right\}
\end{align*}
\end{definition}

\begin{definition}
    Define an event $\calE_{unfinished}$, which implies that, for all sub-epochs with unfinished sub-algorithm, we always have large corruption as long as $\calE_{overall}$ holds,
    \begin{align*}
        \calE_{unfinished}: = \left\{\forall m, \forall k \in [\Gamma_m]: C_{m,k}^p 
        \geq \frac{1}{4}\sqrt{\frac{\ln(10T|\Pi_{1/T}|/\delta_{overall})}{\lambda_1\lambda_2}N_m} \right\} \text{ and } \calE_{overall} 
    \end{align*}
\end{definition}
Now we are going to prove that $\prob[\calE_{overall} \cap \calE_{est} \cap \calE_{unfinished}] \geq 1-\delta_{overall}$. We first show that with high probability, $\calE_{overall}$ holds,
\begin{lemma}[High Probability for $\calE_{overall}$]
\label{lem (BARBAR-RL sec): high prob for calE_{overall} }
        $\prob\left[ \calE_{overall} \right] \geq 1- \delta_{overall}/4 $
\end{lemma}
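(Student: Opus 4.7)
The plan is to recognise $\tilde n_j^{m,k}$ as a sum of $N_m$ independent Bernoulli indicators, apply a multiplicative Chernoff bound for each fixed triple $(m,k,j)$, and close with a union bound. The randomness in line~12 of Algorithm~\ref{alg:main_1} is entirely under the learner's control (the probabilities $q_j^t = n_j^m/N_m$ depend only on $m$, not on environment feedback), so all independence facts are essentially free.

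First, fix $m$, $k \in [\Gamma_m]$ and $j \in S_m$. For $t \in E_m^k$, let $X_t^{(j)} = \one\{\textsc{EstAll}_j^m.\textsc{continue}\text{ invoked at round }t\}$. By construction these are i.i.d. $\mathrm{Bernoulli}(n_j^m/N_m)$, so $\tilde n_j^{m,k} = \sum_{t \in E_m^k} X_t^{(j)}$ has mean exactly $n_j^m$. A standard multiplicative Chernoff bound with deviation parameter $1/2$ yields
\begin{align*}
    \prob\left[\tilde n_j^{m,k} \notin \left[\tfrac{1}{2}n_j^m,\ \tfrac{3}{2}n_j^m\right]\right]
        \leq 2\exp\!\left(-\tfrac{n_j^m}{12}\right).
\end{align*}

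Next, I would cash in the algorithmic setting of the parameters to make each failure probability tiny. Since $n_j^m = 2\lambda_1\lambda_2 F_j^m$ with $\lambda_1 \geq 6$, $F_j^m \geq 1$, and $\lambda_2 = 12\ln(8T/\delta_{overall})$, we have $n_j^m/12 \geq \lambda_2 = 12\ln(8T/\delta_{overall})$, hence $2\exp(-n_j^m/12) \leq 2(\delta_{overall}/(8T))^{12}$. Finally I would union bound over all triples $(m,k,j)$: the total number of sub-epochs $\sum_m \Gamma_m$ is at most $T$ because each sub-epoch has length at least one round, and $|S_m| \leq \log T$ because $j \in \{0,1,\ldots,\log T\}$ by line~3. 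Therefore at most $T\log T$ events need to be controlled, and the overall failure probability is at most $2T\log T\cdot(\delta_{overall}/(8T))^{12} \leq \delta_{overall}/4$ for any reasonable $T$.

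The only subtle point is that $\Gamma_m$ is itself a (data-dependent) random variable, so the union bound has to be over a random number of sub-epochs. I would handle this by applying the bound to every potential sub-epoch of index $(m,k)$ with $k \leq T$ — there are only $T$ such potential sub-epochs across all $m$, which is exactly the same count used above — so the argument is unaffected. No heavy machinery is needed; the lemma is essentially a routine Chernoff-plus-union-bound and the magnitude of $\lambda_2$ has been chosen precisely to absorb the union.
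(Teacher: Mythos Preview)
Your proposal is correct and follows essentially the same route as the paper: recognise $\tilde n_j^{m,k}$ as a sum of $N_m$ independent Bernoulli($n_j^m/N_m$) indicators, apply a multiplicative Chernoff bound with deviation $1/2$ to get $2\exp(-n_j^m/12)$, then union-bound over at most $T$ sub-epochs and $\log T$ policy subsets. Your handling of the random $\Gamma_m$ (union-bounding over all potential sub-epoch indices up to $T$) and your explicit use of $n_j^m \geq 12\lambda_2$ to drive the per-event failure probability down to $(\delta_{overall}/(8T))^{12}$ are, if anything, a touch more careful than the paper's own write-up, but the argument is the same.
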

\begin{proof}
    For any fixed $E_m^k$ and $\Pi_j^m$, we use a Chernoff-Hoeffding bound on the r.v. $\tilde{n}_j^{m,k}$. The expected value is $\E [\tilde{n}_j^m] = n_j^m \geq \lambda_2 = 12 \log(8T/\delta_{overall}$), so 
    \begin{align*}
        \prob\left[  |\tilde{n}_j^{m,k} -  n_j^m | \geq \frac{1}{2}n_j^m \right] 
        \leq 2 \exp \left( - (\frac{1}{4}n_j^m)/3 \right)
        \leq \delta_{overall}/4T\log(T)
    \end{align*}
    Because of the possible failure of a sub-algorithm, there will be at most $T$ sub-epochs and $\log(T)$ sub-policy sets. So by taking the union bound over all the sub-epochs and sub-policy sets, we get the target result
\end{proof}
Next, we are going to show with high probability we have $\calE_{overall} \cap \calE_{est}$. But before we actually prove those, we will first prove the following lemma that gives an estimation on the total amount of corruptions that will be included in each sub-algorithm.

\begin{lemma}
\label{lem (BARBAR-RL sec): total amount of corruption for EstAll}
    For any fixed sub-epoch $E_m^k$ and any fixed  $\Pi_j^m$, we have 
    \begin{align*}
        \prob \left[ \sum_{t \in E_m^k} c_t^p \one\{\pi_t \in \Pi_j^m\} \geq \frac{2n_j^m}{N_m} C_{m,k}^p + H\ln 4/\delta \text{ and }\sum_{t \in E_m^k} c_t^r \one\{\pi_t \in \Pi_j^m\} \geq \frac{2n_j^m}{N_m} C_{m,k}^r + H\ln 4/\delta\right] \leq \frac{\delta}{4}
    \end{align*}
\end{lemma}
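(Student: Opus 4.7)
The plan is to view $\sum_{t \in E_m^k} c_t^p \one\{\pi_t \in \Pi_j^m\}$ as a sum of conditionally independent, non-negative, bounded random variables and apply a Freedman/Bernstein-type concentration bound. The first step is to exploit the non-cheated adversary assumption: at every round $t$, the corruption magnitude $c_t^p$ is committed to by the adversary \emph{before} the learner draws which sub-algorithm to advance at round $t$. Letting $\mathcal{H}_{t-1}$ denote the history up to the beginning of round $t$, $c_t^p$ is $\mathcal{H}_{t-1}$-measurable, while $\one\{\pi_t \in \Pi_j^m\}$ is a Bernoulli$(q_j^t)$ draw with $q_j^t = n_j^m/N_m$ that is independent of $\mathcal{H}_{t-1}$ conditional on the current round's sampling step. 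Hence
$$\E\!\left[c_t^p \one\{\pi_t \in \Pi_j^m\} \,\big|\, \mathcal{H}_{t-1}\right] \;=\; \tfrac{n_j^m}{N_m}\, c_t^p,$$
and summing over the sub-epoch produces a total conditional mean of $\tfrac{n_j^m}{N_m}\, C_{m,k}^p$.

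Second, I will note that each summand is non-negative and bounded by $c_t^p \le 2H$ (each $\ell_1$ distance between two distributions is at most $2$, and there are at most $H$ terms; up to absorbing a constant we can treat the per-term bound as $H$). This places us squarely in the setting of a one-sided Freedman inequality for martingale differences whose conditional variance is at most the conditional mean times the per-term range. Taking the slack parameter equal to one yields, with probability at least $1-\delta/8$,
$$\sum_{t \in E_m^k} c_t^p \one\{\pi_t \in \Pi_j^m\} \;\le\; \tfrac{2 n_j^m}{N_m}\, C_{m,k}^p + H \ln(8/\delta),$$
which matches the stated form up to the constant inside the logarithm.

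Third, the identical argument applied to the reward corruption $c_t^r$ (also non-negative and $\mathcal{H}_{t-1}$-measurable, with per-round bound at most $H$ under the assumed reward normalization) gives the symmetric tail bound with probability $\le \delta/8$. A union bound over the transition-side and reward-side failure events yields a joint failure probability of at most $\delta/4$, which is exactly what the lemma claims, reading the displayed ``and'' as the joint failure event whose probability we are bounding.

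The only delicate point I expect is the filtration/ordering argument used in the first step: it is crucial that $c_t^p$ be determined before the Bernoulli draw $\one\{\pi_t \in \Pi_j^m\}$ is realized, and this is precisely what the non-cheated adversary model provides. Under a cheated adversary, the adversary could correlate $c_t^p$ with the learner's choice and the conditional-mean identity above would fail; that is the structural reason this lemma is restricted to \textsc{BARBAR-RL}, and why the cheated-setting algorithm in Appendix~\ref{sec: app-mainAlgo2} drops randomized policy sampling entirely.
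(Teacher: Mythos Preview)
Your proposal is correct and follows essentially the same route as the paper: form the martingale difference sequence $(Y_j^t - q_j^t)c_t^{rp}$ with $Y_j^t=\one\{\pi_t\in\Pi_j^m\}$, use the non-cheated assumption so that $c_t^{rp}$ is fixed before the Bernoulli draw, and apply Freedman's inequality with per-term range $H$ to absorb the predictable variance into the mean, yielding the factor-$2$ blowup plus the additive $H\ln(4/\delta)$ term. The paper's proof is terser (it treats the $r$ and $p$ cases simultaneously via the $rp$ shorthand and cites Eqn.~3 of \cite{gupta2019better} rather than spelling out the filtration), whereas you make the measurability argument and the union bound over the two corruption types explicit; both arrive at the same constants up to the harmless $\ln(8/\delta)$ versus $\ln(4/\delta)$ you already flagged.
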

\begin{proof}
    It follows a very similar proof of Eqn.3 in \cite{gupta2019better}. Let $Y_j^t = \one\{\pi_t \in \Pi_j^m\}$ and $B_j^m = \sum_{t \in E_m}Y_j^t c_t^{rp}$. Notice that $Y_j^t$ is an independent Bernoulli variable with mean $q_j^t$. Consider the sequence of r.v.s $X_1,\ldots,X_{N_m}$ defined by $X_{t-T_m^s+1} = (Y_j^t - q_j^t)c_t^{rp}$ for $t \in E_m$. Then it is a martingale difference sequence with predictable quadratic variation $Var = q_j^m \sum_{t \in E_m}c_t^{rp}$. Then by applying the freedman inequality we get that, with probability at least $1 - \delta$,
    \begin{align*}
        B_j^m \leq q_j^m \sum_{t \in E_m^k}c_t^{rp} + (Var/H + H\ln 4/\delta) \leq  2q_j^m \sum_{t \in E_m}c_t^{rp}  + H\ln 4/\delta
    \end{align*}
    By replacing $q_j^m = n_j^m/N_m$ and $\sum_{t \in E_m^k}c_t^{rp}  \leq C_{m,k}^{rp}$ into that, we have
    $ B_j^m \leq \frac{2n_j^m}{N_m} C_{m,k}^{rp} + H\ln 4/\delta $
\end{proof}
We now continue proving our claim:
\begin{lemma}[High Probability for $\calE_{est}$]
\label{lem (BARBAR-RL sec): high prob for calE_{est} }
     $\prob\left[ \calE_{est} \right] \geq 1- \delta_{overall}/4 $ 
\end{lemma}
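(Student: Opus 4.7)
The plan is to bound $|\hat r_m(\pi) - V_*^\pi|$ on each completed epoch by composing two ingredients: the per-instance estimation guarantee (Theorem~\ref{them (EstAll sec)：value est}) for the sub-algorithm $\textsc{EstAll}_j^m$ that actually produced $\hat r_m(\pi)$ in the last sub-epoch $E_m^{\Gamma_m}$, and Lemma~\ref{lem (BARBAR-RL sec): total amount of corruption for EstAll} to translate the corruption \emph{seen by} that sub-algorithm into the total sub-epoch corruption $C_{m,\Gamma_m}^{rp}$. A union bound over the random number of sub-epochs, the index set $S_m$, and the policies $\pi$ then delivers the overall confidence.

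Fix a completed epoch $m$ and a policy $\pi \in \Pi_j^m$. By Lines~20--21, $j=j^{m-1}(\pi)$ and $\hat\Delta_\pi^{m-1}=2^{-j}=\epsilon_j$; the prescribed inputs $F_j^m$ and $\epsilon_{est}^j=\epsilon_j/128$ satisfy the hypothesis of Theorem~\ref{them (EstAll sec)：value est}, so with probability at least $1-\delta_j^m$,
\begin{align*}
|\hat r_m(\pi) - V_*^\pi|
\le (1+\tau)\epsilon_{est}^j + \bigl(HC^{p}_{\textsc{EstAll}_j^m} + C^{r}_{\textsc{EstAll}_j^m}\bigr)/F_j^m,
\end{align*}
where $C^{rp}_{\textsc{EstAll}_j^m}$ is the corruption accumulated over the non-simulated rollouts of $\textsc{EstAll}_j^m$ inside $E_m^{\Gamma_m}$. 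Applying Lemma~\ref{lem (BARBAR-RL sec): total amount of corruption for EstAll} with $\delta=\delta_j^m$ then gives, with probability at least $1-\delta_j^m/4$, $C^{rp}_{\textsc{EstAll}_j^m}\le \tfrac{2n_j^m}{N_m}C^{rp}_{m,\Gamma_m}+H\ln(4/\delta_j^m)$; substituting $n_j^m=2\lambda_1\lambda_2 F_j^m$ converts the corruption contribution into exactly $\tfrac{4\lambda_1\lambda_2(HC_{m,\Gamma_m}^p+C_{m,\Gamma_m}^r)}{N_m}$, plus a residual $H^2\ln(4/\delta_j^m)/F_j^m$ which is dwarfed by $\epsilon_{est}^j$ thanks to the prescribed size of $F_j^m$.

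With $\tau=6$ and $\epsilon_{est}^j=\epsilon_j/128$, the remaining slack $(1+\tau)\epsilon_{est}^j = 7\epsilon_j/128 \le \epsilon_j/16 = \hat\Delta_\pi^{m-1}/16$, which reproduces the bound in the definition of $\calE_{est}$. For the union bound, the choice $\delta_j^m = |\Pi_j^m|\delta_{overall}/(5|\Pi_{1/T}|T)$ together with $\sum_{j\in S_m}|\Pi_j^m|\le |\Pi_{1/T}|$ yields a per-sub-epoch failure probability of $\sum_{j}\tfrac{5}{4}\delta_j^m \le \delta_{overall}/(4T)$, which when multiplied by the at most $T$ sub-epochs that can occur throughout the run gives the target $\delta_{overall}/4$. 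The main obstacle is purely arithmetic bookkeeping: one must verify that the $(1+\tau)\epsilon_{est}^j$ term collapses into $\hat\Delta_\pi^{m-1}/16$ under the specific choice of $\epsilon_{est}^j$, that the residual term from Lemma~\ref{lem (BARBAR-RL sec): total amount of corruption for EstAll} is absorbed by the prescribed $F_j^m$, and that the $T$ factor baked into $\delta_j^m$ correctly covers the union bound over the \emph{random} number of sub-epochs.
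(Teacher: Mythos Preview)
Your proposal is correct and follows essentially the same route as the paper: apply Theorem~\ref{them (EstAll sec)：value est} to bound $|\hat r_m(\pi)-V_*^\pi|$ by $(1+\tau)\epsilon_{est}^j$ plus a corruption term, invoke Lemma~\ref{lem (BARBAR-RL sec): total amount of corruption for EstAll} to convert the corruption seen by $\textsc{EstAll}_j^m$ into $\tfrac{2n_j^m}{N_m}C_{m,\Gamma_m}^{rp}+H\ln(4/\delta_j^m)$, use $n_j^m=2\lambda_1\lambda_2 F_j^m$, absorb the residual into $\epsilon_{est}^j$ via the size of $F_j^m$, and union-bound. The only cosmetic differences are that the paper union-bounds over at most $\log T$ epochs times $\log T$ indices $j$ (you use the cruder but still valid bound of $T$ sub-epochs), and the paper writes the residual as $H\ln(4/\delta_j^m)/F_j^m$ whereas you correctly track the extra $H$ from the $HC^p$ term; in both cases the conclusion $8\epsilon_{est}^j=\epsilon_j/16=\hat\Delta_\pi^{m-1}/16$ follows identically.
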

\begin{proof}
For any fixed $m,j$, suppose the $\textsc{EstAll}_j^m$ is completed. From Lemma~\ref{lem (BARBAR-RL sec): total amount of corruption for EstAll}, we know that, with high probability $1-\delta_j^m/4$, there will be at most $\left(\frac{2n_j^m}{N_m} C_{m,k}^{rp} + H\ln(4/\delta_j^m)\right)$ amount of corruptions included in the sub-algorithm $\textsc{EstAll}_j^m$. Then by Theorem~\ref{them (EstAll sec)：value est}
, we have that, with probability as least $1 - \delta_j^m$, for all $\pi \in \Pi_j^m$
\begin{align*}
    \big| \hat{r}_m(\pi) - V_*^\pi \big| 
    & \leq 7\epsilon_{est}^j +   \frac{n_j^m}{F_j^m}\left(\frac{2(HC_{m,k}^p+C_{m,k}^r)}{N_m} \right )  + \frac{H\ln(4/\delta_j^m)}{F_j^m}  \\
    &\leq 7\epsilon_{est}^j + 2\lambda_1\lambda_2 \left(\frac{2(HC_{m,k}^p+C_{m,k}^r)}{N_m} \right )
    + \epsilon_{est}^j \\
    &\leq \frac{1}{16}\epsilon_j + 2\lambda_1\lambda_2 \left(\frac{2(HC_{m,k}^p+C_{m,k}^r)}{N_m} \right ) 
\end{align*}
Now by taking the union bound over at most $\log T$ epochs and at most $\log T$ sub-algorithms for each epoch, as well as replacing the value of $\mathring{\Delta}_j^m$, we have that, with probability at least $1 - \delta_{overall}/4$, for all $m,j$ and all $\pi \in \Pi_j^m$
\begin{align*}
    |\hat{r}_m(\pi) - V_*^\pi| \leq \epsilon_{j}/16 + 2\lambda_1\lambda_2 \frac{2(HC_{m,k}^p+C_{m,k}^r)}{N_m}
\end{align*}
By the definition of $\hat{\Delta}_\pi^m$ and $\Pi_j^m$, this can also be written as, for all $m$ and all $\pi \in \Pi$, with probability at least $1 - \delta_{overall}/4$,
\begin{align*}
    |\hat{r}_m(\pi) - V_*^\pi| \leq \hat{\Delta}_\pi^m/16 + 2\lambda_1\lambda_2\frac{2(HC_{m,k}^p+C_{m,k}^r)}{N_m}
\end{align*}
\end{proof}

\begin{lemma}[High Probability for $\calE_{unfinished}$]
\label{lem: lower bound for corruption on failed epoch}
    $\prob\left[ \calE_{unfinished} \right] \geq 1- \delta_{overall}/4 $ 
\end{lemma}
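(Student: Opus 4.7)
The plan is to show that whenever a sub-epoch $E_m^k$ with $k<\Gamma_m$ contains an unfinished sub-algorithm $\textsc{EstAll}_j^m$, the transition corruption $C_{m,k}^p$ accumulated in that sub-epoch must exceed the stated threshold; a union bound then closes the argument. Conditioning on $\calE_{overall}$ is free because it is built into the definition of $\calE_{unfinished}$, so the bulk of the failure budget is already accounted for by Lemma~\ref{lem (BARBAR-RL sec): high prob for calE_{overall} }.

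Under $\calE_{overall}$ any unfinished $\textsc{EstAll}_j^m$ has been invoked at least $\tilde{n}_j^{m,k} \geq \tfrac{1}{2} n_j^m$ times in the real environment. Plugging in $n_j^m = 2\lambda_1\lambda_2 F_j^m$ with $\lambda_1 = 6|\calS||\calA|\log(H^2|\calS||\calA|/\epsilon_{est}^j)$, $\lambda_2 = 12\ln(8T/\delta_{overall})$ and $\tau=6$, a direct comparison shows $\tfrac{1}{2}n_j^m$ strictly exceeds the ``no-corruption'' budget $|\calS||\calA| F_j^m \tau \log(H|\calS||\calA|/\epsilon_{est}^j)$ of Theorem~\ref{them (EstAll sec)：sample complexity}. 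The second assertion of that theorem, applied with confidence $\delta_j^m$, therefore guarantees that the corruption experienced inside the sub-algorithm satisfies $\tilde{C}_{m,k,j}^p > \epsilon_{est}^j F_j^m /(2|\calS||\calA|H^2)$. I then apply Lemma~\ref{lem (BARBAR-RL sec): total amount of corruption for EstAll} (with confidence $\delta_j^m$) in the other direction to transfer this local bound to the sub-epoch level: with probability $\geq 1-\delta_j^m/4$,
\[
\tilde{C}_{m,k,j}^p \;\leq\; \frac{2 n_j^m}{N_m}\, C_{m,k}^p + H\ln(4/\delta_j^m),
\]
and rearranging yields
\[
C_{m,k}^p \;\geq\; \frac{N_m}{2 n_j^m}\!\left( \frac{\epsilon_{est}^j F_j^m}{2|\calS||\calA|H^2} - H\ln(4/\delta_j^m) \right).
\]

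To finish, I would massage the right-hand side into the target $\tfrac{1}{4}\sqrt{\ln(10T|\Pi_{1/T}|/\delta_{overall})\, N_m/(\lambda_1\lambda_2)}$. Substituting $F_j^m \propto \ln(|\Pi_j^m|/\delta_j^m)/(\epsilon_{est}^j)^2$ and $\delta_j^m = |\Pi_j^m|\delta_{overall}/(5|\Pi_{1/T}|T)$, and using the fact that $\epsilon_{est}^j$ satisfies $\epsilon_{est}^j \asymp \sqrt{\lambda_1\lambda_2 \ln(\cdot)/n_j^m}$ so that the prefactor $N_m/(n_j^m \epsilon_{est}^j)$ collapses into the desired $\sqrt{N_m/(\lambda_1\lambda_2)}$ scaling, the leading term reproduces the stated bound and the additive $H\ln(4/\delta_j^m)$ correction is lower-order and absorbed into the constant $\tfrac{1}{4}$. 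A union bound over $m\leq \log T$, over $k\leq \Gamma_m$ (with $\sum_m \Gamma_m \leq T$), and over $j \in S_m$ (with $|S_m| \leq \log T$) then completes the argument, since the $\delta_j^m$'s were chosen precisely so that the aggregate failure probability across Theorem~\ref{them (EstAll sec)：sample complexity} and Lemma~\ref{lem (BARBAR-RL sec): total amount of corruption for EstAll} fits inside the budget $\delta_{overall}/4$.

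The main technical obstacle I expect is the algebraic verification in the last step: checking that once the Theorem~\ref{them (EstAll sec)：sample complexity} threshold is divided by the sampling rate $2n_j^m/N_m$, the constants, the logarithmic factors, and the power-of-two scaling in $j$ line up exactly with the claimed expression, uniformly across all admissible $j \in S_m$. This is mechanical bookkeeping rather than a conceptual difficulty, but the proof must be careful that the bound does not degrade when $n_j^m \ll N_m$ (i.e., for small $j$), which is ensured by the fact that the index $j$ achieving the unfinished state is automatically one for which $\epsilon_{est}^j$ is small enough to make the Theorem~\ref{them (EstAll sec)：sample complexity} threshold meaningful.
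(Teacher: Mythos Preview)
Your approach is essentially identical to the paper's: condition on $\calE_{overall}$, invoke Theorem~\ref{them (EstAll sec)：sample complexity} to lower-bound the corruption absorbed by the unfinished $\textsc{EstAll}_j^m$, use Lemma~\ref{lem (BARBAR-RL sec): total amount of corruption for EstAll} to pass from the sub-algorithm to the sub-epoch, simplify, and union bound. The structure is right.

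The one place where your sketch is imprecise is the final algebra, and in particular the uniformity-in-$j$ step. After substituting $n_j^m = 2\lambda_1\lambda_2 F_j^m$ your displayed inequality becomes
\[
C_{m,k}^p \;\gtrsim\; \frac{N_m\,\epsilon_{est}^j}{\lambda_1\lambda_2\,|\calS||\calA|H^2},
\]
which still depends on $j$. The paper closes this by the simple observation that every $j \in S_m$ satisfies $j \le m$ (by construction of $j^{m-1}(\pi)$ in Line~\ref{line: construct subset of policies 2}), hence $\epsilon_{est}^j \ge \epsilon_{est}^m$; then the lower bound on $N_m$ from Lemma~\ref{lem (BARBAR-RL sec) : bound on the epoch length} gives $\epsilon_{est}^m \ge 4|\calS||\calA|H^2\sqrt{\lambda_1\lambda_2\ln(10T|\Pi_{1/T}|/\delta_{overall})/N_m}$, and the target bound drops out immediately. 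Note that your worry is inverted: the bound is \emph{looser} for large $j$ (small $\epsilon_{est}^j$, large $n_j^m$), not for small $j$, and it is precisely the constraint $j \le m$ that handles the dangerous direction. Once you insert this one-line observation, the remaining bookkeeping is exactly as you describe.
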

\begin{proof}
    Given $\calE_{overall}$, all the $\textsc{EstAll}_j^{m,k}$ will have more than $\tilde{n}_j^m \geq \lambda_1 F_j^m \geq 6|\calS||\calA|F_j^m \log(H|\calS||\calA|)$ number of interactions with the environment.
    Then by Theorem~\ref{them (EstAll sec)：sample complexity}
    , we know that since $\textsc{EstAll}_j^{m,k}$ is unfinished, then with probability at least $1 - \delta_j^m$, we will have more than $\frac{\epsilon_{est}^j}{2|\calS||\calA|H^2}F_j^m$ amount of corruptions being included in any fixed $\textsc{EstAll}_j^{m,k}$. 
    \\\\
    Next by Lemma~\ref{lem (BARBAR-RL sec): total amount of corruption for EstAll}, we know that with probability at least $1-\delta_j^m/4$,
    \begin{align*}
        \frac{2n_j^m}{N_m} C_{m,k}^p + H\ln (4/\delta_j^m) \geq \frac{\epsilon_{est}^j}{2|\calS||\calA|H^2}F_j^m
    \end{align*}
    By replacing the values of $2n_j^m,F_j^m$ and $\epsilon_{est}^j$, we have for any fixed $\textsc{EstAll}_j^{m,k}$,
    \begin{align*}
        2\lambda_1\lambda_2 \left( \frac{2C_{m,k}^p}{N_m}\right) 
        \geq \epsilon_{est}^j \left(\frac{1}{2|\calS||\calA|H^2} - \frac{\epsilon_j}{96|\calS||\calA|H^2} \right)
        \geq \frac{1}{4|\calS||\calA|H^2}\epsilon_{est}^j 
    \end{align*}
    Rearranging the inequality we get 
    \begin{align*}
        C_{m,k}^p 
        \geq \frac{1}{16|\calS||\calA|H^2}\frac{N_m}{\lambda_1\lambda_2} \epsilon_{est}^j
        \geq \frac{N_m \epsilon_{est}^m}{16|\calS||\calA|H^2\lambda_1\lambda_2}
        \geq \frac{1}{4}\sqrt{\frac{\ln(10T|\Pi_{1/T}|/\delta_{overall})}{\lambda_1\lambda_2}N_m} 
    \end{align*}
    where the third inequality comes from the fact that $\epsilon_{est}^m \geq 4H^2|\calS||\calA|\sqrt{\frac{\lambda_1\lambda_2\log(10T|\Pi_{1/T}|/\delta_{overall})}{N_m}}$, which is an rearrangement from the inequality in Lemma~\ref{lem (BARBAR-RL sec) : bound on the epoch length}.
    \\\\
    Finally, we know there are at most $T$ number of sub-epochs. So by taking the union bound over all the sub-epochs and over all the sub-policy set $\Pi_j^m$ inside each sub-epoch $E_m^k$, we get the target result.
\end{proof} 

In what follows we assume events $\calE_{overall}.\calE_{est}$ and $\calE_{unfinished}$ hold, since they do so with probability at least $1-\delta_{est}$.

\subsection{Auxiliary Lemmas}

\begin{lemma}
\label{lem (BARBAR-RL sec) : bound on the epoch length}
    The length of $N_m$ of epoch $m$ satisfies
    \begin{align*}
        16*128^2\lambda_1\lambda_2|\calS|^2H^4 |\calA|^2\ln(10T|\Pi_{1/T}|/\delta_{overall})/(\epsilon_m)^2 
        \leq N_m 
        \leq 64*128^2\lambda_1\lambda_2|\calS|^2H^4 |\calA|^2 10T\log(2/\delta_{overall})/(\epsilon_m)^2
    \end{align*}
    Sometimes we will use the following
    \begin{align*}
        16\lambda_1\lambda_2|\calS|^2H^4 |\calA|^2\ln(10T|\Pi_{1/T}|/\delta_{overall})/(\epsilon_{est}^m)^2 
        \leq N_m 
        \leq 64\lambda_1\lambda_2|\calS|^2H^4 |\calA|^2 10T\log(2/\delta_{overall})/(\epsilon_{est}^m)^2
    \end{align*}
\end{lemma}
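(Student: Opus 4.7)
This is a bookkeeping lemma about the sum $N_m=\sum_{j\in S_m}n_j^m$, so the plan is to unpack all the definitions and then reduce the question to estimating a geometric series in $1/(\epsilon_{est}^j)^2$. First I would simplify the logarithmic factor inside $F_j^m$: substituting $\delta_j^m=|\Pi_j^m|\delta_{overall}/(5|\Pi_{1/T}|T)$ into $\ln(2|\Pi_j^m|/\delta_j^m)$ yields $\ln(10T|\Pi_{1/T}|/\delta_{overall})=:L$, which is independent of both $j$ and $m$. Consequently $n_j^m = 16\lambda_1\lambda_2 |\calS|^2 H^4 |\calA|^2 L/(\epsilon_{est}^j)^2 = 16\cdot 128^2 \lambda_1\lambda_2|\calS|^2H^4|\calA|^2 L/\epsilon_j^2$, using $\epsilon_{est}^j=\epsilon_j/128$.

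The nontrivial structural step is to show by induction on $m$, using only the update rule in lines 19--21 of Algorithm~\ref{alg:main_1}, that $S_m\subseteq\{j:j\le m\}$ and that $m\in S_m$ whenever $m\ge 2$. The containment is immediate because the $\max$ inside the definition of $j^{m-1}(\pi)$ is always at least $2^{-(m-1)}$, forcing $j^{m-1}(\pi)\le m$. For the reverse direction, the maximizer $\pi_*^{m-1}$ of line 19 satisfies $\hat r_*^{m-1}-\hat r^{m-1}(\pi_*^{m-1})\le 0\le 2^{-(m-1)}$, so the $\max$ in the definition of $j^{m-1}$ evaluates to exactly $2^{-(m-1)}$ and $j^{m-1}(\pi_*^{m-1})=m$, placing $m$ into $S_m$.

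With these pieces, the two sides follow from elementary geometric-series arithmetic. Since $m\in S_m$, the lower bound is just the single-term estimate $N_m\ge n_m^m = 16\cdot 128^2\lambda_1\lambda_2|\calS|^2 H^4|\calA|^2 L/\epsilon_m^2$. For the upper bound,
\[
N_m \;\le\; 16\lambda_1\lambda_2|\calS|^2 H^4|\calA|^2 L\sum_{j\le m}\frac{1}{(\epsilon_{est}^j)^2} \;\le\; \frac{4}{3}\cdot \frac{16\cdot 128^2\lambda_1\lambda_2|\calS|^2H^4|\calA|^2 L}{\epsilon_m^2},
\]
and further bounding $L\le 10T\log(2/\delta_{overall})$ (valid for any $T\ge 1$ given the bound on $|\Pi_{1/T}|$ from Section~\ref{sec: prelim}) yields the stated upper bound. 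The $\epsilon_{est}^m$-version in the lemma is the same inequality with the factor $128^2$ absorbed into $1/(\epsilon_{est}^m)^2$.

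The whole argument is mechanical; the only point that really needs attention is the structural fact $m\in S_m$, since without it the lower bound would lose the factor of four that is needed to match the stated constant.
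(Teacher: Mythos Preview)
Your proposal is correct and follows essentially the same route as the paper's own proof: simplify $\ln(2|\Pi_j^m|/\delta_j^m)$ to the $j$-independent constant $L$, argue that the maximizer $\pi_*^{m-1}$ forces $m\in S_m$, take the single term $n_m^m$ for the lower bound, and sum the geometric series over $j\le m$ for the upper bound. If anything, you are more careful than the paper: you correctly shift the index (using $\pi_*^{m-1}$ to place $m$ into $S_m$, whereas the paper writes $\hat\Delta_{\pi_*^m}^m=\epsilon_m$ which is off by one), you make explicit the containment $S_m\subseteq\{j\le m\}$ needed for the upper bound, and you spell out the geometric series that the paper leaves implicit. The final step $L\le 10T\log(2/\delta_{overall})$ is also absent from the paper's own derivation (their displayed upper bound still carries $L$), so the mismatch with the lemma statement is the paper's, not yours.
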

\begin{proof}
    Because $\hat{r}_*^m - \hat{r}_m(\pi_*^m) \leq 0$, so it has $\hat{\Delta}_{\pi_*^m}^m = \epsilon_m.$ This immediately implies the lower bound as
    \begin{align*}
        N_m 
        \geq \min_{j \in S_m} n_j^m
        \geq 16*128^2\lambda_1\lambda_2|\calS|^2H^4 |\calA|^2\ln(10T|\Pi_{1/T}|/\delta_{overall})/(\epsilon_m)^2
    \end{align*}
     We get the upper bound from the fact that 
    \begin{align*}
        N_m 
        = \sum_{j \in S_m} n_j^m
        \leq 64*128^2\lambda_1\lambda_2|\calS|^2H^4 |\calA|^2\ln(10T|\Pi_{1/T}|/\delta_{overall})/(\epsilon_m)^2
    \end{align*}
\end{proof}

\subsection{Lemmas related to completed sub-algorithm}
\label{sec (BARBAR_RL sec): completed subalgo}

In the case that all the sub-algorithms are completed, the proof steps are the very similar to the ones in \cite{gupta2019better}. Here we restate and refined related lemmas.

\begin{lemma}[similar to Lemma 5 \cite{gupta2019better}]
\label{lem: error on best policy estimation}
    Suppose that $\calE_{est}$ occurs. Then for all epochs $m$, 
    \begin{align*}
        - 2\lambda_1\lambda_2\frac{2(HC_m^p+C_m^r)}{N_m} -  \frac{2}{16} 
        \hat{\Delta}_{\mathring{\pi}}^{m-1} \leq \hat{r}_*^m - \mathring{V} \leq 2\lambda_1\lambda_2 \frac{2(HC_m^p+C_m^r)}{N_m}.
    \end{align*}
\end{lemma}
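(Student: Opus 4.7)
The plan is to derive both inequalities by direct application of $\calE_{est}$ together with the definitions of $\hat{r}_*^m$, $\pi_*^m$, and $\mathring{\pi}$. Since $\hat{r}_*^m$ is defined as a maximum over $\Pi_{1/T}$ and $\mathring{V}$ is the value of the optimal policy in $\Pi_{1/T}$, the natural move for the upper bound is to use the maximizer $\pi_*^m$ of $\hat{r}_*^m$, while for the lower bound I would plug in $\mathring{\pi}$ as a candidate in the maximum defining $\hat{r}_*^m$.

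For the upper bound, I start from $\hat{r}_*^m = \hat{r}_m(\pi_*^m) - \tfrac{1}{16}\hat{\Delta}_{\pi_*^m}^{m-1}$. Applying the one-sided form of $\calE_{est}$ with $\pi = \pi_*^m$ gives $\hat{r}_m(\pi_*^m) \le V_*^{\pi_*^m} + 2\lambda_1\lambda_2\tfrac{2(HC_m^p+C_m^r)}{N_m} + \tfrac{1}{16}\hat{\Delta}_{\pi_*^m}^{m-1}$, and subtracting the $\tfrac{1}{16}\hat{\Delta}_{\pi_*^m}^{m-1}$ term makes the gap-penalty cancel. Then using $V_*^{\pi_*^m} \le \mathring{V}$ by optimality of $\mathring{\pi}$ in $\Pi_{1/T}$ yields the desired bound. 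Note that in this direction the $\hat{\Delta}$ term disappears, which matches the statement.

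For the lower bound, I would use $\hat{r}_*^m \ge \hat{r}_m(\mathring{\pi}) - \tfrac{1}{16}\hat{\Delta}_{\mathring{\pi}}^{m-1}$ since $\mathring{\pi}\in\Pi_{1/T}$. Then applying the other direction of $\calE_{est}$ at $\pi=\mathring{\pi}$ gives $\hat{r}_m(\mathring{\pi}) \ge V_*^{\mathring{\pi}} - 2\lambda_1\lambda_2\tfrac{2(HC_m^p+C_m^r)}{N_m} - \tfrac{1}{16}\hat{\Delta}_{\mathring{\pi}}^{m-1}$. Subtracting $\mathring{V} = V_*^{\mathring{\pi}}$ and combining the two $\tfrac{1}{16}\hat{\Delta}_{\mathring{\pi}}^{m-1}$ contributions produces the $-\tfrac{2}{16}\hat{\Delta}_{\mathring{\pi}}^{m-1}$ term in the claim.

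I do not anticipate any real obstacle: both directions are two-line calculations once the right candidate policy ($\pi_*^m$ for the upper bound, $\mathring{\pi}$ for the lower bound) is plugged into the maximum. The only mild bookkeeping issue is that $\calE_{est}$ is stated with $C_{m,k}^{p},C_{m,k}^{r}$ for the final (successful) sub-epoch $k=\Gamma_m$, so I would implicitly identify $C_m^{p} := C_{m,\Gamma_m}^{p}$ and similarly for $C_m^{r}$, which is the convention used throughout this subsection on completed sub-algorithms.
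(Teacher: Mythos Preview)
Your proposal is correct and matches the paper's proof essentially line for line: the upper bound uses the maximizer $\pi_*^m$ and cancels the $\tfrac{1}{16}\hat{\Delta}_{\pi_*^m}^{m-1}$ term after applying $\calE_{est}$, while the lower bound plugs $\mathring{\pi}$ into the maximum and applies $\calE_{est}$ in the other direction. Your remark on identifying $C_m^{p},C_m^{r}$ with the last-sub-epoch quantities is also the implicit convention the paper uses.
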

\begin{proof}
For the upper bound, by the definition of $\hat{r}_*^m$ and the occurrence of $\calE_{est}$, we have
    \begin{align*}
        \hat{r}_*^m
        &=\hat{r}_m(\pi_*^m) - \frac{1}{16}\hat{\Delta}_{\pi_*^m}^{m-1}\\
        &\leq V_*^{\pi_*^m} + 2\lambda_1\lambda_2H \frac{2(HC_m^p+C_m^r)}{N_m} + \frac{1}{16}\hat{\Delta}_{\pi_*^m}^{m-1}- \frac{1}{16}\hat{\Delta}_{\pi_*^m}^{m-1}\\
        &\leq \mathring{V} + 2\lambda_1\lambda_2 \frac{2(HC_m^p+C_m^r)}{N_m} + \frac{1}{16}\hat{\Delta}_{\pi_*^m}^{m-1} - \frac{1}{16}\hat{\Delta}_{\pi_*^m}^{m-1}
        =  \mathring{V} + 2\lambda_1\lambda_2 \frac{2(HC_m^p+C_m^r)}{N_m}.
    \end{align*}
For the lower bound, we have
\begin{align*}
     \hat{r}_*^m
     \geq \hat{r}_m(\mathring{\pi}) - \frac{1}{16} \hat{\Delta}_{\mathring{\pi}}^{m-1}
     \geq \mathring{V} - 2\lambda_1\lambda_2\frac{2(HC_m^p+C_m^r)}{N_m} -  2\frac{1}{16} \hat{\Delta}_{\mathring{\pi}}^{m-1} 
\end{align*}
\end{proof}

\begin{lemma}[similar to Lemma 6 \cite{gupta2019better}]
\label{lem: gap estimation 1}
    Suppose that $\calE_{est}$ occurs. Then for all epoch $m$ and all policies $\pi$
    \begin{align*}
        \hat{\Delta}_\pi^m \leq 2 \left(\mathring{\Delta}_\pi + 2^{-m} + \sum_{s=1}^{m}\frac{8\lambda_1\lambda_2(HC_s^p+C_s^r)}{16^{m-s}N_s} \right)
    \end{align*}
\end{lemma}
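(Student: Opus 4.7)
The claim is an inductive consequence of the two preceding lemmas together with the definition $\hat{\Delta}_\pi^m = 2^{-j^m(\pi)}$ from Algorithm~\ref{alg:main_1}. I will proceed by induction on $m$, using Lemma~\ref{lem: error on best policy estimation} to upper-bound $\hat{r}_*^m$, the event $\calE_{est}$ to lower-bound $\hat{r}_m(\pi)$, and the inductive hypothesis to handle $\hat{\Delta}_\pi^{m-1}$. For convenience write $a_s := 8\lambda_1\lambda_2(HC_s^p+C_s^r)/N_s$, so that the target bound reads $\hat{\Delta}_\pi^m \leq 2(\mathring{\Delta}_\pi + 2^{-m} + \sum_{s=1}^m a_s/16^{m-s})$.

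First, from $j^m(\pi) = \inf\{j : 2^{-j} < \max\{2^{-m}, \hat{r}_*^m - \hat{r}_m(\pi)\}\}$ the value $\hat{\Delta}_\pi^m$ is strictly dominated by that max, so
\begin{align*}
\hat{\Delta}_\pi^m \; \leq \; 2^{-m} + (\hat{r}_*^m - \hat{r}_m(\pi))_+ .
\end{align*}
On $\calE_{est}$, Lemma~\ref{lem: error on best policy estimation} gives $\hat{r}_*^m \leq \mathring{V} + a_m/2$ and $\calE_{est}$ itself gives $\hat{r}_m(\pi) \geq V_*^\pi - a_m/2 - \hat{\Delta}_\pi^{m-1}/16$. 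Subtracting and plugging into the display yields the one-step recursion
\begin{align*}
\hat{\Delta}_\pi^m \; \leq \; 2^{-m} + \mathring{\Delta}_\pi + a_m + \frac{1}{16}\hat{\Delta}_\pi^{m-1} .
\end{align*}

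To close the induction, assume $\hat{\Delta}_\pi^{m-1} \leq 2(\mathring{\Delta}_\pi + 2^{-(m-1)} + \sum_{s=1}^{m-1} a_s/16^{m-1-s})$. Dividing by $16$ and using the identity $1/(8\cdot 16^{m-1-s}) = 2/16^{m-s}$ to re-index the geometric sum,
\begin{align*}
\frac{1}{16}\hat{\Delta}_\pi^{m-1} \; \leq \; \frac{1}{8}\mathring{\Delta}_\pi + \frac{1}{4}\cdot 2^{-m} + 2\sum_{s=1}^{m-1} \frac{a_s}{16^{m-s}} .
\end{align*}
Substituting this into the one-step recursion gives
\begin{align*}
\hat{\Delta}_\pi^m \; \leq \; \tfrac{5}{4}\cdot 2^{-m} + \tfrac{9}{8}\mathring{\Delta}_\pi + a_m + 2\sum_{s=1}^{m-1} \frac{a_s}{16^{m-s}} \; \leq \; 2\left(\mathring{\Delta}_\pi + 2^{-m} + \sum_{s=1}^m \frac{a_s}{16^{m-s}}\right),
\end{align*}
where the second inequality uses $5/4 \leq 2$, $9/8 \leq 2$, and $a_m \leq 2a_m$. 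The base case $m=1$ is immediate as soon as $\hat{\Delta}_\pi^0$ is taken to be an $\order(1)$ constant; since rewards are in $[0,H]$ this initialization is trivial and $\hat{\Delta}_\pi^0/16$ is absorbed by the $2^{-m}$ term.

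The only technical point that requires care is that the constants must balance exactly: the contraction $1/16$ in the $\calE_{est}$ slack, combined with the factor $2$ coming out of the definition of $\hat{\Delta}_\pi^m$, produces an effective contraction of $1/8$, which is precisely what is needed for the $16^{m-s}$-weighted sum to keep its coefficient of $2$ after one step of unrolling --- this is the identity $1/(8\cdot 16^{m-1-s}) = 2/16^{m-s}$. The $\mathring{\Delta}_\pi$ and $2^{-m}$ pieces accumulate geometrically with ratio $1/8$, which is comfortably below the prefactor $2$ in the statement, so the induction closes without needing to strengthen the inductive hypothesis.
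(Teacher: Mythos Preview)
Your proof is correct and follows essentially the same route as the paper: induction on $m$, using Lemma~\ref{lem: error on best policy estimation} to upper-bound $\hat r_*^m$ and $\calE_{est}$ to lower-bound $\hat r_m(\pi)$, yielding a one-step recursion that unrolls into the stated geometric sum. The paper presents the argument by first bounding $\hat r_*^m-\hat r_m(\pi)$ and then case-splitting on whether this exceeds $2^{-m}$; you instead absorb the $2^{-m}$ upfront via $\hat\Delta_\pi^m\le 2^{-m}+(\hat r_*^m-\hat r_m(\pi))_+$, which is a cosmetic difference. One small remark: your base-case justification (``rewards are in $[0,H]$'') is not quite the right reason; the relevant fact is that the algorithm initializes $\Pi_0^1=\Pi_{1/T}$, so effectively $\hat\Delta_\pi^0=\epsilon_0=1$, and then $1/16$ is indeed absorbed by $2\cdot 2^{-1}=1$ as you say.
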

\begin{proof}
    The proof is by induction on $m$. For $m=1$, the claim is trivially true because $\hat{\Delta}_\pi^1 \leq 2*2^{-1}=1$. Next, suppose that the claim holds for $m-1$. Using Lemma~\ref{lem: error on best policy estimation} and the definition of $\calE_{est}$, we write 
    \begin{align*}
        \hat{r}_*^m - \hat{r}_m(\pi)
        & = (\hat{r}_*^m - \mathring{V}) + (\mathring{V} - V_*^\pi) + (V_*^\pi - \hat{r}_m(\pi))\\
        & \leq 2\lambda_1\lambda_2 \frac{2(HC_m^p+C_m^r)}{N_m} + \mathring{\Delta}_\pi + 2\lambda_1\lambda_2 \frac{2(HC_m^p+C_m^r)}{N_m} + \frac{1}{16}\hat{\Delta}_\pi^{m-1}
    \end{align*}
    Now using the induction hypothesis, we have
    \begin{align*}
        \hat{r}_*^m - \hat{r}_m(\pi)
        & \leq \mathring{\Delta}_\pi + 2\lambda_1\lambda_2 \frac{4(HC_m^p+C_m^r)}{N_m} + \frac{1}{16} \left(2\mathring{\Delta}_\pi + 2*2^{-(m-1)} + \sum_{s=1}^{m-1}\frac{8\lambda_1\lambda_2(HC_m^p+C_m^r)}{16^{m-1-s}N_s} \right)\\
        & \leq 2\mathring{\Delta}_\pi + 2*2^{-m} + \sum_{s=1}^{m}\frac{8\lambda_1\lambda_2(HC_s^p+C_s^r)}{16^{m-s}N_s}
    \end{align*}
    Now by the definition of $\hat{\Delta}_\pi^m$, if $\hat{r}_*^m - \hat{r}_m(\pi) \leq 2^{-m}$, then we directly have $\hat{\Delta}_\pi^m < 2^{-m}$. Otherwise if $\hat{r}_*^m - \hat{r}_m(\pi) > 2^{-m}$, then $\hat{\Delta}_\pi^m < \hat{r}_*^m - \hat{r}_m(\pi)$
\end{proof}

\begin{lemma}[similar to Lemma 7 \cite{gupta2019better}]
\label{lem: gap estimation 2}
    Suppose that $\calE_{est}$ occurs. Then for all epochs $m$ and all policies $\pi$
    \begin{align*}
        \hat{\Delta}_\pi^m \geq \frac{1}{4}\mathring{\Delta}_\pi  - 3\sum_{s=1}^{m}\frac{8\lambda_1\lambda_2(HC_s^p+C_s^r)}{16^{m-s}N_s} - \frac{3}{8}2^{-m}
        : = \frac{1}{4}\mathring{\Delta}_\pi  - 3\rho_m - \frac{3}{8}2^{-m}
    \end{align*}
\end{lemma}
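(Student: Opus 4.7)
The plan is to mirror the inductive proof of Lemma~\ref{lem: gap estimation 1}, but chase the lower envelope of $\hat{\Delta}_\pi^m$ instead of the upper one. The first observation is purely a consequence of the algorithm's definition $j^m(\pi) = \inf\{j : 2^{-j} < \max\{2^{-m}, \hat{r}_*^m - \hat{r}_m(\pi)\}\}$: by minimality of the infimum, $\hat{\Delta}_\pi^m = 2^{-j^m(\pi)}$ is at least half of this maximum, so in particular $\hat{\Delta}_\pi^m \geq \tfrac{1}{2}(\hat{r}_*^m - \hat{r}_m(\pi))$. Therefore it suffices to lower bound $\hat{r}_*^m - \hat{r}_m(\pi)$ by twice the target right-hand side.

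For the inductive step, I would combine the lower bound on $\hat{r}_*^m$ from Lemma~\ref{lem: error on best policy estimation} with the upper bound on $\hat{r}_m(\pi)$ coming directly from $\calE_{est}$, which yields
\begin{align*}
\hat{r}_*^m - \hat{r}_m(\pi) \geq \mathring{\Delta}_\pi - \frac{8\lambda_1\lambda_2(HC_m^p+C_m^r)}{N_m} - \frac{1}{8}\hat{\Delta}_{\mathring{\pi}}^{m-1} - \frac{1}{16}\hat{\Delta}_\pi^{m-1}.
\end{align*}
Then I would plug in the already-proven Lemma~\ref{lem: gap estimation 1} at step $m-1$: applied to $\pi$ this gives $\hat{\Delta}_\pi^{m-1} \leq 2\mathring{\Delta}_\pi + 2^{-(m-2)} + 2\rho_{m-1}$, and applied to $\mathring{\pi}$ (using $\mathring{\Delta}_{\mathring{\pi}} = 0$) it gives $\hat{\Delta}_{\mathring{\pi}}^{m-1} \leq 2^{-(m-2)} + 2\rho_{m-1}$. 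Collecting terms produces $\hat{r}_*^m - \hat{r}_m(\pi) \geq \frac{7}{8}\mathring{\Delta}_\pi - \frac{8\lambda_1\lambda_2(HC_m^p+C_m^r)}{N_m} - \frac{3}{4}\cdot 2^{-m} - \frac{3}{8}\rho_{m-1}$.

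The final step is arithmetic: divide by two and use the one-step recursion $\rho_m = \frac{8\lambda_1\lambda_2(HC_m^p+C_m^r)}{N_m} + \frac{1}{16}\rho_{m-1}$ to re-express the bound in terms of $\rho_m$ alone. The two slacks one must track are $\frac{7}{16}\mathring{\Delta}_\pi - \frac{1}{4}\mathring{\Delta}_\pi = \frac{3}{16}\mathring{\Delta}_\pi \geq 0$ and $3\rho_m - \frac{3}{16}\rho_{m-1} - \frac{4\lambda_1\lambda_2(HC_m^p+C_m^r)}{N_m} = \frac{20\lambda_1\lambda_2(HC_m^p+C_m^r)}{N_m} \geq 0$; both have the correct sign, so the claim $\hat{\Delta}_\pi^m \geq \frac{1}{4}\mathring{\Delta}_\pi - 3\rho_m - \frac{3}{8}\cdot 2^{-m}$ follows. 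The base case $m=1$ uses the convention $\hat{\Delta}_\pi^0 = 1$ matching the algorithm's initialization $\Pi_0^1 = \Pi_{1/T}$, under which $\hat{\Delta}_\pi^1 \geq 1/4$ dominates the RHS whenever the RHS is positive (and otherwise the bound is vacuous). The main obstacle is really just careful bookkeeping: the $\frac{1}{16}$ confidence shrinkage in $\calE_{est}$, the factor-of-two slack in Lemma~\ref{lem: gap estimation 1}, and the $\frac{1}{16}$ decay in $\rho_m$ must all mesh so that the discounted-corruption tail of the inductive hypothesis is absorbed into $\rho_m$ with a little room to spare, and if any of these constants had been slightly weaker the induction would not close.
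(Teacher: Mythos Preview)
Your argument is correct and follows essentially the same route as the paper: start from $\hat{\Delta}_\pi^m \geq \tfrac{1}{2}(\hat{r}_*^m - \hat{r}_m(\pi))$, lower bound $\hat{r}_*^m$ via Lemma~\ref{lem: error on best policy estimation}, upper bound $\hat{r}_m(\pi)$ via $\calE_{est}$, and then control both $\hat{\Delta}_{\mathring{\pi}}^{m-1}$ and $\hat{\Delta}_\pi^{m-1}$ using Lemma~\ref{lem: gap estimation 1}. Your bookkeeping (the $\tfrac{7}{8}$, the recursion $\rho_m = \tfrac{8\lambda_1\lambda_2(HC_m^p+C_m^r)}{N_m} + \tfrac{1}{16}\rho_{m-1}$, and the slack check) is all sound.

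Two small remarks. First, despite your framing, there is no induction here: you never invoke the claim at step $m-1$, only Lemma~\ref{lem: gap estimation 1}. The paper's proof is likewise a direct argument, not inductive. Second, your separate base-case paragraph is both unnecessary and slightly off: the assertion that $\hat{\Delta}_\pi^1 \geq 1/4$ ``dominates the RHS whenever the RHS is positive'' fails when $\mathring{\Delta}_\pi > 1$ (rewards can sum to $H$). Fortunately your ``inductive step'' already covers $m=1$ verbatim once you note $\hat{\Delta}_\pi^0 = 1$ from the initialization, so the base-case discussion can simply be dropped.
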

\begin{proof}
    \begin{align*}
        \hat{\Delta}_\pi^m 
        & \geq \frac{1}{2} (\hat{r}_*^m - \hat{r}_m(\pi)) \\
        & \geq \left(\frac{\mathring{V}}{2} - \lambda_1\lambda_2\frac{2(HC_m^p+C_m^r)}{N_m} - \frac{1}{16}\Delta_{\mathring{\pi}}^{m-1} \right) 
            - \left(\frac{V_*^\pi}{2} + \lambda_1\lambda_2\frac{2(HC_m^p+C_m^r)}{N_m} + \frac{1}{32}\mathring{\Delta}_\pi^{m-1} \right) \\
        & = \frac{\mathring{\Delta}_\pi}{2} - \lambda_1\lambda_2\frac{4C_m}{N_m} - \frac{3}{32}\hat{\Delta}_{\mathring{\pi}}^{m-1} \\
        & \geq \frac{\mathring{\Delta}_\pi}{2} - \lambda_1\lambda_2\frac{4C_m}{N_m} - \frac{6}{32} \left(\mathring{\Delta}_\pi + 2^{-(m-1)} + \sum_{s=1}^{m-1}\frac{8\lambda_1\lambda_2(HC_s^p+C_s^r)}{16^{m-s}N_s} \right)\\
        & \geq \frac{1}{4}\mathring{\Delta}_\pi  - 3\underbrace{\sum_{s=1}^{m}\frac{8\lambda_1\lambda_2(HC_s^p+C_s^r)}{16^{m-s}N_s}}_{\rho_m} - \frac{3}{8}2^{-m}
    \end{align*}
    The first inequality is by the definition of $\hat{\Delta}_\pi^m$. The first term of the second inequality comes from Lemma~\ref{lem: error on best policy estimation} and the second term of the second inequality comes from the occurrence of $\calE_{est}$. And the third inequality comes from Lemma~\ref{lem: gap estimation 1}.
\end{proof}

\begin{cor}
\label{cor: gap estimation 2}
Suppose that $\calE_{est}$ occurs. Then for all epoch $m$ and all policies $\pi$.
\begin{align*}
    \epsilon_j \geq \frac{1}{4}\mathring{\Delta}_j^m  - 3\rho_{m-1} - \frac{3}{8}2^{-(m-1)}
\end{align*}
\end{cor}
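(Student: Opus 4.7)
The plan is to derive this corollary as an immediate consequence of Lemma~\ref{lem: gap estimation 2} applied at epoch $m-1$, combined with the construction of the policy partition $\{\Pi_j^m\}_{j \in S_m}$ in Lines 19--21 of Algorithm~\ref{alg:main_1}.

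First I would invoke Lemma~\ref{lem: gap estimation 2} at epoch $m-1$ (rather than $m$): for every policy $\pi$,
\begin{align*}
    \hat{\Delta}_\pi^{m-1} \;\geq\; \tfrac{1}{4}\mathring{\Delta}_\pi \;-\; 3\rho_{m-1} \;-\; \tfrac{3}{8}\,2^{-(m-1)}.
\end{align*}
The key observation is that by the partition rule on Lines 20--21, any policy $\pi \in \Pi_j^m$ was placed there because $j^{m-1}(\pi) = j$, which is exactly the statement $\hat{\Delta}_\pi^{m-1} = 2^{-j} = \epsilon_j$. Substituting this identity into the displayed inequality yields, for every $\pi \in \Pi_j^m$,
\begin{align*}
    \epsilon_j \;\geq\; \tfrac{1}{4}\mathring{\Delta}_\pi \;-\; 3\rho_{m-1} \;-\; \tfrac{3}{8}\,2^{-(m-1)}.
\end{align*}

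The left-hand side does not depend on $\pi$, so taking the maximum over $\pi \in \Pi_j^m$ on the right-hand side and recalling $\mathring{\Delta}_j^m := \max_{\pi \in \Pi_j^m} \mathring{\Delta}_\pi$ gives the claimed inequality. The only edge case to address is $m=1$, which is handled by the initialization $\Pi_0^1 = \Pi_{1/T}$ with $\epsilon_0 = 1$, under which the bound is vacuously satisfied since $\mathring{\Delta}_\pi \le 1$ for all $\pi$. There is no real obstacle here: the work has already been done inside Lemma~\ref{lem: gap estimation 2}, and this corollary just reindexes that bound by the epoch shift $m \mapsto m-1$ and converts the per-policy estimate $\hat{\Delta}_\pi^{m-1}$ into the partition-level quantity $\epsilon_j$ via the partition's defining property.
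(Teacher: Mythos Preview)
Your argument is correct and matches the paper's proof exactly: apply Lemma~\ref{lem: gap estimation 2} at epoch $m-1$, use $\hat{\Delta}_\pi^{m-1} = \epsilon_j$ for $\pi \in \Pi_j^m$, and take the maximum over $\pi$ to pass to $\mathring{\Delta}_j^m$. One minor quibble: your $m=1$ edge-case remark that $\mathring{\Delta}_\pi \le 1$ is not quite right in this paper's setup (total rewards lie in $[0,H]$, not $[0,1]$), but the paper's own proof does not treat this case either and the corollary is only invoked for later epochs in the regret analysis.
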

\begin{proof}
    The above lemma~\ref{lem: gap estimation 2} holds for all $\pi \in \Pi_j^m$ including the one leads to $\Delta_j^m$. Furthermore, we have $\epsilon_j =  \hat{\Delta}_\pi^{m-1} $. Therefore, we get the target result.
\end{proof}

\subsection{Lemmas related to unfinished sub-algorithms}
\label{sec (BARBAR_RL sec): unfinished subalgo}
Now we will show that, if the sub-algorithm is unfinished, then the number of repeated sub-epochs can be upper bounded in terms of corruption. 

\begin{lemma}
\label{lem (BARBAR_RL sec): upper bound for repeated epoch numbers}
    If $\calE_{unfinished}$ occurs, then we have
    \begin{align*}
        \Gamma_m - 1 
        \leq C_m^p \epsilon_{m}/(H^2|\calS||\calA|\ln(10T|\Pi_{1/T}|/\delta_{overall})
        \leq C_m^p/(H^2|\calS||\calA|\ln(10T|\Pi_{1/T}|/\delta_{overall})
    \end{align*}
\end{lemma}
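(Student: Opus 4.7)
The plan is simple: aggregate the per-sub-epoch lower bound from $\calE_{unfinished}$ across all failed sub-epochs in epoch $m$ and then eliminate $N_m$ using the lower bound from Lemma~\ref{lem (BARBAR-RL sec) : bound on the epoch length}. By construction the first $\Gamma_m - 1$ sub-epochs of epoch $m$ are precisely those whose parallel sub-algorithms failed to finish (the last sub-epoch $\Gamma_m$ is the one where either all $\textsc{EstAll}_j^m$ terminate or the horizon $T$ is exhausted). So under $\calE_{unfinished}$ each of these $\Gamma_m - 1$ sub-epochs contributes at least $\tfrac14\sqrt{\ln(10T|\Pi_{1/T}|/\delta_{overall})\,N_m/(\lambda_1\lambda_2)}$ to the transition-corruption budget.

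First I would write
\begin{align*}
    C_m^p \;=\; \sum_{k=1}^{\Gamma_m} C_{m,k}^p \;\geq\; \sum_{k=1}^{\Gamma_m - 1} C_{m,k}^p \;\geq\; (\Gamma_m - 1)\cdot \frac{1}{4}\sqrt{\frac{\ln(10T|\Pi_{1/T}|/\delta_{overall})}{\lambda_1\lambda_2}\,N_m},
\end{align*}
which already isolates $\Gamma_m - 1$ on one side. Next I would plug in the lower bound $N_m \geq 16\cdot 128^2\,\lambda_1\lambda_2\,|\calS|^2 H^4 |\calA|^2 \ln(10T|\Pi_{1/T}|/\delta_{overall})/\epsilon_m^2$ from Lemma~\ref{lem (BARBAR-RL sec) : bound on the epoch length}. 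Taking square roots, the $\lambda_1\lambda_2$ factors cancel and one $\ln(\cdot)$ survives, giving
\begin{align*}
    \sqrt{\frac{\ln(10T|\Pi_{1/T}|/\delta_{overall})}{\lambda_1\lambda_2}\,N_m} \;\geq\; \frac{4\cdot 128\,|\calS| H^2 |\calA|\,\ln(10T|\Pi_{1/T}|/\delta_{overall})}{\epsilon_m}.
\end{align*}
Substituting this back and rearranging yields the first stated inequality, up to the absolute constant $128$ that is absorbed into $\leq$ (or can be made explicit). The second inequality is immediate from $\epsilon_m = 2^{-m} \leq 1$.

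There is no real obstacle here; the only thing that requires a moment of care is verifying that the indexing convention for $\calE_{unfinished}$ really gives a lower bound on the corruption in each of the $\Gamma_m - 1$ failed sub-epochs (and not in the final completed one, where the bound need not hold). This is consistent with the Step 1 description in the main text, which quantifies over $k \in [\Gamma_m - 1]$, and with the proof of Lemma~\ref{lem: lower bound for corruption on failed epoch}, which only invokes Theorem~\ref{them (EstAll sec)：sample complexity} for unfinished sub-algorithms. Everything else is routine bookkeeping of constants.
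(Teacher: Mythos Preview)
Your proposal is correct and follows essentially the same approach as the paper: both aggregate the per-sub-epoch lower bound from $\calE_{unfinished}$ over the $\Gamma_m-1$ failed sub-epochs and then eliminate $N_m$ via the lower bound in Lemma~\ref{lem (BARBAR-RL sec) : bound on the epoch length}. The only cosmetic difference is that the paper first rearranges $\calE_{unfinished}$ into an upper bound on $N_m$ in terms of $\min_{k\in[\Gamma_m-1]}(C_{m,k}^p)^2$ and then bounds the minimum by the average, whereas you sum the lower bounds directly; these are algebraically equivalent and yield the same constant $128$ in the denominator.
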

\begin{proof}
Condition on $\calE_{unfinished}$, we have 
\begin{align*}
     N_m 
     &\leq \frac{16\lambda_1\lambda_2}{\ln(10T|\Pi_{1/T}|/\delta_{overall})} \min_{k \in [\Gamma_m-1]}{(C_{m,k}^p)^2} \\
     &\leq \frac{16\lambda_1\lambda_2}{\ln(10T|\Pi_{1/T}|/\delta_{overall})} (\frac{C_m^p - C_{m,\Gamma_m}^p}{\Gamma_m-1})^2\\
     &\leq \frac{16\lambda_1\lambda_2}{\ln(10T|\Pi_{1/T}|/\delta_{overall})} (\frac{C_m^p}{\Gamma_m-1})^2
\end{align*}
Also from Lemma~\ref{lem (BARBAR-RL sec) : bound on the epoch length}, we know a lower bound on $N_m$. Therefore we have
\begin{align*}
    16*128^2\lambda_1\lambda_2|\calS|^2H^4 |\calA|^2\ln(10T|\Pi_{1/T}|/\delta_{overall})/(\epsilon_m)^2 
    \leq  \frac{16\lambda_1\lambda_2}{\ln(10T|\Pi_{1/T}|/\delta_{overall})} (\frac{C_m^p}{\Gamma_m-1})^2
\end{align*}
Rearranging the above inequality we get 
\begin{align*}
    \Gamma_m - 1 \leq C_m^p \epsilon_{m}/(128 H^2|\calS||\calA|\ln(10T|\Pi_{1/T}|/\delta_{overall})
\end{align*}
\end{proof}

\subsection{Proof for main theorem}

\begin{proof}
    Assume $\calE_{overall}$, $\calE_{est}$ and $\calE_{unfinished}$ occur. Now we decompose the regret into
    \begin{align*}
        \Reg
        &= \sum_{m=1}^M \sum_{\pi \in \Pi} \sum_{k=1}^{\Gamma_m} \sum_{t \in E_m^k}(\mathring{V} - V_*^\pi)\one\{ \pi_t = \pi \}
            + T(V^*-\mathring{V})\\
        &\leq \sum_{m=1}^M \sum_{j \in S_m} \sum_{k=1}^{\Gamma_m} \mathring{\Delta}_j^m \tilde{n}_j^{m,k}  + \order(H)\\
        &\leq \underbrace{\frac{3}{2} \sum_{m=1}^M \sum_{j \in S_m}  \mathring{\Delta}_j^m n_j^{m,\Gamma_m}}_\textsc{non-repeat term} 
            + \underbrace{\frac{3}{2} \sum_{m=1}^M \sum_{k=1}^{\Gamma_m -1}\sum_{j \in S_m} \mathring{\Delta}_j^m n_j^{m,k} }_\textsc{repeat term} 
            + \order(H)
    \end{align*}
    where the last inequality comes from event $\calE_{overall}$.
    For convenience, denote $R_j^{m,k} = \mathring{\Delta}_j^m n_j^{m,k}$, $\beta = 512\sqrt{\lambda_1\lambda_2\ln(10T|\Pi_{1/T}|/\delta_{overall})}|\calS||\calA|H^2$ and we know by definition that $\epsilon_j \leq \beta \sqrt{1/n_j^m}$.
    \\\\
    \textbf{We first give upper bounds on term $R_j^{m,k}$ for any fixed $m,k$}. Notice that when the algorithm goes to epoch $m$, it suggests that all the sub-algorithms ran before $m$ are completed. Therefore, we will use lemmas stated in Section~\ref{sec (BARBAR_RL sec): completed subalgo} for the following proof.
    \paragraph{Case 1:} $\rho_{m-1} < \mathring{\Delta}_j^m/64$. In this case, if $\mathring{\Delta}_j^m/2 \geq 2^{-(m-1)}$, given $\calE_{est}$ , we can use Corollary~\ref{cor: gap estimation 2} to get
    \begin{align*}
        \epsilon_j 
        \geq \frac{1}{4}\mathring{\Delta}_j^m - 3\rho_{m-1} - \frac{3}{8}2^{-(m-1)}
        \geq \left(\frac{1}{4} - \frac{3}{64} - \frac{3}{16}\right)\mathring{\Delta}_j^m
        = \frac{\mathring{\Delta}_j^m}{64}
    \end{align*}
    If $\mathring{\Delta}_j^m/2 < 2^{-(m-1)}$, then $\epsilon_j \geq \frac{\mathring{\Delta}_j^m}{64}$ trivially holds.
    
    In turn, we have $n_j^m \leq \beta/\epsilon_j^2 $ according to the definition of $n_j^m$, from which follows 
    \begin{align*}
        R_i^{m,k} \leq 64\beta \sqrt{n_j^m}
    \end{align*}
    This can be also be written as 
    \begin{align*}
        R_i^{m,k} 
        \leq \mathring{\Delta}_j^m \beta/ \epsilon_j^2 
        \leq 64^2 \mathring{\Delta}_j^m \beta/ (\mathring{\Delta}_j^m)^2
        = 64^2 \beta/ \mathring{\Delta}_j^m
        \leq 64^2 \beta/ \min_{\pi \in \Pi_{1/T}} \mathring{\Delta}_\pi
    \end{align*} 
    \paragraph{Case 2: } $\rho_{m-1} \geq \mathring{\Delta}_j^m/64$. We again use the upper bound of $n_j^m \leq \beta^2/\epsilon_{m}^2$
    \begin{align*}
        R_i^{m,k} \leq 96 \beta^2 \rho_{m-1}/\epsilon_m^2 = 96 \beta^2 \rho_{m-1} 2^{2m}
    \end{align*}
    By combining these two cases, we have
    \begin{align*}
        R_j^{m,k} 
        \leq 64\beta \min\left\{\sqrt{n_j^m}, \frac{64}{\min_{\pi \in \Pi_{1/T}}\mathring{\Delta}_\pi}\right\} + 96 \beta^2 \rho_{m-1}/\epsilon_m^2
    \end{align*}
    \textbf{Secondly, we deal with the \textsc{non-repeat term}}. By summing $R_j^{m,k}$ over all policy sets for $k = \Gamma_m$, we get
    \begin{align*}
         &\sum_{m=1}^M \sum_{j \in S_m} \mathring{\Delta}_j^m n_j^{m,\Gamma_m}\\
         &\leq 64\beta \sum_{m=1}^M \min\left\{\sqrt{\log T N_m},\frac{64\log T}{\min_{\pi \in \Pi_{1/T}}\mathring{\Delta}_\pi} \right\} + 96 \beta^2 (\log T) \sum_{m=1}^M \rho_{m-1}2^{2m} \\
         &\leq 64\beta (\log T) \min\left\{\sqrt{T},\frac{64\log T}{\min_{\pi \in \Pi_{1/T}}\mathring{\Delta}_\pi} \right\}  + 96 \beta^2 (\log T) \sum_{m=1}^M \rho_{m-1}2^{2m}\\
        &\leq \tilde{\order}\left( |\calS|^{2}|\calA|^{3/2}H^2\min\{ H^{1/2}, |\calS|^{1/2}|\calA|^{1/2} \} \ln(1/\delta_{overall}) \min\left\{\sqrt{T},\frac{1}{\min_{\pi \in \Pi_{1/T}}\mathring{\Delta}_\pi} \right\} \right) \\
            & \quad + \tilde{\order}\left(|\calS||\calA| \ln(1/\delta_{overall}) (HC^p+C^r) \right)\\
        &= \tilde{\order}\left( |\calS|^{2}|\calA|^{3/2}H^2\min\{ H^{1/2}, |\calS|^{1/2}|\calA|^{1/2} \} \ln(1/\delta_{overall}) \min\left\{\sqrt{T},\frac{1}{\min_{\pi \in \Pi}\Delta_\pi} \right\} \right) \\
            & \quad + \tilde{\order}\left(|\calS||\calA| \ln(1/\delta_{overall}) (HC^p+C^r) \right)
    \end{align*}
    The last equation comes from the fact that $\Pi_{1/T}$ is $1/T$-net of policy and $\sqrt{T} > \frac{1}{\min_{\pi \in \Pi_{1/T}}\mathring{\Delta}_\pi}$ when $\min_{\pi \in \Pi_{1/T}}\mathring{\Delta}_\pi < o(\sqrt{1/T})$. 

    Here the result of $\sum_{m=1}^M \rho_{m-1}2^{2m}$ comes from the following,
    \begin{align*}
        \sum_{m=1}^M \beta^2 \rho_{m-1}/\epsilon_m^2
        &= \sum_{m=1}^M \beta^2 \sum_{s=1}^{m-1}4^m \frac{8\lambda_1\lambda_2(HC_s^p+C_s^r)}{16^{m-1-s}N_s} \\
        &= 8\lambda_1\lambda_2\beta^2\sum_{s=1}^M (HC_s^p+C_s^r)  \sum_{m=s}^{M}4^m \frac{1}{16^{m-1-s}N_s} \\
        &\leq 8\lambda_1\lambda_2\beta^2\sum_{s=1}^M (HC_s^p+C_s^r)  \sum_{m=s}^{M}4^m \frac{4^{-s}}{16^{m-1-s}\beta^2}\\
        & = 32 \lambda_1\lambda_2 \sum_{s=1}^M (HC_s^p+C_s^r)  \sum_{m=s}^{M} \frac{4^{m-1-s}}{16^{m-1-s}}\\
        & = \tilde{\order}\left(|\calS||\calA|\ln(1/\delta_{overall}) (HC^p+C^r) \right)
    \end{align*}
where the first equality use changing order of summation techniques and the second inequality comes from the lower bound of $N_s$ in Lemma~\ref{lem (BARBAR-RL sec) : bound on the epoch length}.

    \textbf{Thirdly, we consider the \textsc{repeat term}.} From the previous analysis, we have
    \begin{align*}
        \sum_{m=1}^M \sum_{k=1}^{\Gamma_m -1} \sum_{j \in S_m} \mathring{\Delta}_j^m n_j^{m,k}
        \leq 64\beta\sum_{m=1}^M \sum_{k=1}^{\Gamma_m -1} \sqrt{ (\log T) N_m} + \sum_{m=1}^M (\Gamma_{m'} -1) 96 \beta^2 (\log T)  \rho_{m-1}2^{2m}
    \end{align*}
    \\
    First, given $\calE_{unfinished}$, we can bound the first term by 
    \begin{align*}
        64\beta \sum_{m=1}^M \sum_{k=1}^{\Gamma_m -1} \sqrt{\log T} C_{m,k}^p \frac{16\sqrt{\lambda_1\lambda_2}}{\sqrt{\ln(10T|\Pi_{1/T}|/\delta_{overall})}}
        \leq \tilde{\order}\left( H^2|\calS|^{2}|\calA|^2\ln(1/\delta_{overall}) C^p\right)
    \end{align*}
    Then, by Lemma~\ref{lem (BARBAR_RL sec): upper bound for repeated epoch numbers}, we can bound the first term by bounding the $\Gamma_m-1$ as below
    \begin{align*}
        &\beta^2 (\log T) \sum_{m=1}^M (\Gamma_m -1) \rho_{m-1}2^{2m} \\
        & \leq \beta^2 (\log T) \sum_{m=1}^M \frac{C_{m}^p}{H^2|\calS||\calA|\ln(10T|\Pi_{1/T}|/\delta_{overall})} \rho_{m-1}2^{2m}\\
        & \leq  \frac{\log T}{H^2|\calS||\calA|\ln(10T|\Pi_{1/T}|/\delta_{overall})}\left(\sum_{m'=1}^M C_m^p \right) \left( \sum_{m=1}^M \beta^2  \sum_{m' \in M} \rho_{m-1}2^{2m} \right) \\
        & \leq  \frac{C^p (\log T)^2}{H^2|\calS||\calA|\ln(10T|\Pi_{1/T}|/\delta_{overall})} \left( \beta^2 \sum_{m=1}^M \rho_{m-1}2^{2m} \right) \\
        &\leq \tilde{\order}\left(\frac{1}{H^2} C^p(HC^p+C^r)\right)
    \end{align*}
Combing all the upper bounds, we get the final result.
\end{proof}

\subsection{Relationship between PolicyGapComlexity and the GapCompelxity in \citet{DBLP:conf/nips/SimchowitzJ19}}
\label{app: details on gap complexity}

In the main paper, we assume a single starting states. Here, in order to make a comparison, we remove this assumption and assume a starting distribution over all states. As stated in the \textbf{Related Work} section, the most common GapComplexity used in reinforcement learning is in the following form. Note that to aid the exposition, we omit other states and actions dependency below.
\begin{align*}
    & \text{gap}_h(s,a) = V_h^*(s) - Q^*_h(s,a),\\
    & \text{GapComplexity} = \frac{1}{\min_{s,a,h}  \text{gap}_h(s,a)}
\end{align*}
To get an intuition about its relation to policy gap $\Delta_\pi$, consider the optimal policy $\pi^*$ and the second optimal policy $\pi'$. If there is a tie, we just arbitrarily choose two policies with closest behavior. Define
\begin{align*}
    \mathcal{H}_{identical} = \{ h| \forall h' \in [0,h-1], \forall s\in \calS_{h'}, \pi^*(s) = \pi'(s) \}
\end{align*}
where $\calS_h = \{s \in \calS | \max_{\pi \in \Pi} \text{Prob} \left( \text{$\pi$ visits $s$ at $h$} \right) > 0\}$ and $\calS_0 = \emptyset$. So $\mathcal{H}_{identical}$ is a collection of steps, before which, the optimal policy $\pi^*$ and the second optimal policy $\pi'$ are unidentifiable. 
Note that $h=1$ is always included in $\mathcal{H}_{identical}$. Now we have
\begin{align*}
    \Delta_{\pi'}
    & = V^* - V_*^{\pi'} \\
    & = \max_{h \in \mathcal{H}_{identical}} \sum_{s \in \calS} \text{Prob} \left( \text{$\pi^*$ visits $s$ at $h$} \right) \left(V^*_h(s) - Q_{*,h}^{\pi'}(s,\pi'(s))\right) \\
    &\geq \max_{h \in \mathcal{H}_{identical}} \sum_{s \in \calS} \text{Prob} \left( \text{$\pi^*$ visits $s$ at $h$} \right) \left(V^*_h(s) - Q_{h}^*(s,\pi'(s))\right) \\
    & \geq \min_{s,a,h} \text{gap}_h(s,a) \max_{h \in \mathcal{H}_{identical}} \sum_{s \in \calS} \text{Prob} \left( \text{$\pi^*$ visits $s$ at $h$} \right) \one\{ \pi^*(s) \neq \pi'(s) \} 
\end{align*}
It is easy to see that $\max_{h \in \mathcal{H}_{identical}} \sum_{s \in \calS} \text{Prob} \left( \text{$\pi^*$ visits $s$ at $h$} \right) \one\{ \pi^*(s) \neq \pi'(s) \} $ is positive due to the definition of  $\mathcal{H}_{identical}$. 
\\\\
Recall the the PolicyGapComplexity is defined as $\frac{1}{\Delta_{\pi'}}$, so we have 
\begin{align*}
    \text{PolicyGapComplexity}
    &\leq \frac{1}{\max_{h \in \mathcal{H}_{identical}} \sum_{s \in \calS} \text{Prob} \left( \text{$\pi^*$ visits $s$ at $h$} \right) \one\{ \pi^*(s) \neq \pi'(s) \} } \frac{1}{\min_{s,a,h} \text{gap}_h(s,a)} \\
    & \leq \frac{\text{GapComplexity}}{\max_{h \in \mathcal{H}_{identical}} \sum_{s \in \calS} \text{Prob} \left( \text{$\pi^*$ visits $s$ at $h$} \right) \one\{ \pi^*(s) \neq \pi'(s) \} }
\end{align*}
Therefore, with respect to the gap term, the PolicyGapComplexity and the GapComplexity are close when $\max_{h \in \mathcal{H}_{identical}} \sum_{s \in \calS} \text{Prob} \left( \text{$\pi^*$ visits $s$ at $h$} \right) \one\{ \pi^*(s) \neq \pi'(s) \}$ is large.
\\\\
Because step $h=1$ is always included in $\mathcal{H}_{identical}$, so one nontrivial case satisfying the above condition is that the starting states are uniformly chosen from some subset of states. It is easy to see that the single starting states is also one of the special cases. Besides, there are also many other cases satisfying the above condition, for example, a MDP that starts from various states and always concentrates on some states with equal chances in later steps included in $\mathcal{H}_{identical}$.
\\\\
Finally, whether the PolicyGapComplexity-dependent bound can also get some refined dependency on $|\calS|,|\calA|,H$ like the GapComplexity-dependent bound in \citet{xu2021fine} in some special cases remains further investigation.

\section{Meta-algorithm and Results for cheated Adversary}
\label{sec: app-mainAlgo2}

\begin{algorithm}[H] 
\caption{ \textsc{Brute-force-Policy-Elimination-RL} }
\begin{algorithmic}[1] 
\STATE \textbf{Input:} time horizon $T$, confidence $\delta_{overall}$
\STATE Construct a $1/T$-net for non-stationary policies, denoted as $\Pi_{1/T}$.
\STATE Initialize $S_1 = 0, \Pi^1 = \Pi$. And for $j \in \log T$, initialize $\epsilon_j = 2^{-j}. \epsilon_{sim}^j = \epsilon_j/128 $
\STATE Set $\lambda_1=6 |\calS||\calA| log(H^2|\calS||\calA|/\epsilon_{sim}) $ and $\lambda_2= 12 \ln(8T/\delta_{overall})$ 
\FOR{ epoch $m = 1,2,\ldots$}
    \STATE Set $\delta^m = \delta_{overall}/(5T)$
    \STATE Set $F^m = \frac{8|\calS|^2H^4 |\calA|^2\ln(2|\Pi^m|/\delta^m)}{(\epsilon_{sim}^m)^2}$ 
    \STATE Set $N_m = 2 \lambda_1\lambda_2 F^m$ and $T_m^s = T_{m-1}^s+N_{m-1}$
    \STATE Initialize a sub-algorithm $\textsc{EstAll}^m = \text{EstAll}(\epsilon_{sim}^m,\Pi^m,\delta^m,F^m)$ 
    \FOR{ $t=T_m^s, T_m^s+1, \ldots,  T_{m}^s+N_{m}-1$} \label{line: start an epoch}
        \STATE Play the policy according the awaiting $\textsc{EstAll}^m.\textsc{continue}$. Then continue running $\textsc{EstAll}^m$ until the next \textsc{rollout} is met. (If no more \textsc{rollout} needed, then just uniformly play one ) 
    \ENDFOR    
    \IF{ $\textsc{EstAll}^m$ is unfinished}
        \STATE Set $T_m^s = T_{m}^s+N_{m}$ and repeat the whole process from line 9. ~~~~ $\triangleright$ So each repeat is a sub-epoch.
    \ELSE
        \STATE Obtain $\hat{r}_m(\pi)$ for all $\pi$.
    \ENDIF
    \STATE Update the active policy set \label{line: eliminate condition}
    \begin{align*}
        \Pi^{m+1} \leftarrow \{\pi | \max_{\pi' \in \Pi^m}\hat{r}_m(\pi') - \hat{r}_m(\pi) \leq 8\lambda_1\lambda_2H^2\sqrt{|\calS||\calA|\ln(10T|\Pi_{1/T}|/\delta_{overall})T}/N_m + \frac{1}{8}\epsilon_m\}
    \end{align*}
    
\ENDFOR
\end{algorithmic}
\end{algorithm}


\begin{theorem}
\label{them  (PolicyElim ssc) ：regret for PolicyElim}
The regret is upper bounded by
\begin{align*}
    \Reg &\leq \tilde{\order}\left(|\calS|^2|\calA|^{3/2}H^2 \min\{\sqrt{H},\sqrt{|\calS||\calA|}\} \ln(1/\delta_{overall})\sqrt{T}\right) \\
        &\quad + \tilde{\order}\left( \frac{( C^r)^2}{H^3|\calS||\calA|}
        +H|\calS||\calA|(C^p)^2\right) 
\end{align*}
\end{theorem}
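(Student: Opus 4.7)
}

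The plan is to mirror the four-part structure used for Theorem~\ref{them (BARBAR-RL sec)：regret for BARBAR-RL}, but adapted to the brute-force elimination rule. First I would define three high-probability events analogous to those in Appendix~\ref{sec: app-mainAlgo1}: $\calE_{overall}$ asserting the actual number of rollouts in each sub-epoch lies in $[\tfrac12 N_m, \tfrac32 N_m]$; $\calE_{est}$ asserting that whenever $\textsc{EstAll}^m$ completes its sub-epoch $E_m^k$, the returned estimate satisfies $|\hat r_m(\pi) - V_*^\pi| \leq O\!\left(\lambda_1\lambda_2(HC^p_{m,k} + C^r_{m,k})/N_m\right) + \tfrac{1}{16}\epsilon_m$; and $\calE_{unfinished}$ asserting that in every unfinished sub-epoch one has $C^p_{m,k} \gtrsim \sqrt{N_m/(\lambda_1\lambda_2 \ln(T|\Pi_{1/T}|/\delta_{overall}))}$. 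Because the algorithm deterministically executes the policies requested by a single sub-algorithm, no per-policy sub-sampling inequality is needed here, so the analogues of Lemmas~\ref{lem (BARBAR-RL sec): high prob for calE_{overall} }--\ref{lem: lower bound for corruption on failed epoch} follow directly from Theorems~\ref{them (EstAll sec)：sample complexity} and~\ref{them (EstAll sec)：value est} combined with a union bound over epochs.

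Next I would show that the best policy $\mathring\pi \in \Pi_{1/T}$ survives every elimination step, provided the corruption in the last (completed) sub-epoch $E_m^{\Gamma_m}$ is at most $\lesssim H^2 \sqrt{|\calS||\calA| T}$. Under this condition the estimation error bound in $\calE_{est}$ is dominated by the enlarged radius $b_m := 8\lambda_1\lambda_2 H^2 \sqrt{|\calS||\calA|\ln(10T|\Pi_{1/T}|/\delta_{overall}) T}/N_m + \tfrac{1}{8}\epsilon_m$, so for every $\pi' \in \Pi^m$ we get $\hat r_m(\pi') - \hat r_m(\mathring\pi) \leq b_m$ and therefore $\mathring\pi \in \Pi^{m+1}$. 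If instead some epoch violates the corruption condition, the entire remaining horizon $TH$ can be bounded by $TH \leq (HC^p + C^r)^2/(H^3|\calS||\calA|)$ after rearranging, which is absorbed directly into the stated $\tfrac{(C^r)^2}{H^3|\calS||\calA|} + H|\calS||\calA|(C^p)^2$ terms. Given survival, the elimination rule immediately yields the gap bound
\begin{align*}
 \forall \pi \in \Pi^{m+1}: \quad \Delta_\pi
 \ \leq \ 2 b_m + O\!\left(\lambda_1\lambda_2(HC^p_{m,\Gamma_m} + C^r_{m,\Gamma_m})/N_m\right).
\end{align*}

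Then I would bound the regret separately over completed and unfinished sub-epochs. For the completed sub-epochs, using the gap bound from the previous epoch together with $N_m \asymp 4 N_{m-1}$ and $N_m \asymp 1/\epsilon_m^2$, the contribution $N_m \max_{\pi\in\Pi^m}\Delta_\pi$ telescopes to the stated $\tilde O\!\left(|\calS|^2|\calA|^{3/2}H^2 \min\{\sqrt{H},\sqrt{|\calS||\calA|}\} \sqrt{T}\right)$ term, with an additive $\tilde O(HC^p + C^r)$ residue that is already dominated by the other corruption terms. For the unfinished sub-epochs, $\calE_{unfinished}$ gives $\sqrt{N_m} \lesssim C^p_{m,k}/\sqrt{\ln(\cdot)}$ for each discarded $k$; combining $(\Gamma_m - 1)\sqrt{N_m} \lesssim C^p_m$ with $\sqrt{N_m} \lesssim C^p_m$ and multiplying by the per-episode loss $H$ yields a total of $\tilde O\!\left(H|\calS||\calA|\sum_m (C^p_m)^2\right) \leq \tilde O\!\left(H|\calS||\calA|(C^p)^2\right)$.

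The main obstacle I anticipate is the survival argument in the middle step: unlike the non-cheated case, the adversary can decide its per-episode corruption after seeing the policy that $\textsc{EstAll}^m$ chose to roll out, so the cleaner per-sub-set splitting used in Lemma~\ref{lem (BARBAR-RL sec): total amount of corruption for EstAll} is unavailable and one must bound estimation error by the raw sub-epoch corruption $C^p_{m,k} + C^r_{m,k}$. Choosing the radius $b_m$ to scale with $\sqrt{T}$ (rather than $\epsilon_m$) is what buys robustness, but it is also precisely what inflates the corruption dependence from linear to quadratic; carefully tracking the case split ``best policy survives'' versus ``optimal policy eliminated, charge $TH$ to $(C^p)^2/(C^r)^2$'' is where the proof has to be written with care to obtain the clean additive form in the theorem statement.
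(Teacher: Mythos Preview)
Your proposal is correct and follows essentially the same approach as the paper's proof in Appendix~\ref{sec: app-mainAlgo2}: the same three high-probability events, the same survival lemma for $\mathring\pi$, the same gap bound, and the same non-repeat/repeat decomposition. The only stylistic difference is that the paper handles the ``too much corruption'' case once upfront (assuming $HC^p + C^r \leq H^2\sqrt{|\calS||\calA|\ln(|\Pi_{1/T}|)T}$, else the trivial $HT$ regret is absorbed into the squared terms) rather than per-epoch as you sketch; this makes the survival argument you flag as the ``main obstacle'' essentially a one-line consequence of $\calE_{est}$, so that concern is unwarranted.
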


\paragraph{Remark} In Section 2.2 in \cite{bogunovic2020stochastic}, they proved that in order to get $\tilde{\order}(\sqrt{HT})$, the corruption terms can go as low as $\tilde{\Omega}(\frac{C^2}{\log C})$ for the linear bandits. Therefore, we conjecture that $\tilde{\order}((C^r+C^p)^2)$ term is also unavoidable in our setting.

\subsection{Regret Analysis for Theorem~\ref{them  (PolicyElim ssc) ：regret for PolicyElim}}

For convenience, we rearrange this upper bound a little bit. So now our target is to show the follows. 
\begin{align*}
    \Reg &\leq \tilde{\order}\left(|\calS|^2|\calA|^{3/2}H^2 \min\{\sqrt{H},\sqrt{|\calS||\calA|}\} \ln(1/\delta_{overall})\sqrt{T}\right) \\
        &\quad + \tilde{\order}\left( \frac{(HC^p + C^r)^2}{H^3|\calS||\calA|\ln(|\Pi_{1/T}|)}
        +H\frac{\ln(1/\delta_{overall})}{\ln(|\Pi_{1/T}|/\delta_{overall})}|\calS||\calA|(C^p)^2
    \right) 
\end{align*}
We only need to consider the case that $C^r + HC^p \leq H^{2}\sqrt{|\calS||\calA|\ln(|\Pi_{1/T}|)T}$, otherwise we will get a trivial linear regret.  
\\\\
It easy to see that the following events sill holds with at least $1 - \delta_{overall}$ probability,
\begin{align*}
    & \calE_{overall}: = \left\{\forall m, \forall k \in [\Gamma_m]: \tilde{n}^{m,k} \in [\frac{1}{2}n^m, \frac{3}{2} n^m ] \right\} \\
    &\calE_{est}: = \left\{\forall m,\pi \in \Pi^{m}: |\hat{r}^m(\pi) - V_*^\pi| \leq 2\lambda_1\lambda_2\frac{2(HC_{m,\Gamma_m}^p+C_{m,\Gamma_m}^r)}{N_m} + \frac{1}{16}\epsilon_m \right\}\\
    &\calE_{unfinished}: = \left\{\forall m, \forall k \in [\Gamma_m]: C_{m,k}^p 
        \geq \frac{1}{4}\sqrt{\frac{\ln(10T|\Pi|/\delta_{overall})}{\lambda_1\lambda_2}N_m} \right\} \text{ and } \calE_{overall} 
\end{align*}
Notice here we will permanently eliminate a policy instead of maintaining different subset of policies, therefore, in $\calE_{est}$, all the active policies have same levels of estimation. Next we show that given the above events, we will never eliminate the best policy from the active policy set $\Pi^{m+1}$.

Again we use the following notations $\mathring{\pi} = \argmax_{\pi \in \Pi_{1/T}} V_*^\pi$, $\mathring{V} = V_*^{\mathring{\pi}}$ and $\mathring{\Delta}_{\pi} = \mathring{V} - V_*^\pi$.

\begin{lemma}
\label{lem (PolicyElim sec) : maintain best policy}
    For any epoch $m$, we always have $\mathring{\pi} \in \Pi^{m}$.
\end{lemma}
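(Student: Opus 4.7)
The plan is to argue by induction on $m$, using the high-probability event $\calE_{est}$ defined just above together with the working assumption $HC^p + C^r \leq H^2\sqrt{|\calS||\calA|\ln(|\Pi_{1/T}|)T}$, which is flagged right before the lemma as the only regime where the theorem is non-trivial (otherwise the bound degenerates into a linear-in-$T$ term that dominates). The base case is immediate from the initialization $\Pi^1 = \Pi_{1/T}$, which contains $\mathring{\pi}$ by the definition of $\mathring{\pi}$ as the maximizer of $V_*^\pi$ over $\Pi_{1/T}$.

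For the inductive step, suppose $\mathring{\pi} \in \Pi^m$. Let $\pi^{\dagger} \in \argmax_{\pi' \in \Pi^m} \hat{r}_m(\pi')$, so that the elimination rule in Line 18 deletes $\mathring{\pi}$ iff $\hat{r}_m(\pi^{\dagger}) - \hat{r}_m(\mathring{\pi}) > 8\lambda_1\lambda_2 H^2 \sqrt{|\calS||\calA|\ln(10T|\Pi_{1/T}|/\delta_{overall})T}/N_m + \frac{1}{8}\epsilon_m$. The plan is to upper-bound the left-hand side and show the inequality fails. Apply $\calE_{est}$ to both $\pi^{\dagger}$ (an active policy by construction) and $\mathring{\pi}$ (active by the induction hypothesis), and then use $V_*^{\pi^{\dagger}} - V_*^{\mathring{\pi}} \leq 0$ by optimality of $\mathring{\pi}$, to obtain
\begin{align*}
\hat{r}_m(\pi^{\dagger}) - \hat{r}_m(\mathring{\pi})
&= \bigl(\hat{r}_m(\pi^{\dagger}) - V_*^{\pi^{\dagger}}\bigr) + \bigl(V_*^{\pi^{\dagger}} - V_*^{\mathring{\pi}}\bigr) + \bigl(V_*^{\mathring{\pi}} - \hat{r}_m(\mathring{\pi})\bigr) \\
&\leq 4\lambda_1\lambda_2 \cdot \tfrac{2(HC_{m,\Gamma_m}^p + C_{m,\Gamma_m}^r)}{N_m} + \tfrac{1}{8}\epsilon_m .
\end{align*}

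The final step reduces to the algebraic check that the residual corruption term on the right fits inside the enlarged elimination radius. Since $HC_{m,\Gamma_m}^p + C_{m,\Gamma_m}^r \leq HC^p + C^r$ trivially, and the working assumption gives $HC^p + C^r \leq H^2 \sqrt{|\calS||\calA|\ln(10T|\Pi_{1/T}|/\delta_{overall})T}$, we conclude $8\lambda_1\lambda_2(HC_{m,\Gamma_m}^p + C_{m,\Gamma_m}^r)/N_m \leq 8\lambda_1\lambda_2 H^2\sqrt{|\calS||\calA|\ln(10T|\Pi_{1/T}|/\delta_{overall})T}/N_m$, matching the threshold term exactly and so $\mathring{\pi} \in \Pi^{m+1}$.

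I do not anticipate any real obstacle. The whole point of the brute-force enlargement of the confidence radius by a factor on the order of $\sqrt{HT}$ is precisely to dominate any adversarial corruption budget allowed under the non-trivial regime, so the induction essentially falls out of $\calE_{est}$ with matched constants. The only care required is bookkeeping: (i) confirming that the constant $8$ appearing in front of the threshold in Line 18 is at least the constant that emerges from summing two copies of the $\calE_{est}$ deviation; and (ii) noting that $C_{m,\Gamma_m}^{rp} \leq C^{rp}$ is enough, so we never need a finer per-epoch corruption decomposition at this stage.
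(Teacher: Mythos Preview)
Your proof is correct and follows essentially the same approach as the paper: apply the event $\calE_{est}$ to both the empirical maximizer and $\mathring{\pi}$, use $V_*^{\pi^{\dagger}} \leq \mathring{V}$, and then invoke the working assumption $HC^p + C^r \leq H^2\sqrt{|\calS||\calA|\ln(|\Pi_{1/T}|)T}$ to show the resulting bound does not exceed the enlarged elimination threshold in Line~18. The only difference is that you frame the argument explicitly as induction on $m$, which is in fact slightly more careful than the paper's version, since $\calE_{est}$ is stated only for $\pi \in \Pi^m$ and so one needs $\mathring{\pi} \in \Pi^m$ as an inductive hypothesis before applying it.
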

\begin{proof}
    Given $\calE_{est}$, let $\hat{\pi}_m = \argmax_{\pi' \in \Pi^m}\hat{r}_m(\pi') $, we know that 
    \begin{align*}
        \hat{r}_m(\hat{\pi}_m) - \hat{r}_m(\mathring{\pi})
        &\leq V_*^{\hat{\pi}_m} - \mathring{V} + 4\lambda_1\lambda_2\frac{2(HC_{m,\Gamma_m}^p+C_{m,\Gamma_m}^r)}{N_m} + \frac{1}{8}\epsilon_m\\
        &\leq 4\lambda_1\lambda_2\frac{2(HC_{m,\Gamma_m}^p+C_{m,\Gamma_m}^r)}{N_m} + \frac{1}{8}\epsilon_m\\
        &\leq 8\lambda_1\lambda_2H^{2}\sqrt{|\calS||\calA|\ln(|\Pi_{1/T}|)T}/N_m + \frac{1}{8}\epsilon_m
*    \end{align*}
    where the last inequality comes from the assumption that $C^r + HC^p \leq H^{2}\sqrt{|\calS||\calA|\ln(|\Pi_{1/T}|)T}$. Now by the elimination condition in Line~18
    , we can get our target result.
\end{proof}
Then we can upper bounded $\max_{\pi \in \Pi^m}  \Delta_\pi$ as follows
\begin{lemma}
\label{lem (PolicyElim sec) : epoch regret}
    For any active policy set $\Pi^m$, we have
    \begin{align*}
        \max_{\pi \in \Pi^m}  \Delta_\pi 
        \leq \tilde{\order}\left(|\calS|^2|\calA|^{3/2}H^{3/2}(\frac{1}{\sqrt{N_{m}}} + \frac{\sqrt{HT}}{N_{m}}) \right)
    \end{align*}
\end{lemma}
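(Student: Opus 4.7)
The plan is to bound $\mathring{\Delta}_\pi := \mathring{V} - V_*^\pi$ for every $\pi \in \Pi^m$, since by the $\epsilon$-net property $|V^* - \mathring{V}| \leq 8H^2|\calS|/T$ is negligible compared to the target rate, so $\Delta_\pi \leq \mathring{\Delta}_\pi + o(1)$. Fix any $\pi \in \Pi^m$. By construction $\pi$ was not removed at the elimination step of epoch $m-1$, so Line~18 of the algorithm guarantees
\begin{align*}
    \max_{\pi' \in \Pi^{m-1}} \hat{r}_{m-1}(\pi') - \hat{r}_{m-1}(\pi)
    \leq \frac{8\lambda_1\lambda_2 H^2\sqrt{|\calS||\calA|\ln(10T|\Pi_{1/T}|/\delta_{overall})T}}{N_{m-1}} + \frac{\epsilon_{m-1}}{8}.
\end{align*}

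Next, I would invoke Lemma~\ref{lem (PolicyElim sec) : maintain best policy} to conclude that $\mathring{\pi} \in \Pi^{m-1}$, so $\hat{r}_{m-1}(\mathring{\pi})$ is at most the maximum on the left. Then I apply the concentration event $\calE_{est}$ twice, once to $\mathring{\pi}$ and once to $\pi$, to replace empirical estimates by true values, giving
\begin{align*}
    \mathring{V} - V_*^\pi \leq \bigl(\hat{r}_{m-1}(\mathring{\pi}) - \hat{r}_{m-1}(\pi)\bigr) + 2\left( 2\lambda_1\lambda_2 \cdot \frac{2(HC^p_{m-1,\Gamma_{m-1}} + C^r_{m-1,\Gamma_{m-1}})}{N_{m-1}} + \frac{\epsilon_{m-1}}{16} \right).
\end{align*}
Under the working assumption (stated at the start of the section) that $C^r + HC^p \leq H^2\sqrt{|\calS||\calA|\ln|\Pi_{1/T}|\,T}$ --- else the claimed regret bound is already dominated by the trivial linear regret --- the corruption contribution collapses to $\tilde{\order}(H^2\sqrt{|\calS||\calA|T}/N_{m-1})$, which is of the same order as the elimination threshold.

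Finally, I would use the geometric schedule $\epsilon_{m-1} = 2\epsilon_m$ and the corresponding $N_{m-1} = N_m/4$ so that $1/N_{m-1} = \tilde{\order}(1/N_m)$ and $\epsilon_{m-1} = \order(\epsilon_m)$. Inverting the explicit definition $N_m = 2\lambda_1\lambda_2 F^m$ with $F^m = \tilde{\order}(|\calS|^2|\calA|^2H^4/\epsilon_m^2)$ and $\lambda_1 = \tilde{\order}(|\calS||\calA|)$ gives $\epsilon_m = \tilde{\order}(|\calS|^{3/2}|\calA|^{3/2}H^2/\sqrt{N_m})$ (in the same spirit as Lemma~\ref{lem (BARBAR-RL sec) : bound on the epoch length}). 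Plugging this into the two surviving terms reproduces the $1/\sqrt{N_m}$ piece (from $\epsilon_{m-1}$) and the $\sqrt{HT}/N_m$ piece (from the elimination threshold plus the converted corruption term).

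The main obstacle is not conceptual but the bookkeeping of $|\calS|$, $|\calA|$, $H$ exponents: the $\ln|\Pi_{1/T}| \leq H|\calS|\ln|\calA|$ factor from the $\epsilon$-net size feeds into both the elimination threshold and the inversion of $\epsilon_m$, and one needs to verify that after balancing these contributions the combined exponents land exactly at $|\calS|^2|\calA|^{3/2}H^{3/2}$ as stated. No new probabilistic argument beyond the already-established events $\calE_{overall}, \calE_{est}, \calE_{unfinished}$ is needed.
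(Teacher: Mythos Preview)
Your proposal is correct and follows essentially the same approach as the paper: use the elimination condition (Line~18) together with $\mathring{\pi}\in\Pi^{m-1}$ from Lemma~\ref{lem (PolicyElim sec) : maintain best policy}, apply $\calE_{est}$ twice, absorb the corruption contribution via the global assumption $C^r+HC^p\le H^2\sqrt{|\calS||\calA|\ln|\Pi_{1/T}|\,T}$, and then convert $\epsilon_{m-1}$ and $1/N_{m-1}$ to $\epsilon_m$ and $1/N_m$ through the geometric schedule and the explicit inversion of $N_m$. The paper simply writes the argument with the index shift $m\to m+1$ (bounding $\max_{\pi\in\Pi^{m+1}}\Delta_\pi$ via epoch-$m$ quantities) rather than your $m-1\to m$, but this is purely notational.
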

\begin{proof}
    Let $\pi' = \argmax_{\pi \in \Pi^{m+1}}  \Delta_\pi$
    \begin{align*}
    \mathring{\Delta}_{\pi'}
    & \leq \mathring{V} - V_*^{\pi'}\\
    &\leq \hat{r}_m(\mathring{\pi}) - \hat{r}_m(\pi')+ 4\lambda_1\lambda_2\frac{2(HC_{m,\Gamma_m}^p+C_{m,\Gamma_m}^r)}{N_m} + \frac{1}{8}\epsilon_m\\
    & \leq 8\lambda_1\lambda_2H^{2}\sqrt{|\calS||\calA|\ln(|\Pi_{1/T}|)T}/N_m + \frac{1}{4}\epsilon_{m+1}\\
    & = \tilde{\order}\left(|\calS||\calA|\ln(1/\delta_{overall})H^{2}\sqrt{|\calS||\calA|\ln(|\Pi_{1/T}|)}\frac{\sqrt{T}}{N_{m+1}} + |\calS|^{3/2}|\calA|^{3/2}H^2\sqrt{\frac{\ln(1/\delta_{overall})\ln(10T|\Pi_{1/T}|/\delta_{overall})}{N_{m+1}}} \right)\\
    &\leq \tilde{\order}\left( |\calS|^{3/2}|\calA|^{3/2}H^2\ln(1/\delta_{overall})\sqrt{\ln(|\Pi_{1/T}|)} (\sqrt{T}+\sqrt{\frac{1}{N_{m+1}}})\right)\\
    &\leq \tilde{\order}\left( |\calS|^2|\calA|^{3/2}H^2 \min\{\sqrt{H},\sqrt{|\calS||\calA|}\} \ln(1/\delta_{overall})\left(\sqrt{T}+\sqrt{\frac{1}{N_{m+1}}}\right) \right)
\end{align*}
Here the second inequality comes from Lemma~\ref{lem (PolicyElim sec) : maintain best policy}. The third inequality comes from the elimination condition in Line 18
and the assumption that the assumption that $C^r + HC^p \leq H^{2}\sqrt{|\calS||\calA|\ln(|\Pi_{1/T}|)T}$. Replace the value of $\epsilon_m$ in the term of $N_m$ we get the target result.
\end{proof}
Now given $\calE_{overall}$, we again have regret that 
\begin{align*}
    \Reg
    \leq \underbrace{\frac{3}{2} \sum_{m=1}^M (\max_{\pi \in \Pi^m}  \Delta_\pi) N_m}_\textsc{non-repeat term} 
        + \underbrace{\sum_{m=1}^M \sum_{k=1}^{\Gamma_m -1} N_m }_\textsc{repeat term} 
\end{align*}
First, we deal with the \textsc{non-repeat term}. By applying Lemma~\ref{lem (PolicyElim sec) : epoch regret}, we have
\begin{align*}
    \sum_{m=1}^M (\max_{\pi \in \Pi^m}  \Delta_\pi) N
    &\leq \sum_{m=1}^M\tilde{\order}\left(|\calS|^2|\calA|^{3/2}H^2 \min\{\sqrt{H},\sqrt{|\calS||\calA|}\} \ln(1/\delta_{overall})\left(\sqrt{T}+\sqrt{\frac{1}{N_{m+1}}}\right)\right)\\
    &\leq \tilde{\order}\left(|\calS|^2|\calA|^{3/2}H^2 \min\{\sqrt{H},\sqrt{|\calS||\calA|}\} \ln(1/\delta_{overall})\sqrt{T}\right)
\end{align*}
Next, we deal with the \textsc{repeat term}. By $\calE_{unfinished}$, we have
\begin{align*}
    \sum_{m=1}^M (\max_{\pi \in \Pi^m}  \Delta_\pi) N_m
    \leq H \sum_{m=1}^M \sum_{k=1}^{\Gamma_m -1} N^m 
    &\leq H|\calA||\calS| \frac{\ln(1/\delta_{overall})}{\ln(10T|\Pi_{1/T}|/\delta_{overall})}\sum_{m=1}^M \sum_{k=1}^{\Gamma_m -1} (C_{m,k}^p)^2 \\
    &\leq H|\calA||\calS| (C^p)^2 
\end{align*}
\section{Analysis for \textbf{EstAll} Sub-algorithm} 
\label{sec: app-EstAll}

\subsection{Preliminaries}
We define the set of episodes that the learner interacts with environment as $\calI_{est}$ and the total corruption included these episodes as $C_{est}^{r(p)} = \sum_{t \in \calI_{est}} c_t^{r(p)}$.

\subsection{Key results}

\begin{theorem}[Sample complexity restated here]
     Suppose $F \geq \frac{8|\calS|^2H^4 |\calA|^2\ln(2|\Pi|/\delta_{est})}{\epsilon_{est}^2}$ and $\tau \geq 6$. Under the corruption assumption $C_{est}^p \leq \frac{\epsilon_{est}F}{2|\calS||\calA|H^2}$, with probability at least $1- \delta_{est}$, the algorithm interacts with environment at most
     \begin{align*}
         |\calS||\calA|F\tau log(H^2|\calS||\calA|/\epsilon_{est})
     \end{align*}
     times.
     Note, if the algorithm interacts with environment more than the above number of times, then with probability at least $1- \delta_{est}$, $C_{est}^p > \frac{\epsilon_{est}F}{2|\calS||\calA|H^2}$
\end{theorem}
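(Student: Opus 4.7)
}
The plan is to treat $|\calD_{s,a}|$ as a potential function for each state-action pair $(s,a)$ and to show that every time Line~7 of Algorithm~\ref{algo: EstAll} fires with failure at $(s,a)$, this potential either (i) goes from $0$ to at least $\Omega(\epsilon_{est}F/(|\calS||\calA|H))$ or (ii) doubles. Since the contribution of a single trajectory to $\sum_{s,a}|\calD_{s,a}|$ is exactly $H$, we always have $|\calD_{s,a}| \leq HF\tau \cdot |\Pi_\calD|$, but we will in fact upper-bound $|\calD_{s,a}|$ by a quantity on the order of $HF$ in the relevant regime; then the doubling can occur at most $\log_2\!\bigl(H^2|\calS||\calA|/\epsilon_{est}\bigr)$ times per pair. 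Each trigger costs $F\tau$ environment rollouts, and there are $|\calS||\calA|$ pairs, giving the stated bound.

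First, I would set up the probabilistic machinery. Let $f^{\pi}(s,a)$ be the random number of visits of $(s,a)$ by a single rollout of $\pi$ under $\calM^*$, and let $f_t^\pi(s,a)$ be the analogous quantity under the (possibly corrupted) $\calM_t$. The key distributional identity is that when $\textsc{simulate}(\pi,\calD,F)$ is run, a ``\emph{Fail}'' at $(s,a)$ in simulated trajectory $i$ means the $i$-th draw from $\calD_{s,a}$ is unavailable; so the expected number of fails at $(s,a)$ over $F$ simulated trajectories is $\max\{0,\,F\,\E[f^\pi(s,a)] - |\calD_{s,a}|\}$, up to corruption error bounded by $\sum_{t \in \calI_{est}} c_t^p \leq C_{est}^p$ (here I would use the standard total-variation-to-occupation-measure lemma, e.g. in the style of simulation lemma, to convert $\ell_1$-corruption on transitions to an additive error in visit probabilities). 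Thus the trigger condition $\sum_i \mathbf{1}[\text{Fail at }(s,a)] \geq \tau\epsilon_{est}F/(|\calS||\calA|H)$ together with concentration implies
\[
  F\,\E[f^\pi(s,a)] \;\geq\; |\calD_{s,a}| + \tfrac{\tau}{2}\cdot\tfrac{\epsilon_{est}}{|\calS||\calA|H}\,F \;-\; \tfrac{F}{H}\,C_{est}^p,
\]
and the corruption assumption $C_{est}^p \leq \epsilon_{est}F/(2|\calS||\calA|H^2)$ absorbs the last term into a constant-fraction slack of the middle term (this is exactly where the stated corruption budget is used).

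Next, I would analyze the effect of the ensuing $\textsc{rollout}(\pi,\tau,\calD,F)$: it appends $F\tau$ fresh trajectories, so $|\calD_{s,a}|$ grows in expectation by $F\tau\,\E[f^\pi(s,a)]$, again up to the corruption slack. Using Bernstein/Hoeffding on the sum of indicator visits (with $\tau=6$ chosen to make the concentration constants work out), case~(i) gives $|\calD_{s,a}|$ jumping from $0$ to at least $\tfrac{\tau}{2}\cdot\tfrac{\epsilon_{est}}{|\calS||\calA|H}\,F$, while case~(ii) gives $|\calD_{s,a}| \geq 2|\calD_{s,a}|_{\text{old}}$. Because no policy ever produces more than $H$ visits to a single pair per trajectory, once $|\calD_{s,a}| \geq HF$ the simulation can no longer fail at $(s,a)$; so at most $\log_2\!\bigl(HF \,/\, \tfrac{\epsilon_{est}F}{|\calS||\calA|H}\bigr) = \log_2(H^2|\calS||\calA|/\epsilon_{est})$ triggers can occur at this pair. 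Summing over $|\calS||\calA|$ pairs and multiplying by $F\tau$ per trigger yields the claim. The contrapositive (``if we exceed this budget, then $C_{est}^p > \epsilon_{est}F/(2|\calS||\calA|H^2)$'') is immediate because all high-probability events above were derived under the corruption assumption.

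The main obstacle I anticipate is the concentration step under adversarial corruption: the rollouts in $\textsc{rollout}$ are drawn from a \emph{non-stationary} sequence $\{\calM_t\}$, so neither $\E[f^\pi(s,a)]$ nor $\E[|\calD_{s,a}|\text{ growth}]$ is a single population parameter; I would handle this by replacing the population quantity with the time-average $\bar f_\pi(s,a) := \tfrac{1}{|\calI|}\sum_{t\in\calI}\E_t[f^\pi(s,a)]$ over the relevant rollout window, controlling $|\bar f_\pi(s,a) - \E[f^\pi(s,a)]|$ by $C_{est}^p/(H|\calI|)$ via the simulation lemma, and then applying Freedman's inequality to the resulting martingale. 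A union bound over $\pi \in \Pi_{est}$, over $(s,a)$, and over the at most $|\calS||\calA|\log(H^2|\calS||\calA|/\epsilon_{est})$ trigger events gives the overall $1-\delta_{est}$ guarantee, provided the $\log(2|\Pi_{est}|/\delta_{est})$ factor in $F$ is large enough to drive the per-event failure probability down---which is precisely the role of the stated lower bound on $F$.
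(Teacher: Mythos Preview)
Your proposal is correct and follows the same overall architecture as the paper (potential-doubling at each $(s,a)$, capped by a ceiling of order $HF$, then multiply by $|\calS||\calA|$ pairs and $F\tau$ rollouts per trigger). The one substantive difference is the choice of potential. You track the empirical buffer size $|\calD_{s,a}|$ directly and argue it doubles after each trigger; the paper instead tracks the deterministic population quantity $\mu_{\max}^{\Pi_\calD}(s,a):=\max_{\pi'\in\Pi_\calD}\E[f^{\pi'}(s,a)]$. The paper's Lemma~\ref{lem (EstAll sec): doubling expected visiting time} shows, by lower-bounding $|\calD_{s,a}|$ using only the contribution of the single policy $\pi''=\arg\max_{\pi'\in\Pi_\calD}\mu^{\pi'}(s,a)$ and upper-bounding the simulated visit count by $\tfrac32 F\mu^\pi(s,a)$, that the trigger cannot fire when $\mu^\pi(s,a)\le 2\mu_{\max}^{\Pi_\calD}(s,a)$; the contrapositive then gives $\mu_{\max}^{\hat\Pi_\calD}(s,a)\ge 2\mu_{\max}^{\Pi_\calD}(s,a)$ (Lemma~\ref{lem (EstAll sec): condition when add new pi}). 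Your route is a bit more direct and avoids introducing $\mu_{\max}^{\Pi_\calD}$ as an intermediary, at the cost of doubling a random rather than a deterministic potential; the paper's route makes the constants and the role of $\tau\ge 6$ slightly cleaner.

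Two small corrections. First, Hoeffding on $[0,H]$-valued visit counts only gives you $\sum_i Y_i\le \tfrac32 F\mu^\pi(s,a)$ (plus the corruption slack), not the sharper ``$F\,\E[f^\pi(s,a)]\ge |\calD_{s,a}|+\cdots$'' you wrote; with $\tau=6$ the arithmetic still closes, but you should carry the $3/2$. Second, Freedman is unnecessary: conditional on the buffer, the $F$ simulated trajectories are i.i.d.\ (the paper's Claim~1), and the $F\tau$ rollout trajectories are independent across episodes; plain Hoeffding plus the simulation lemma (your time-average trick, which is exactly the paper's Lemma~\ref{lem (EstAll sec): transition corruption effect visiting prob}) handles the non-stationary corrupted sequence.
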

\begin{proof}
By Lemma~\ref{lem (EstAll sec): condition when add new pi}, we know that with probability at least $1 - \delta_{est}$, for any fixed state-action pair $(s,a)$,
Line 7 in Algorithm~\ref{algo: EstAll} will fail at most $\log_2(H^2|\calS||\calA|/\epsilon_{est})$ times by doubling from $\frac{\epsilon_{est}}{H|\calS||\calA|}$ to $H$. So the maximum number of policies that will be added into policy set $\Pi_\calD$ is at most $\log_2(H^2|\calS||\calA|/\epsilon_{est})|\calS||\calA|$ . Now because for each policy added into $\Pi_\calD$, we will greedily sample $F\tau$ times according to Algorithm~\ref{algo: ROLLOUT}, so the total interaction time is at most $\log_2(H^2|\calS||\calA|/\epsilon_{est})|\calS||\calA|F\tau$ times.
\end{proof}

\begin{theorem}[Estimation correctness restated here]
     Suppose $F \geq \frac{8|\calS|^2H^4 |\calA|^2\ln(2|\Pi|/\delta_{est})}{\epsilon_{est}^2}$ and $\tau \geq 6$. Then for all $\pi \in \Pi$, with probability at least $1 - \delta_{est}$,
     \begin{align*}
          \big| \hat{r}(\pi) - V^{\pi}(s_1) \big| \leq (1+\tau)\epsilon_{est} + (HC_{est}^p+C_{est}^r)/F
     \end{align*}
\end{theorem}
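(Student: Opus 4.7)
The plan is to decompose the total error into a sampling error and a corruption-bias error:
\begin{align*}
    \big|\hat r(\pi) - V^{\pi}\big|
    \;\le\; \underbrace{\big|\hat r(\pi) - \bar V^{\pi}\big|}_{\text{(I) sampling}}
    \;+\; \underbrace{\big|\bar V^{\pi} - V^{\pi}\big|}_{\text{(II) corruption bias}},
\end{align*}
where $\bar V^{\pi} := \tfrac{1}{F}\sum_{i=1}^{F} V_i^{\pi}$ and $V_i^{\pi} := \mathbb{E}[r(z_i^{\pi})]$ is the conditional expectation of the $i$-th simulated return of $\pi$ with respect to the (possibly corrupted) per-episode MDPs whose transition/reward samples get consumed inside that simulated trajectory. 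Introducing this ``average value function'' absorbs the adversary into a deterministic quantity and lets me separate concentration from bias cleanly.

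For (I), I split on whether $\pi \in \Pi_\calD$. If $\pi \in \Pi_\calD$, then \textsc{Rollout} produced $F\tau$ fresh on-policy trajectories and SIMULATE consumes them to produce $F$ simulated trajectories whose returns are independent, $[0,H]$-valued, and have means $V_i^{\pi}$; Hoeffding with the chosen $F \ge 8|\calS|^2 H^4|\calA|^2 \log(2|\Pi|/\delta_{est})/\epsilon_{est}^2$ gives $|\hat r(\pi)-\bar V^{\pi}|\le \epsilon_{est}$ with failure probability at most $\delta_{est}/(2|\Pi|)$. If $\pi \notin \Pi_\calD$, then the non-failed simulated trajectories are still independent conditional on the consumption pattern (each unused buffer element of $\calD_{s,a}$ is a fresh draw from the transition/reward law of the episode that produced it), so Hoeffding again gives a $\epsilon_{est}$-bound on the non-failed portion. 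The Failed trajectories contribute $0$ rather than a return in $[0,H]$; but because the Line~7 guard did \emph{not} trigger for $\pi$, each $(s,a)$ accumulates at most $\tfrac{\tau\epsilon_{est}}{|\calS||\calA|H}F$ fails, and since every failed trajectory is charged to exactly one $(s,a)$, the total number of failed trajectories is at most $\tfrac{\tau\epsilon_{est}}{H}F$. Each shifts the empirical mean by at most $H/F$, contributing an extra $\tau\epsilon_{est}$, so $|\hat r(\pi)-\bar V^{\pi}|\le (1+\tau)\epsilon_{est}$.

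For (II), I use a simulation-lemma-style telescoping bound. Each $V_i^{\pi}$ equals $V^{\pi}$ computed under an ``average'' MDP whose transition and reward at $(s,a)$ are the empirical averages over the real episodes whose buffer entries feed into trajectory $i$; comparing to $\calM^*$ a standard layer-by-layer unrolling gives $|V_i^{\pi}-V^{\pi}|\le H\sum_h\|P_{t(i,h)}(\cdot|s,a)-P^*(\cdot|s,a)\|_1 + \sum_h |R_{t(i,h)}-R^*|$. Averaging over the $F$ simulated trajectories and amortizing each of the $|\calI_{est}|$ real episodes' per-episode budgets $c_t^p,c_t^r$ across the $F$ buckets yields $|\bar V^{\pi}-V^{\pi}|\le (HC_{est}^p + C_{est}^r)/F$. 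Combining (I) and (II) and union bounding over $\pi\in\Pi$ (at budget $\delta_{est}/|\Pi|$) gives the claim.

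\paragraph{Main obstacle.} The technical heart is the independence/measurability argument used in (I): the assignment of buffer elements to simulated trajectories in SIMULATE is itself random (it depends on the random order in which \textsc{Rollout} filled each $\calD_{s,a}$ and on the path of $\pi$ through the MDP), so one cannot directly apply Hoeffding to ``$F$ independent trajectories'' without justification. I would handle this by revealing buffer entries in the order SIMULATE consumes them and arguing that at each consumption step the next entry in $\calD_{s,a}$ is, conditional on the revealed history, an independent draw from the (episode-indexed) transition-reward law, so the $r(z_i^{\pi})$ form a martingale-difference-like sequence with mean $V_i^{\pi}$ and range $[0,H]$. The factor $\tau\ge 6$ in \textsc{Rollout} is what guarantees that buffers remain non-empty long enough for this exchangeability argument to cover all $F$ simulated trajectories of every $\pi\in\Pi$ simultaneously with high probability.
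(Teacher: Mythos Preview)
Your proposal is correct and follows essentially the same route as the paper: the same decomposition into sampling error plus corruption bias, the same case split on $\pi\in\Pi_\calD$ versus $\pi\notin\Pi_\calD$ with Hoeffding and the Line~7 fail-count bound giving $(1+\tau)\epsilon_{est}$, and the same simulation-lemma argument giving $(HC_{est}^p+C_{est}^r)/F$. Your ``main obstacle'' paragraph is in fact more careful than the paper, which simply asserts the needed independence of $\{z_i^\pi\}_{i\in[F]}$ as Claim~1 (``due to the property of MDP'') without the reveal-in-consumption-order justification you sketch.
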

\begin{proof}
By definition, $\hat{r}(\pi) = \frac{1}{F}\sum_{i=1}^F r(z_i^\pi)$ and  $\{r(z_i^\pi) \}_{i=1}^F$ is a sequence of independent random variables. We denote its expected value $\E[r(z_i^\pi)]$ as $\{V^\pi_i \}_{i=1}^F$. Here $V_i$ is not a real existing value function but an ``average value function'' whose rewards and transition functions are the average of rewards and transition functions generated by the MDPs under different times (so some are corrupted). Now we can use Hoeffding's inequality to bound $\big| \hat{r}(\pi) - \frac{1}{F}\sum_{i=1}^F V^\pi_i \big|$.
\\\\
For those $\pi \in \Pi_\calD$,
\begin{align*}
    \prob\left[ \big| \hat{r}(\pi) - \frac{1}{F}\sum_{i=1}^F V^\pi_i \big| \leq \epsilon_{est}\right]
    \geq 1 - 2\exp(-2 F\epsilon_{est}^2/H^2) 
    \geq 1 - \delta_{est}/2|\Pi| 
\end{align*}
For those $\pi \notin \Pi_\calD$, if none of then are failed, we again have 
\begin{align*}
    \prob\left[ \big| \hat{r}(\pi) - \frac{1}{F}\sum_{i=1}^F V^\pi_i \big| \leq \epsilon_{est}\right]
    \geq 1 - \delta_{est}/2|\Pi|
\end{align*}
Then because at each $(s,a)$, the policy \textit{fails} at most $\epsilon_{est} \tau F/H|\calS||\calA|$, there will be at most $\tau \epsilon_{est} F/H$ trajectories with \textit{Fails}. Each failed trajectory will cause at most $H$ rewards, therefore,
\begin{align*}
    \prob\left[ \big| \hat{r}(\pi) - \frac{1}{F}\sum_{i=1}^F V^\pi_i \big| \leq (1+\tau)\epsilon_{est}\right]
    \geq 1 - \delta_{est}/2|\Pi|
\end{align*}
Now we can decompose our target result into,
\begin{align*}
    \big| \hat{r}(\pi) - V^{\pi} \big|
    \leq  \big| \hat{r}(\pi) -\frac{1}{F}\sum_{i=1}^F V^\pi_i \big| + \big|\frac{1}{F}\sum_{i=1}^F V^\pi_i - V^\pi \big|
\end{align*}
The first term can be upper bounded by the previous results. The second term can be upper bounded by lemma~\ref{lem (EstAll sec): transition corruption effect value function}. 
\\
Finally, by taking a union bound over all policies in $\Pi$, we get our target result.
\end{proof}

\subsection{Detailed Analysis}
\subsection{Notations}
For convenience, we write $F$ instead of $F_{est}$ in this section.

\subsubsection{Main Lemmas}
\paragraph{Claim 1}  For any fixed $\pi$, each of the trajectories in $\{z_i^\pi\}_{i \in [F]}$ is independent to each other due to the property of MDP.

\begin{definition}
Define $f^\pi(s,a)$ as the random variable which is the total number of times a trajectory induced by $\pi$ visits $(s,a)$ with respect to the underlying MDP $\calM$ and then define its expectation as 
\begin{align*}
    \E[f^{\pi}(s,a)] = \mu^\pi(s,a)
\end{align*}
For any policy set $\Pi$, we define the following $\mu^{\Pi}_{\max}$ 
\begin{align*}
    \mu_{\max}^{\Pi}(s,a) = \max_{\pi \in \Pi} \mu^\pi(s,a).
\end{align*}
This can be leveraged to compute a lower bound on the expected number of times of visiting $(s,a)$ after rolling out each $\pi$ in $\Pi$ once.
\end{definition}

\begin{lemma}
\label{lem (EstAll sec): doubling expected visiting time}
    Under the assumption of $C_{est}^p \leq \frac{\epsilon_{est}F}{2|\calS||\calA|H^2}$ . For any fixed policy $\pi$, let $\Pi_\calD$ be an exploration set of policies before simulating $\pi$. Then when $ \mu^\pi(s,a) \in\left[\frac{\epsilon_{est}}{|\calS||\calA|H},2\mu_{\max}^{\Pi_\calD}(s,a) \right]$, $\mu_{\max}^{\Pi_\calD}(s,a) \geq \frac{\epsilon_{est}}{|\calS||\calA|H}$,$F \geq \frac{8|\calS|^2H^4 |\calA|^2\ln(2|\Pi|/\delta_{est})}{\epsilon_{est}^2}$ and $\tau \geq 6$, we have with probability at least $ 1- \frac{\delta_{est}}{|\Pi|}$
    \begin{align*}
        \sum_{i=1}^F \underbrace{|\{(s,a)\text{ or } Fail(s,a,i) \text{ included in } z_i^\pi\}| }_\text{\text{total number of times $z_i^\pi$ visited $(s,a)$}}
        < |\calD_{s,a}| + \frac{\tau\epsilon_{est}}{|\calS||\calA|H}F 
    \end{align*}
\end{lemma}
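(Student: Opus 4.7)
My plan is to upper bound the total visit count $N_\pi := \sum_{i=1}^F \big|\{(s,a)\text{ or }\mathrm{Fail}(s,a,i)\text{ in }z_i^\pi\}\big|$ by a concentration argument on the simulated trajectories, separately lower bound $|\calD_{s,a}|$ by concentration on the rollouts that produced it, and combine the two using the hypotheses $\mu^\pi(s,a)\le 2\mu_{\max}^{\Pi_\calD}(s,a)$ and $\tau\ge 6$.

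For the upper bound, I couple the without-replacement simulation (which can Fail) with an idealized with-replacement variant whose per-trajectory visit counts $\tilde N_i$ are i.i.d., bounded in $[0,H]$, and whose mean is the visitation rate of $\pi$ in the empirical-average MDP $\bar\calM$ defined by the samples in $\calD$. Non-replacement only decreases visit counts, so $N_\pi\le\sum_i \tilde N_i$. The transitions of $\bar\calM$ differ from those of $\calM^*$ by a total $\ell_1$ amount governed by $C_{est}^p$; applying the standard $H$-Lipschitz simulation-lemma for visitation rates and invoking the corruption hypothesis $C_{est}^p\le\frac{\epsilon_{est}F}{2|\calS||\calA|H^2}$ shows that the mean of $\tilde N_i$ is at most $\mu^\pi(s,a)+O(\epsilon_{est}/(|\calS||\calA|H))$. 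A Bernstein bound with variance proxy $H\mu^\pi(s,a)$ then yields, with probability $1-\delta_{est}/(2|\Pi|)$,
\begin{align*}
N_\pi \;\le\; F\mu^\pi(s,a)+O\!\left(\sqrt{FH\mu^\pi(s,a)\ln(|\Pi|/\delta_{est})}\right)+O\!\left(\tfrac{\epsilon_{est}F}{|\calS||\calA|H}\right).
\end{align*}

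For the lower bound, let $\pi_\star\in\Pi_\calD$ attain $\mu_{\max}^{\Pi_\calD}(s,a)$; because $\pi_\star$ was added to $\Pi_\calD$ at an earlier \textsc{rollout} call, it contributed $F\tau$ independent trajectories to $\calD$, each with expected $\mu_{\max}^{\Pi_\calD}(s,a)$ visits to $(s,a)$ (in $\calM^*$, up to a correction of the same order as above). A symmetric Bernstein bound yields, with probability $1-\delta_{est}/(2|\Pi|)$,
\begin{align*}
|\calD_{s,a}| \;\ge\; \tfrac{2}{3}F\tau\mu_{\max}^{\Pi_\calD}(s,a) - O\!\left(\sqrt{F\tau H\mu_{\max}^{\Pi_\calD}(s,a)\ln(|\Pi|/\delta_{est})}\right) - O\!\left(\tfrac{\epsilon_{est}F}{|\calS||\calA|H}\right).
\end{align*}

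Combining, the leading linear term of $N_\pi$ is at most $2F\mu_{\max}^{\Pi_\calD}(s,a)\le \tfrac{F\tau}{3}\mu_{\max}^{\Pi_\calD}(s,a)\le \tfrac12|\calD_{s,a}|$ (using $\tau\ge 6$ and the lower bound on $|\calD_{s,a}|$). All remaining square-root concentration residuals and corruption terms are absorbed into the slack $\frac{\tau\epsilon_{est}}{|\calS||\calA|H}F$ by using $\mu_{\max}^{\Pi_\calD}(s,a)\ge\frac{\epsilon_{est}}{|\calS||\calA|H}$ together with $F\ge\frac{8|\calS|^2H^4|\calA|^2\ln(2|\Pi|/\delta_{est})}{\epsilon_{est}^2}$, which makes the Bernstein square-roots subleading. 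The principal technical obstacle is tracking the corruption through the transition perturbation without inflating $C_{est}^p$ by a factor of $F$: the key observation is that the total $\ell_1$ perturbation of $\bar\calM$ from $\calM^*$, summed across samples, is $C_{est}^p$ itself rather than $F\cdot C_{est}^p$, which is what makes the corruption hypothesis sufficient.
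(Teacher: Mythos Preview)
Your plan follows the same two–sided structure as the paper's proof: (i) a high-probability upper bound on the simulated visit count $N_\pi$, (ii) a high-probability lower bound on $|\calD_{s,a}|$ using the $F\tau$ rollouts of the maximizer $\pi_\star\in\Pi_\calD$, and (iii) combining the two through $\mu^\pi(s,a)\le 2\mu_{\max}^{\Pi_\calD}(s,a)$ and $\tau\ge 6$.

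The one substantive difference is your coupling to a with-replacement ``empirical-average MDP'' $\bar\calM$ in order to make the $\tilde N_i$ i.i.d.  The paper avoids this detour entirely: it observes (its Claim~1) that the without-replacement simulated trajectories $z_i^\pi$ are already \emph{independent} (though not identically distributed), so Hoeffding applies directly to $\sum_i Y_i$ with $Y_i\in[0,H]$ and per-trajectory mean $\mu_{i,\mathrm{sim}}^\pi(s,a)$ equal to the visitation rate under the specific (possibly corrupted) MDP from which that trajectory's samples were drawn.  Lemma~\ref{lem (EstAll sec): transition corruption effect visiting prob} then gives $\bigl|\tfrac{1}{F}\sum_i \mu_{i,\mathrm{sim}}^\pi-\mu^\pi\bigr|\le HC_{est}^p/F$ with no empirical-MDP intermediary.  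Your route is not wrong, but defining $\bar\calM$ from the \emph{realized} samples means its transitions differ from $\calM^*$ by sampling noise as well as corruption, a complication your sketch glosses over and which the paper's direct framing never incurs.

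A second, smaller difference: the paper extracts \emph{multiplicative} deviations from Hoeffding, namely $\sum_i Y_i<\tfrac{3}{2}\sum_i\mu_{i,\mathrm{sim}}^\pi$ and $\sum_j X_j>\tfrac{1}{2}\sum_j\mu_{j,\mathrm{rollout}}^{\pi''}$, so that the leading terms cancel exactly via $\tfrac{3}{2}F\mu^\pi-\tfrac{1}{2}F\tau\,\mu_{\max}^{\Pi_\calD}\le 0$ once $\tau\ge 6$.  Your Bernstein additive bounds also close, but leave square-root residuals that must then be absorbed into the $\tfrac{\tau\epsilon_{est}}{|\calS||\calA|H}F$ slack, which is the extra bookkeeping you allude to.  Both routes land in the same place; the paper's is tidier.
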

\begin{proof}
    First, we are going to get the high probability lower bound on $|\calD_{s,a}|$. Denote $\sum_{h=1}^H  \one \{\pi'' \text{ visits }(s,a) \text{ at layer h during the rollout } j \}$ as $X_j$, where $\pi'' = \argmax_{\pi \in \Pi^\calD} \mu^\pi(s,a)$. We have 
    \begin{align*}
        |\calD_{s,a}|
        = \sum_{j=1}^{F\tau} \sum_{\pi' \in \Pi_\calD} \sum_{h=1}^H 
            \one \{\pi' \text{ visit }(s,a) \text{ at layer h during the rollout } j \}
        \geq \sum_{j=1}^{F\tau} X_j.
    \end{align*}
    Note that $\{ X_{j} \}$ is a sequence of independent random variable with each $X_{j} \in [0,H]$. We denote $\E[X_{j}]$ as $\mu_{j, rollout}^{\pi''}(s,a)$. From the corruption assumption $C_{est}^p \leq \frac{\epsilon_{est}F}{2|\calS||\calA|H^2}$ and by corollary~\ref{lem (EstAll sec): transition corruption effect visiting prob}, we have
    \begin{align}
    \label{eq: 1}
        |\frac{1}{F\tau}\sum_{j=1}^{F\tau} \mu_{j, rollout}^{\pi''}(s,a) - \mu_{\max}^{\Pi_\calD}(s,a)| 
        \leq \frac{HC_{est}^p}{F\tau} \leq \frac{\epsilon_{est}}{2|\calS||\calA|H}
    \end{align}
    which, combined with $\mu_{\max}^{\Pi_\calD}(s,a) \geq \frac{\epsilon_{est}}{|\calS||\calA|H}$, also leads to 
    \begin{align*}
        \frac{1}{F\tau}\sum_{j=1}^{F\tau} \mu_{j, rollout}^{\pi''}(s,a) \geq \frac{\epsilon_{est}}{2|\calS||\calA|H}
    \end{align*}
    Then by using the Hoeffding's inequality, we get 
    \begin{align*}
        \prob\left[\sum_{j}^{F\tau} X_j \leq \frac{1}{2} \sum_{j=1}^{F\tau} \mu_{j, rollout}^{\pi''}(s,a) \right]
        \leq \exp\left(- \frac{2 F^2\tau^2}{F\tau H^2}(\frac{\epsilon_{est}}{4|\calS||\calA|H})^2 \right) 
        \leq \frac{\delta_{est}}{2|\Pi|}
    \end{align*}
    Therefore, we get that with probability at least $1 - \frac{\delta_{est}}{2|\Pi|}$, $|D_{s,a}| > \frac{1}{2} \sum_{j=1}^{F\tau} \mu_j^\pi(s,a))$
    \\\\
    Second, we are going to get the high probability upper bound on $\sum_{i=1}^F |\{(s,a)\text{ or } Fail(s,a,i) \text{ included in } z_i^\pi\}|$.
    Denote $|\{(s,a)\text{ or } Fail(s,a,i) \text{ included in } z_i^\pi\}|$ as $Y_i \in [0,H]$ and its expectation $\E[Y_i] = \mu_{i, sim}^\pi(s,a)$. By Claim 1, we know that each trajectory in $\{z_i^\pi\}_{i \in [F]}$ is independent to each other. Again from the corruption assumption $C_{est}^p \leq \frac{\epsilon_{est}F}{2|\calS||\calA|H^2}$ and by corollary~\ref{lem (EstAll sec): transition corruption effect visiting prob}, we have
    \begin{align}
    \label{eq: 2}
        |\frac{1}{F}\sum_{i=1}^{F} \mu_{i, sim}^\pi(s,a) - \mu^\pi(s,a)| 
        \leq \frac{HC_{est}^p}{F} \leq \frac{\epsilon_{est}}{2|\calS||\calA|H}
    \end{align} 
    which, combined with $\mu^\pi(s,a) \geq \frac{\epsilon_{est}}{|\calS||\calA|H}$, also leads to 
    \begin{align*}
        \frac{1}{F}\sum_{j=1}^{F} \mu_{i, sim}^\pi(s,a) \geq \frac{\epsilon_{est}}{2|\calS||\calA|H}
    \end{align*}
    So by using the hoeffding inequality again, we get that with probability at least $1 - \frac{\delta_{est}}{2|\Pi|}$, 
    \begin{align*}
        \sum_{i=1}^F |\{(s,a)\text{ or } Fail(s,a,i) \text{ included in } z_i^\pi\}|
        < \frac{3}{2} \sum_{i=1}^F \mu_{i, sim}^\pi(s,a)
    \end{align*}
    \\
    Finally, combine the high probability upper bound and lower bound, we have that with probability at least $1 - \frac{\delta_{est}}{|\Pi|}$
    \begin{align*}
         & \sum_{i=1}^F |\{(s,a)\text{ or } Fail(s,a,i) \text{ included in } z_i^\pi\}| - |\calD_{s,a}| \\
         & < \frac{3}{2} \sum_{i=1}^F \mu_{i, sim}^\pi(s,a) - \frac{1}{2} \sum_{j=1}^{F\tau} \mu_{j,rollout}^{\pi''}(s,a)\\
         & \leq \frac{3}{2} F\mu^\pi(s,a) - \frac{1}{2} F\tau \mu_{max}^{\Pi_\calD}(s,a) + \frac{\epsilon_{est}}{2|\calS||\calA|H}(F + F\tau) \\
         & \leq \frac{\epsilon_{est}}{2|\calS||\calA|H}(\frac{3}{2}F + \frac{1}{2}F\tau) 
         < \frac{\epsilon_{est}}{|\calS||\calA|H}F\tau
    \end{align*}
    where the second inequality comes from eq.~\ref{eq: 1},~\ref{eq: 2} and the last inequality comes form the the assumption $\mu^\pi(s,a) < 2\mu_{\max}^{\Pi_\calD}(s,a)$, $\tau \geq 6$.
\end{proof}

\begin{lemma}
\label{lem (EstAll sec): raise the expected visited number to delta_sim}
    Under the assumption of $C_{est}^p \leq \frac{\epsilon_{est}F}{2|\calS||\calA|H^2}$ . For any fixed policy $\pi$, let $\Pi_\calD$ be an exploration set of policies before simulating $\pi$. Then when $\mu^\pi(s,a) <\frac{\epsilon_{est}}{|\calS||\calA|H}$,$F \geq \frac{8|\calS|^2H^4 |\calA|^2\ln(2|\Pi|/\delta_{est})}{\epsilon_{est}^2}$ and $\tau \geq 6$, we have with probability at least $ 1- \frac{\delta_{est}}{|\Pi|}$
    \begin{align*}
        \sum_{i=1}^F |\{(s,a)\text{ or } Fail(s,a,i) \text{ included in } z_i^\pi\}| 
        < |\calD_{s,a}| + \frac{\epsilon_{est}}{|\calS||\calA|H}F \tau
    \end{align*}
\end{lemma}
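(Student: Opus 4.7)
My plan is to adapt the second (upper-bound) half of the proof of the previous lemma, since the structure is nearly identical. The key simplification in this regime is that $|\calD_{s,a}|$ only helps us (it is nonnegative), so I will not need the lower-bound-on-$|\calD_{s,a}|$ argument at all. It therefore suffices to show that with probability $\geq 1-\delta_{est}/|\Pi|$,
\begin{align*}
    \sum_{i=1}^F Y_i \;<\; \frac{\epsilon_{est}}{|\calS||\calA|H}\, F\tau,
\end{align*}
where $Y_i := |\{(s,a)\text{ or }Fail(s,a,i)\text{ in }z_i^\pi\}| \in [0,H]$ and $\E[Y_i] = \mu_{i,sim}^\pi(s,a)$. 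The trajectories $\{z_i^\pi\}$ are independent by Claim~1, so $\{Y_i\}$ is an independent sequence.

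First I would invoke the same averaging corollary used in the previous lemma (Corollary on the effect of transition corruption on visiting probability) together with the corruption assumption $C_{est}^p \leq \frac{\epsilon_{est}F}{2|\calS||\calA|H^2}$ to obtain
\begin{align*}
    \left|\frac{1}{F}\sum_{i=1}^F \mu_{i,sim}^\pi(s,a) - \mu^\pi(s,a)\right|
    \;\leq\; \frac{H\,C_{est}^p}{F} \;\leq\; \frac{\epsilon_{est}}{2|\calS||\calA|H}.
\end{align*}
Combined with the assumption $\mu^\pi(s,a) < \frac{\epsilon_{est}}{|\calS||\calA|H}$, this yields
\begin{align*}
    \frac{1}{F}\sum_{i=1}^F \mu_{i,sim}^\pi(s,a) \;<\; \frac{3\epsilon_{est}}{2|\calS||\calA|H}.
\end{align*}

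Second, I would apply Hoeffding's inequality to $\sum_{i=1}^F Y_i$ (each $Y_i\in[0,H]$), at deviation $\tfrac{\epsilon_{est}}{2|\calS||\calA|H}F$, to conclude that with probability at least $1-\delta_{est}/|\Pi|$,
\begin{align*}
    \sum_{i=1}^F Y_i \;<\; \sum_{i=1}^F \mu_{i,sim}^\pi(s,a) + \frac{\epsilon_{est}}{2|\calS||\calA|H}\,F
    \;<\; \frac{2\epsilon_{est}}{|\calS||\calA|H}\,F.
\end{align*}
The condition $F \geq \frac{8|\calS|^2 H^4 |\calA|^2 \ln(2|\Pi|/\delta_{est})}{\epsilon_{est}^2}$ is exactly what is needed to absorb the $\ln(2|\Pi|/\delta_{est})$ factor coming from Hoeffding at this deviation level, mirroring the analogous step in the previous lemma. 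Finally, since $\tau \geq 6 > 2$, we get
\begin{align*}
    \sum_{i=1}^F Y_i \;<\; \frac{2\epsilon_{est}}{|\calS||\calA|H}\,F \;\leq\; \frac{\epsilon_{est}}{|\calS||\calA|H}\,F\tau \;\leq\; |\calD_{s,a}| + \frac{\epsilon_{est}}{|\calS||\calA|H}\,F\tau,
\end{align*}
which is the claim.

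The main obstacle is merely bookkeeping: matching constants so that the Hoeffding deviation plus the corruption-induced bias plus $\mu^\pi(s,a)$ all fit inside the $\tau \cdot \epsilon_{est}/(|\calS||\calA|H)$ budget, and confirming that the threshold on $F$ assumed here is the same one used throughout the sub-algorithm so no additional hypothesis is needed. Unlike the previous lemma, no lower bound on $|\calD_{s,a}|$ and no two-sided control of $\mu_{j,rollout}^{\pi''}$ is required, so the proof is strictly shorter.
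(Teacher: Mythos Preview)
Your proposal is correct and follows essentially the same approach as the paper's proof: reduce to bounding $\sum_{i=1}^F Y_i$ alone (since $|\calD_{s,a}|\geq 0$), control the simulated mean via the corruption bound, apply additive Hoeffding, and combine to obtain $\sum_i Y_i < \frac{2\epsilon_{est}}{|\calS||\calA|H}F < \frac{\tau\epsilon_{est}}{|\calS||\calA|H}F$. The paper's write-up compresses the corruption and Hoeffding steps into a single display yielding $\frac{3}{2}F\mu^\pi(s,a) + \frac{\epsilon_{est}}{2|\calS||\calA|H}F$, but the substance and constants match yours.
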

\begin{proof}
    We just need to show that under this condition, $\sum_{i=1}^F |\{(s,a)\text{ or } Fail(s,a,i) \text{ included in } z_i^\pi\}| < \frac{\tau\epsilon_{est}}{|\calS||\calA|H}F $. To show this, we use the same method and notation used in the proof of Lemma~\ref{lem (EstAll sec): doubling expected visiting time} and get that with probability at least $1 - \frac{\delta_{est}}{2|\Pi|}$,
    \begin{align*}
        &\sum_{i=1}^F |\{(s,a)\text{ or } Fail(s,a,i) \text{ included in } z_i^\pi\}| \\
        & \leq \frac{3}{2} F\mu^\pi(s,a) + \frac{\epsilon_{est}}{2|\calS||\calA|H}F 
        < \frac{2\epsilon_{est}}{|\calS||\calA|H}F 
        < \frac{\tau\epsilon_{est}}{|\calS||\calA|H}F 
    \end{align*}
\end{proof}

\begin{lemma}
\label{lem (EstAll sec): condition when add new pi}
Let $\Pi_\calD$ be the set of policies maintained before executing line~\ref{line (EstAll sec): add pi} and let $\hat{\Pi}_\calD$ be the set of policies maintained after executing. Let $(s,a)$ be the state action pair where the \textit{Fail} occurs. Then we have, with probability at least $1- \delta_{est}$,
\begin{align*}
    \mu_{\max}^{\hat{\Pi}_\calD}
    \geq \max\{2\mu_{\max}^{\Pi_\calD}(s,a),\frac{\epsilon_{est}}{|\calS||\calA|H} \}
\end{align*}
\end{lemma}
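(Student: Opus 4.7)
The plan is to prove the lemma as a joint contrapositive of Lemma~\ref{lem (EstAll sec): doubling expected visiting time} and Lemma~\ref{lem (EstAll sec): raise the expected visited number to delta_sim}. The bridge between these two concentration statements and the algorithm's behavior is the identity
\[
\sum_{i=1}^{F} \bigl|\{(s,a)\text{ or } \mathrm{Fail}(s,a,i) \text{ in } z_i^\pi\}\bigr| \;=\; (\text{actual uses of }\calD_{s,a}) + (\text{Fails at }(s,a)),
\]
where the first summand is capped at $|\calD_{s,a}|$. Since $\pi$ is added precisely because the Fail count at some $(s,a)$ has reached $\tfrac{\tau\epsilon_{est}}{|\calS||\calA|H}F$, the left-hand side is at least $|\calD_{s,a}| + \tfrac{\tau\epsilon_{est}}{|\calS||\calA|H}F$. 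Both Lemma~\ref{lem (EstAll sec): doubling expected visiting time} and Lemma~\ref{lem (EstAll sec): raise the expected visited number to delta_sim} give the reverse strict inequality under their respective hypotheses, so whichever applies must contradict the trigger.

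Concretely, I would first establish the floor $\tfrac{\epsilon_{est}}{|\calS||\calA|H}$: assuming $\mu^\pi(s,a) < \tfrac{\epsilon_{est}}{|\calS||\calA|H}$, invoke Lemma~\ref{lem (EstAll sec): raise the expected visited number to delta_sim} on the event of probability at least $1-\delta_{est}/|\Pi|$ and reach the contradiction; hence on that event $\mu^\pi(s,a) \geq \tfrac{\epsilon_{est}}{|\calS||\calA|H}$, which gives $\mu_{\max}^{\hat\Pi_\calD}(s,a) \geq \mu^\pi(s,a) \geq \tfrac{\epsilon_{est}}{|\calS||\calA|H}$. Then I would establish the doubling bound: when $\mu_{\max}^{\Pi_\calD}(s,a) \geq \tfrac{\epsilon_{est}}{|\calS||\calA|H}$, the floor just obtained puts $\mu^\pi(s,a)$ in the lower endpoint of Lemma~\ref{lem (EstAll sec): doubling expected visiting time}'s range; if also $\mu^\pi(s,a) \leq 2\mu_{\max}^{\Pi_\calD}(s,a)$, Lemma~\ref{lem (EstAll sec): doubling expected visiting time} applies and again contradicts the trigger, forcing $\mu^\pi(s,a) > 2\mu_{\max}^{\Pi_\calD}(s,a)$ and therefore $\mu_{\max}^{\hat\Pi_\calD}(s,a) \geq 2\mu_{\max}^{\Pi_\calD}(s,a)$. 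Combining both conclusions yields the $\max$ bound, and a union bound over the at most $|\Pi|$ applications of the two preceding lemmas controls the total failure probability by $\delta_{est}$.

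The main obstacle, which is more bookkeeping than conceptual, is the narrow regime $\mu_{\max}^{\Pi_\calD}(s,a) \in \bigl[\tfrac{\epsilon_{est}}{2|\calS||\calA|H}, \tfrac{\epsilon_{est}}{|\calS||\calA|H}\bigr)$: here Lemma~\ref{lem (EstAll sec): doubling expected visiting time}'s hypothesis on $\mu_{\max}^{\Pi_\calD}$ fails, yet the target $2\mu_{\max}^{\Pi_\calD}(s,a)$ exceeds the floor delivered by Step~1. I expect this is resolved either by reading Lemma~\ref{lem (EstAll sec): doubling expected visiting time} with the slightly weaker premise $\mu_{\max}^{\Pi_\calD}(s,a) \geq \tfrac{\epsilon_{est}}{2|\calS||\calA|H}$ (the proof only uses this strength to lower-bound $|\calD_{s,a}|$ via Hoeffding, so the same argument goes through up to a constant) or by absorbing the factor-of-two slack into the constants elsewhere in the analysis. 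Once that is noted, the above two-step contrapositive argument completes the proof cleanly.
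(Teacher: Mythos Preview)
Your approach is essentially the same as the paper's: both proceed by contraposition from Lemma~\ref{lem (EstAll sec): doubling expected visiting time} and Lemma~\ref{lem (EstAll sec): raise the expected visited number to delta_sim}, case-splitting on the size of $\mu_{\max}^{\Pi_\calD}(s,a)$ relative to the floor $\epsilon_{est}/(|\calS||\calA|H)$, and finishing with a union bound over $\Pi$. Your articulation of the bridge identity between the trigger condition and the counting sum in the two lemmas is more explicit than the paper's, but the logic is identical.

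The edge case you flag in the regime $\mu_{\max}^{\Pi_\calD}(s,a)\in[\epsilon_{est}/(2|\calS||\calA|H),\,\epsilon_{est}/(|\calS||\calA|H))$ is real, and in fact the paper's own proof does not address it either: when $\mu_{\max}^{\Pi_\calD}$ is below the floor, the paper only concludes $\mu_{\max}^{\hat\Pi_\calD}\geq\epsilon_{est}/(|\calS||\calA|H)$, not the stronger $\geq 2\mu_{\max}^{\Pi_\calD}$ that the $\max$ in the lemma statement would require in that sub-regime. This is harmless for the downstream use in Theorem~\ref{them (EstAll sec)：sample complexity}, which only needs the case-by-case conclusion (below the floor $\Rightarrow$ reach the floor; at or above the floor $\Rightarrow$ double) to bound the number of additions to $\Pi_\calD$ by $\log_2(H^2|\calS||\calA|/\epsilon_{est})$ per state-action pair. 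So your second proposed resolution, absorbing the slack into the constants (or equivalently, reading the lemma as the two-case statement actually proved), is exactly what the paper implicitly does.
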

\begin{proof}
     If $\mu_{max}^{\Pi_\calD} < \frac{\epsilon_{est}}{|\calS||\calA|H}$, by Lemma~\ref{lem (EstAll sec): raise the expected visited number to delta_sim}, we know that with probability at least $1- \frac{\delta_{est}}{|\Pi|}$, we always have $\mu_{max}^{\hat{\Pi}_\calD} \geq \frac{\epsilon_{est}}{|\calS||\calA|H}$. Otherwise, if we already have $\mu_{max}^{\Pi_\calD} \geq \frac{\epsilon_{est}}{|\calS||\calA|H}$, then by Lemma~\ref{lem (EstAll sec): doubling expected visiting time}, we know that with probability at $1- \frac{\delta_{est}}{|\Pi|}$, $\mu_{max}^{\hat{\Pi}_\calD} \geq 2\mu_{max}^{\Pi_\calD} $. Finally, we take the union bound over all policies in $\Pi$ to get the target result.
\end{proof}

\subsubsection{Auxiliary Lemma}

\begin{definition}
Define $q^{\pi}_{P}(s,h)$ as the probability that policy $\pi$ will visit $s$ at step $h$ given the underlying transition probability $P$. Also define $V^{\pi}_{M}(s_1)$ as the value function that policy $\pi$ will induce given the underlaying MDP $M$.
\end{definition}

The change of the visiting probability and the value function for any fixed $\pi$ can be upper bounded in terms of the change of transition functions and expected rewards. Here we consider the most general case that the transition function and the expected rewards is non-stationary between each layers. We want to remark that, although our underlying MDP is stationary by assumption, our corruptions is allowed to be non-stationary. Also our algorithm will simulate a trajectory by the sample collected from different times. Therefore, we prove the following lemma for the non-stationary case.

\begin{lemma} [Corruption Effects on Visiting Probability ]
\label{lem (EstAll sec): transition corruption effect visiting prob}
    For any step $h'$,
    \begin{align*}
        &\sum_{s \in \calS} |q^{\pi}_{P_1}(s,h') - q^{\pi}_{P_2}(s,h')| \\
        &\leq \min \{1, \sum_{h=2}^{h'-1}\sup_{s \in \calS,a \in \calA} \|P_1(\cdot|s,a,h) - P_2(\cdot|s,a,h)\|_1 + \sup_{a \in \calA} \|P_1(\cdot|s_0,a,1) - P_2(\cdot|s_0,a,1)\|_1\}
    \end{align*}
\end{lemma}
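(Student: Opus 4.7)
The plan is to prove the bound by induction on $h'$, which is the standard ``simulation lemma'' style argument that tracks how per-step transition discrepancies accumulate into visitation discrepancies.

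For the base cases: when $h' = 1$, both $q^\pi_{P_1}(\cdot,1)$ and $q^\pi_{P_2}(\cdot,1)$ place unit mass on the fixed start state $s_0$, so the sum is zero and trivially below both quantities in the $\min$. When $h' = 2$, we have $q^\pi_{P}(s,2) = P(s \mid s_0, \pi_1(s_0), 1)$, so the sum reduces to $\|P_1(\cdot\mid s_0, \pi_1(s_0), 1) - P_2(\cdot\mid s_0, \pi_1(s_0), 1)\|_1$, which is bounded by $\sup_{a}\|P_1(\cdot\mid s_0, a, 1) - P_2(\cdot\mid s_0, a, 1)\|_1$ and matches the statement since the sum $\sum_{h=2}^{h'-1}$ is empty.

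For the inductive step, I would start from the one-step recursion
\[
q^\pi_P(s', h'+1) = \sum_{s \in \calS} q^\pi_P(s, h')\, P(s'\mid s, \pi_{h'}(s), h'),
\]
add and subtract $\sum_s q^\pi_{P_1}(s, h')\, P_2(s'\mid s, \pi_{h'}(s), h')$ inside the absolute value, apply the triangle inequality, and sum over $s'$. This yields two pieces: a current-step piece $\sum_s q^\pi_{P_1}(s,h') \|P_1(\cdot\mid s, \pi_{h'}(s), h') - P_2(\cdot\mid s, \pi_{h'}(s), h')\|_1$, which is at most $\sup_{s,a}\|P_1(\cdot\mid s,a,h') - P_2(\cdot\mid s,a,h')\|_1$ because $q^\pi_{P_1}(\cdot, h')$ is a probability distribution; and a past-error piece $\sum_s |q^\pi_{P_1}(s,h') - q^\pi_{P_2}(s,h')|$ obtained using $\sum_{s'} P_2(s'\mid s,a,h') = 1$. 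The inductive hypothesis controls the latter, and combining the two produces exactly the new term $\sup\|P_1 - P_2\|_1$ at step $h'$ appended to the telescoped sum from $h=2$ to $h'$, which is what the statement claims for $h'+1$.

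Finally, the $\min\{1, \cdot\}$ clause needs separate justification: since $q^\pi_{P_1}(\cdot, h')$ and $q^\pi_{P_2}(\cdot, h')$ are both probability distributions summing to one, the $\ell_1$ difference between them is trivially bounded (this is essentially a $2\cdot \mathrm{TV}$ type bound, which is then just used as a loose worst-case fallback to $1$ when the per-step corruption sum is already large). I do not anticipate a major obstacle here; the only care required is index bookkeeping — making sure the $h=1$ contribution enters through the isolated $\sup_a\|P_1(\cdot\mid s_0,a,1) - P_2(\cdot\mid s_0,a,1)\|_1$ term (because the initial state is deterministic and actions in layer $1$ are policy-determined) rather than through the $\sum_{h=2}^{h'-1}$ block.
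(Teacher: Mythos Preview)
Your proposal is correct and matches the paper's proof essentially line for line: both argue by induction on $h'$, handle the base case $h'=2$ via the deterministic start state, and in the inductive step split the difference by inserting a mixed term and bounding the two resulting pieces separately (your bridge term $q^\pi_{P_1}\cdot P_2$ is the mirror image of the paper's $q^\pi_{P_2}\cdot P_1$, which is immaterial). One small caveat on the $\min\{1,\cdot\}$ clause: the trivial $\ell_1$ bound between two probability vectors is $2$, not $1$; the paper's proof does not address this clause at all, and your ``$2\cdot\mathrm{TV}$'' remark does not actually get you to $1$ either, so neither account fully justifies that piece of the statement as written.
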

\begin{proof}
We prove this by induction. First, we can easily get the base case that
\begin{align*}
    \sum_{s \in \calS}|q^{\pi}_{P_1}(s,2) - q^{\pi}_{P_2}(s,2)| \leq \sup_{a \in \calA} \|P_1(\cdot|s_0,a) - P_2(\cdot|s_0,a)\|_1\}.
\end{align*} 
Then by assuming that, for any step $h'\geq 3$,
\begin{align*}
    &\sum_{s \in \calS} |q^{\pi}_{P_1}(s,h') - q^{\pi}_{P_2}(s,h')| \\
    &\leq \sum_{h=2}^{h'-1}\sup_{s \in \calS,a \in \calA} \|P_1(\cdot|s,a,h) - P_2(\cdot|s,a,h)\|_1 + \sup_{a \in \calA} \|P_1(\cdot|s_0,a,1) - P_2(\cdot|s_0,a,1)\|_1,
\end{align*}
we have that, for any step $h'+1$,
\begin{align*}
    &\sum_{s \in \calS}|q^{\pi}_{P_1}(s,h'+1) - q^{\pi}_{P_2}(s,h'+1)|\\
    & \leq \sum_{s \in \calS}|\sum_{s' \in \calS}\left( q^{\pi}_{P_1}(s',h') - q^{\pi}_{P_2}(s',h')\right) P_1(s|s',\pi_{h'}(s'),h')| \\
         & \quad + \sum_{s \in \calS} |\sum_{s' \in \calS}q^{\pi}_{P_2}(s',h')\left( P_1(s|s',\pi_{h'}(s',h',h') -  P_2(s|s',\pi_{h'}(s'),h')\right)| \\
    &\leq \sum_{s' \in \calS}\left|q^{\pi}_{P_1}(s',h') - q^{\pi}_{P_2}(s',h')\right| \sum_{s \in \calS}P_1(s|s',\pi_{h'}(s'))
         +  \sum_{s' \in \calS}q^{\pi}_{P_2}(s',h')\sum_{s \in \calS}\left| P_1(s|s',\pi(s',h') -  P_2(s|s',\pi_{h'}(s'))\right| \\
    &\leq \sum_{s' \in \calS}\left|q^{\pi}_{P_1}(s',h') - q^{\pi}_{P_2}(s'.h')\right|
        + \sup_{s' \in \calS} \sum_{s \in \calS}\left| P_1(s|s',\pi_{h'}(s'),h') -  P_2(s|s',\pi_{h'}(s'),h')\right|\\
    &\leq  \sum_{h=2}^{h'}\sup_{s \in \calS,a \in \calA} \|P_1(\cdot|s,a) - P_2(\cdot|s,a)\|_1 + \sup_{a \in \calA} \|P_1(\cdot|s_0,a,h') - P_2(\cdot|s_0,a,h')\|_1
\end{align*}
\end{proof}

\begin{lemma}[Corruption effects on value function ]
\label{lem (EstAll sec): transition corruption effect value function}
\begin{align*}
    |V^{M_1,\pi} -V^{M_2,\pi}| 
    & \leq H\sum_{h=2}^{H} \sup_{s' \in \calS} \|  P_1(\cdot|s',\pi(s'),h) -  P_2(\cdot|s',\pi(s'),h)\|_1  
        +  \sum_{h=2}^H\sup_{s \in \calS} |\mu_1(s,\pi(s),) - \mu_2(s,\pi(s),h) | \\
        &\quad + \|  P_1(\cdot|s_0,\pi(s_0),1) -  P_2(\cdot|s_0,\pi(s_0),1)\|_1  
        + |\mu_1(s_0,\pi(s_0),1) - \mu_2(s_0,\pi(s_0),1) | 
\end{align*}
\end{lemma}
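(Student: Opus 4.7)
The plan is to express the value function as a sum over steps of visitation-weighted expected rewards, take the difference between the two MDPs, and use an add-and-subtract trick to cleanly separate the reward-corruption contribution from the transition-corruption contribution, then apply the previously established visitation-probability lemma to handle the latter.

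Concretely, I would first write
\begin{align*}
V^{M,\pi}(s_0) = \mu(s_0,\pi(s_0),1) + \sum_{h=2}^{H} \sum_{s\in\calS} q^{\pi}_{P}(s,h)\,\mu(s,\pi_h(s),h),
\end{align*}
where $\mu$ denotes the expected immediate reward under the MDP and $q^{\pi}_{P}(s,h)$ is the probability that $\pi$ visits $s$ at step $h$ under transition kernel $P$. Subtracting the two expressions and adding-and-subtracting $\sum_{h=2}^{H}\sum_{s} q^{\pi}_{P_1}(s,h)\,\mu_2(s,\pi_h(s),h)$ decomposes $V^{M_1,\pi}-V^{M_2,\pi}$ into a reward term, a transition term, and the isolated $s_0$ contribution $\mu_1(s_0,\pi(s_0),1)-\mu_2(s_0,\pi(s_0),1)$.

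For the reward term I would use $\sum_{s} q^{\pi}_{P_1}(s,h)=1$ to pull out the supremum, giving $\sum_{h=2}^{H}\sup_{s}|\mu_1(s,\pi(s),h)-\mu_2(s,\pi(s),h)|$. For the transition term I would apply H\"older's inequality to obtain $\sup_{s}|\mu_2(s,\pi_h(s),h)|\cdot\sum_{s}|q^{\pi}_{P_1}(s,h)-q^{\pi}_{P_2}(s,h)|$; since $r_h\geq 0$ and $\sum_h r_h\leq H$ almost surely, each $\mu_2(s,a,h)$ is bounded by $H$. I then invoke Lemma~\ref{lem (EstAll sec): transition corruption effect visiting prob} to bound $\sum_{s}|q^{\pi}_{P_1}(s,h)-q^{\pi}_{P_2}(s,h)|$ by a sum of transition-kernel discrepancies at steps $h'\leq h-1$, and swap the order of summation over $h$ and $h'$ to pick up the leading $H$ factor.

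The main obstacle will be correctly isolating the $s_0$ contributions from the rest: at $h=1$ there is no visitation uncertainty (only one reachable state, $s_0$), so the reward discrepancy at step 1 comes in with coefficient $1$, whereas transition errors at $h=1$ propagate forward to $H-1$ later steps and would naturally accrue an $H$ factor; matching the exact form in the statement (where the $h=1$ transition term appears with a coefficient subsumed inside the $H\sum_{h=2}^{H}$ block after relabeling, or listed separately with a weaker coefficient) is just a matter of grouping terms. Once the two decomposed sums are bounded and the order of summation is exchanged, the stated inequality follows directly.
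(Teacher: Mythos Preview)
Your approach is essentially the paper's: both write $V^{M,\pi}$ as a visitation-weighted sum of expected rewards, add and subtract to split off a reward-difference term and a visitation-difference term, bound the reward term by pulling out the sup over states, and control the transition term via the visitation-probability lemma. Your symmetric choice of inserting $q^{\pi}_{P_1}\mu_2$ (rather than the paper's $q^{\pi}_{P_2}\mu_1$) is immaterial.

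There is one bookkeeping slip in how you extract the leading $H$. You bound each $\mu_2(s,a,h)\leq H$ and then swap the order of summation over $h$ and $h'$. Doing this literally gives
\[
H\sum_{h=2}^{H}\sum_{h'=1}^{h-1}\sup_{s}\|P_1(\cdot|s,\pi(s),h')-P_2(\cdot|s,\pi(s),h')\|_1
= H\sum_{h'=1}^{H-1}(H-h')\sup_{s}\|P_1-P_2\|_1,
\]
which is an $H$ too large. The paper instead keeps $\sup_s\mu_1(s,\pi(s),h)$ inside the $h$-sum, bounds each $\sum_s|q^{\pi}_{P_1}(s,h)-q^{\pi}_{P_2}(s,h)|$ uniformly by the full transition-discrepancy sum, and then uses the trajectory-level reward assumption to bound $\sum_{h}\sup_s\mu_1(s,\pi(s),h)\leq H$, yielding the single $H$ in the statement. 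Replacing your per-step bound $\mu_2\leq H$ by this aggregate bound fixes the argument without changing its structure.
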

\begin{proof}
For convenience, when I write $\sum_{h=1}^H\sum_{s \in \calS}$ in the following, I actually mean $\sum_{h=2}^H\sum_{s \in \calS} + \sum_{s = s_0}$.
\begin{align*}
    & |V^{M_1,\pi}(s_0) -V^{M_2,\pi}(s_0)| \\
    & \leq |\sum_{h=1}^H\sum_{s \in \calS} \left( q^\pi_{P_1}(s,h) - q^\pi_{P_2}(s,h)\right)\mu_1(s,\pi(s),h)|
        + |\sum_{h=1}^H\sum_{s \in \calS} q^\pi_{P_2}(s,h)\left(\mu_1(s,\pi(s),h) - \mu_2(s,\pi(s),h)\right)\  | \\
    & \leq |\sum_{h=1}^H \sup_{s \in \calS}\mu_1(s,\pi_1(s)) \sum_{s \in \calS} \left( q^\pi_{P_1}(s,h) - q^\pi_{P_2}(s,h)\right)  |
        + \sum_{h=1}^H\sup_{s \in \calS} |\mu_1(s,\pi(s),h) - \mu_2(s,\pi(s),h) |\\
    & \leq \left( \sum_{h=1}^H \sup_{s \in \calS}\mu_1(s,\pi_1(s)) \right) \left(\sum_{h=1}^{H} \sup_{s \in \calS} \|  P_1(\cdot|s,\pi(s),h) -  P_2(\cdot|s,\pi(s),h)\|_1  \right)\\
          &\quad  + \sum_{h=1}^H\sup_{s \in \calS} |\mu_1(s,\pi(s),h) - \mu_2(s,\pi(s),h) |\\
    &\leq H\sum_{h=1}^{H} \sup_{s \in \calS_{h}} \|  P_1(\cdot|s,\pi(s),h) -  P_2(\cdot|s,\pi(s),h)\|_1  
        +   \sum_{h=1}^H\sup_{s \in \calS} |\mu_1(s,\pi(s),h) - \mu_2(s,\pi(s),h) |
\end{align*}
Here the third inequality comes from Lemma~\ref{lem (EstAll sec): transition corruption effect visiting prob} and the last inequality comes from the assumption on the reward function.
\end{proof} 
\section{Discussion on Reward-free Exploration Algorithm under Corruptions}
\label{sec: app-RFalgo}

In the Related Work section, we mentioned that algorithms proposed in \citet{kaufmann2020adaptive} and \citet{menard2020fast} can \textit{efficiently} achieve uniform $\epsilon$-close estimations for all the polices with near-optimal sample complexity in the no-corruption setting. Their main idea is to construct a computable estimator of Q-value estimation error for all the state-action pairs and greedily play the action that maximize such estimator at every step until all the state-action pairs have sufficiently small Q-value estimation errors. So a natural question to ask is, 
\begin{center}
    Can we replace the \textsc{EstAll} with this type of efficient algorithms ? 
\end{center}
To be specific, firstly, in the non-corrupted setting, we want to find an efficient algorithm that can guarantee uniform estimations on all the policies in any given policy set $\Pi$ by only implementing polices inside $\Pi$. Secondly, we also want this algorithm has corruption robustness at least not worse than the \textsc{EstAll}.

For the first target, we can easily define an estimator $W_t(\pi) = \sum_{h=1}^H \sum_{s \in \calS} \frac{\hat{p}_{t,h}^\pi(s)}{n_h^t(s,\pi(s))}$, where $n_h^t(s,\pi(s))$ is the empirical number of times state-action-step pair $(s,\pi(s),h)$ has been visited before time $t+1$ and $\hat{p}_{t,h}^\pi(s)$ is the empirical probability that the policy $\pi$ reach state $s$ at $h$ before time $t+1$. Suppose we have an efficient oracle that can calculate the following in the polynomial times,
\begin{align*}
    \argmax_{\pi \in \Pi} W_t(\pi)
\end{align*}
Then we can find an oracle-efficient algorithm by greedily sampling $\pi_{t+1} = \argmax_{\pi \in \Pi} W_t(\pi)$ until all the $W_t(\pi)$ are small enough. 

Unfortunately, in the presence of corruptions, we find it is hard to get a good robustness. Roughly speaking, suppose the rewards are fixed, then the estimation error $ \hat{V}^\pi$ for any policy $\pi$ is upper bounded by 
\begin{align*}
    |V^\pi - \hat{V}^\pi| \leq \min_{t \in \calI} C_{\calI}^p W_t(\pi_{t+1}) + \sqrt{W_t(\pi_{t+1})}
\end{align*}
where $\calI$ represents the whole time period this algorithm is running. Then from our perspective, when $|\calI| = o(1/\epsilon^2)$, we can only guarantee $\min_{t \in \calI} W_t(\pi_{t+1}) \leq \Tilde{\order}\left(poly(|\calS||\calA|H (\epsilon^2 + C_\calI^p\epsilon^2)\right)$, which gives
\begin{align*}
    |V^\pi - \hat{V}^\pi| \leq \Tilde{\order}\left(poly(|\calS||\calA|H) ((C_\calI^p)^2\epsilon^2 + \sqrt{C_\calI^p}\epsilon) \right)
\end{align*}
Note that \textsc{EstAll} gives $\Tilde{\order}\left(poly(|\calS||\calA|H ((C_\calI^p)^2\epsilon^2 +\epsilon)) \right)$-close estimations when $C_\calI^p \leq 1/\epsilon$. Therefore, plug-in this algorithm instead of \textsc{EstAll} in \textsc{BARBAR-RL} will give worse dependence in $T$.

\begin{center}
    \textbf{Whether we can find a better estimator in this type of reward-free sub-algorithms or whether we can find another proper meta-algorithm for this type of sub-algorithms remains open.}
\end{center}

\end{document}